\numberwithin{equation}{section}
\def\half{{\frac{1}{2}}}
\def\({\left(}
\def\){\right)}
\def\[{\left[}
\def\]{\right]}
\definecolor{lightyellow}{rgb}{1.0, 0.95, 0.7}
\definecolor{Blue}{rgb}{0, 0, 0.8}
\definecolor{blue}{rgb}{0,0,1}
\definecolor{darkgreen}{rgb}{0,0.40,0}
\definecolor{firebrick}{rgb}{0.698,0.133,0.133}
\definecolor{colorA}{rgb}{1,0,0}
\definecolor{colorB}{rgb}{0,0.3,1}
\definecolor{colorC}{rgb}{0.9,0.8,0.2}
\definecolor{colorD}{rgb}{0,0.65,0}
\definecolor{lesslightgray}{rgb}{0.5,0.5,0.5}
\definecolor{light-gray}{gray}{0.95}
\let\tilde\widetilde
\newcommand{\calX}{\mathcal{X}}
\newcommand{\argmax}{\mathop{\mathrm{argmax}}}
\newcommand{\Softmax}{\mathop{\rm{Softmax}}}
\newcommand{\e}[2]{e^{(#1)}_{#2}}
\newcommand{\MA}{\mathop{\rm MaxAff}}
\def\R{\mathbb{R}}
\let\cite\citep 
\def\th@remark{%
  \thm@headfont{\bfseries}%
  \normalfont %
  \thm@preskip\parskip %
  \thm@postskip\parskip %
}
\newtcolorbox{claim}{%
  colback=black!10, %
  colframe=white,%
  width=\dimexpr\linewidth+10pt\relax,%
  enlarge left by=-5pt,%
  enlarge right by=-5pt,%
  boxrule=0pt,      %
  arc=0pt,          %
  boxsep=5pt,       %
  left=0pt,
  right=0pt,
  top=0pt,
  bottom=0pt,
  breakable
}
\theoremstyle{definition}
\newtheorem{theorem}{Theorem}[section]
\newtheorem{definition}{Definition}[section]
\newtheorem{corollary}{Corollary}[theorem]
\newtheorem{proposition}{Proposition}[section]
\theoremstyle{remark}
\newtheorem{remark}{Remark}[section]
\newtheorem{assumption}{Assumption}[section]
\crefname{theorem}{Theorem}{Theorems}
\crefname{proposition}{Proposition}{Propositions}
\crefname{lemma}{Lemma}{Lemmas}
\crefname{definition}{Definition}{Definitions}
\crefname{corollary}{Corollary}{Corollaries}
\crefname{definition}{Definition}{Definitions}
\crefname{assumption}{Assumption}{Assumptions}
\crefname{remark}{Remark}{Remarks}
\crefname{problem}{Problem}{Problems}
\crefname{property}{Property}{property}
\numberwithin{equation}{section}
\numberwithin{theorem}{section}
\numberwithin{proposition}{section}
\numberwithin{definition}{section}
\numberwithin{lemma}{section}
\numberwithin{definition}{section}
\numberwithin{assumption}{section}
\numberwithin{remark}{section}
\newcommand*{\annot}[1]{\tag*{\footnotesize{\textcolor{black!50}{\big(#1\big)}}}}
\let\save@mathaccent\mathaccent
\newcommand*\if@single[3]{%
    \setbox0\hbox{${\mathaccent"0362{#1}}^H$}%
    \setbox2\hbox{${\mathaccent"0362{\kern0pt#1}}^H$}%
    \ifdim\ht0=\ht2 #3\else #2\fi
}
\newcommand*\rel@kern[1]{\kern#1\dimexpr\macc@kerna}
\newcommand*\widebar[1]{\@ifnextchar^{{\wide@bar{#1}{0}}}{\wide@bar{#1}{1}}}
\newcommand*\wide@bar[2]{\if@single{#1}{\wide@bar@{#1}{#2}{1}}{\wide@bar@{#1}{#2}{2}}}
\newcommand*\wide@bar@[3]{%
    \begingroup
    \def\mathaccent##1##2{%
        \let\mathaccent\save@mathaccent
        \if#32 \let\macc@nucleus\first@char \fi
        \setbox\z@\hbox{$\macc@style{\macc@nucleus}_{}$}%
        \setbox\tw@\hbox{$\macc@style{\macc@nucleus}{}_{}$}%
        \dimen@\wd\tw@
        \advance\dimen@-\wd\z@
        \divide\dimen@ 3
        \@tempdima\wd\tw@
        \advance\@tempdima-\scriptspace
        \divide\@tempdima 10
        \advance\dimen@-\@tempdima
        \ifdim\dimen@>\z@ \dimen@0pt\fi
        \rel@kern{0.6}\kern-\dimen@
        \if#31
        \overline{\rel@kern{-0.6}\kern\dimen@\macc@nucleus\rel@kern{0.4}\kern\dimen@}%
        \advance\dimen@0.4\dimexpr\macc@kerna
        \let\final@kern#2%
        \ifdim\dimen@<\z@ \let\final@kern1\fi
        \if\final@kern1 \kern-\dimen@\fi
        \else
        \overline{\rel@kern{-0.6}\kern\dimen@#1}%
        \fi
    }%
    \macc@depth\@ne
    \let\math@bgroup\@empty \let\math@egroup\macc@set@skewchar
    \mathsurround\z@ \frozen@everymath{\mathgroup\macc@group\relax}%
    \macc@set@skewchar\relax
    \let\mathaccentV\macc@nested@a
    \if#31
    \macc@nested@a\relax111{#1}%
    \else
    \def\gobble@till@marker##1\endmarker{}%
    \futurelet\first@char\gobble@till@marker#1\endmarker
    \ifcat\noexpand\first@char A\else
    \def\first@char{}%
    \fi
    \macc@nested@a\relax111{\first@char}%
    \fi
    \endgroup
    }
\newcommand*{\redefinesymbolwitharg}[1]{%
  \expandafter\let\csname ltx#1\expandafter\endcsname\csname #1\endcsname
  \@namedef{#1}{\@ifnextchar{^}{\@nameuse{#1@}}{\@nameuse{#1@}^{}}}%
  \expandafter\def\csname #1@\endcsname^##1##2{%
     \csname ltx#1\endcsname\ifx!##1!\else^{##1}\fi\mathopen{}\mathclose\bgroup\left(##2\aftergroup\egroup\right)
     }%
}
\newcommand{\atn}{{\rm Attn}}
\newcommand{\li}{{\rm Linear}}
\newcommand{\onehot}[2]{e^{(#1)}_{#2}}
\newcommand{\softmax}[1]{{\rm Softmax}\left(#1\right)}
\newcommand{\ma}[1]{{\rm MaxAff}(#1)}
\newcommand{\af}[1]{{\rm Aff}_{#1}}
\newcommand*{\email}[1]{\footnote{\href{mailto:#1}{\texttt{#1}}}}
\titlespacing{\section}{0pt}{*1}{*-1}
\titlespacing{\subsection}{0pt}{*1}{*-1}
\titlespacing{\subsubsection}{0pt}{*1}{*-1}
\titlespacing{\paragraph}{0pt}{*1}{1em} %
\setlist[itemize]{
  topsep=0.08em,
  partopsep=0.08em,
  itemsep=0.08em
}
\setlist[enumerate]{
  topsep=0.08em,
  partopsep=0.08em,
  itemsep=0.08em
}
\begin{document}

\begin{titlepage}

\begin{flushright}
Last Update: April 15, 2025
\end{flushright}

\vskip 2.5em
\begin{center}

{
\LARGE \bfseries %
\begin{spacing}{1.15} %
Attention Mechanism, Max-Affine Partition,\\ and Universal Approximation
\end{spacing}
}

\vskip 1em
Hude Liu$^{\dagger*}$\email{21307130303@m.fudan.edu.cn}
\quad
Jerry Yao-Chieh Hu$^{\ddag*}$\email{jhu@u.northwestern.edu}
\quad
Zhao Song$^{\natural}$\email{magic.linuxkde@gmail.com}
\quad
Han Liu$^{\ddag\S}$\email{hanliu@northwestern.edu}

\def\thefootnote{*}
\footnotetext{These authors contributed equally to this work.}

\vskip 1em

{\small
\begin{tabular}{ll}
 $^\dagger\;$School of Mathematical Sciences, Fudan University, Shanghai 200433, China\\
 $^\ddag\;$Center for Foundation Models and Generative AI, Northwestern University, Evanston, IL 60208, USA\\
 \hphantom{$^\ddag\;$}Department of Computer Science, Northwestern University, Evanston, IL 60208, USA\\
 $^\natural\;$Simons Institute for the Theory of Computing, UC Berkeley, Berkeley, CA 94720, USA\\
 $^\S\;$Department of Statistics and Data Science, Northwestern University, Evanston, IL 60208, USA
\end{tabular}}

\end{center}

\noindent
We establish the universal approximation capability of single-layer, single-head self- and cross-attention mechanisms with minimal attached structures.  
Our key insight is to interpret single-head attention as an input domain-partition mechanism that assigns distinct values to subregions.  
This allows us to engineer the attention weights such that this assignment imitates the target function. 
Building on this, we prove that a single self-attention layer, preceded by sum-of-linear transformations, is capable of approximating any continuous function on a compact domain under the $L_\infty$-norm.
Furthermore, we extend this construction to approximate any Lebesgue integrable function under $L_p$-norm for $1\leq p <\infty$. Lastly, we also extend our techniques and show that, for the first time, single-head cross-attention achieves the same universal approximation guarantees.

\end{titlepage}

{
\setlength{\parskip}{0em}
\setcounter{tocdepth}{2}
\tableofcontents
}
\setcounter{footnote}{0}
\thispagestyle{empty}

\clearpage
\setcounter{page}{1}

\section{Introduction}
\label{sec:introduction}

We establish the universal approximation capability of single-layer, single-head self- and cross-attention mechanisms. 
Departed from prior studies, our results demonstrate that the expressive power of transformers arises from \textit{only} the (softmax) attention module and an attached linear layer, without additional components such as positional encodings or feed - forward networks (FFNs). 
More importantly, our proofs show that sequence-to-sequence universal approximation requires only a minimalist configuration—single-layer, single-head attention with linear transformations.

In this era, the power of transformers \cite{vaswani2017attention} is undeniable, given their dominance in modern machine learning. They drive models such as BERT \cite{devlin2018bert}, ChatGPT \cite{brown2020gpt3,achiam2023gpt}, and LLaMA \cite{touvron2023llama,touvron2023llama2,dubey2024llama} for language; ViT \cite{dosovitskiy2020image} and DiT \cite{peebles2023scalable} for image and video; DNABERT \cite{ji2021dnabert,zhou2023dnabert} for genomics; and Moirai \cite{woo2024unified,liu2024moirai} for time series, among many others.
Central to these successes is the \emph{attention mechanism}.
While numerous variants and implementations exist \cite{tay2022}, the \emph{softmax-based} vanilla attention proposed by \cite{vaswani2017attention} remains a mainstay in both research and industry communities (e.g., ChatGPT and Llama).

However, despite its practical importance, theoretical insights into why softmax attention is so powerful remain incomplete. 
Moreover, the extent to which softmax attention alone drives performance is unclear.
Empirical \cite{tay2022} and theoretical \cite{keles2023computational,deng2023superiority,alman2024fundamental} evidence suggests that deviating from softmax attention (e.g., via sub-quadratic approximations) often degrades performance, indicating that softmax attention may be a central engine in Transformer architectures. 
At the same time, a growing body of work explores its memory capacity \cite{mahdavi2023memorization,kim2023provable,kajitsuka2024optimal}, universal approximation properties \cite{yun2019transformers,kajitsuka2023transformers}, representation learning \cite{sanford2024representational,chen2024provably}, and task-specific theoretical performance \cite{gurevych2022rate,edelman2022inductive}.
However, these studies often rely on additional complexities — such as feed-forward networks (FFNs) or multi-head setups or customized assumptions — as they focus on the entire Transformer architecture, making it difficult to isolate the role of attention module itself.

To this end, this work presents attention-only expressiveness results that --- \emph{softmax-based attention alone} already suffices for universal approximation of sequence-to-sequence functions. 
We operate under three key premises for investigating the expressiveness of attention:
\begin{enumerate}
    \item We focus on \textit{softmax}-based attention,
    \item We seek a \textit{minimalist} design (a single layer of single-head attention plus a linear transformation),
    \item We impose \textit{minimal assumptions} on the data distribution or network architecture (no positional encodings, no multi-head expansions, no FFNs).
\end{enumerate}
We provide new proofs that a \emph{single} self-attention layer approximates any continuous sequence-to-sequence function on a compact domain, in both the $L_\infty$ and $L_p$ norms. 
Furthermore, we show --- for the first time --- a parallel result for \emph{cross-attention}, revealing its universal approximation capability under the same minimalist setting.

\textbf{Contributions.}
Our contributions are as follows:

\begin{itemize}
    \item 
    \textbf{Interpreting Attention as a Max-Affine Partition.}  
    We show that single-head softmax attention, combined with a linear layer, implicitly partitions the input domain using a max-affine construction. 
    This partitioning allows attention to assign distinct outputs to each partition cell. 
    This perspective clarifies how softmax-based attention enables a powerful piecewise-linear approximation scheme.

    \item 
    \textbf{Single-Layer, Single-Head Self-Attention Universality.}
    We prove that a single self-attention layer is a universal approximator for continuous sequence-to-sequence functions on compact domains.
    Our results cover both $L_p$- and $L_\infty$-norms guarantees and require minimal assumptions on data and architecture, highlighting the inherent expressive power of attention alone.

    \item 
    \textbf{Single-Head Cross-Attention Universality.}
    We establish, for the first time, that the same approach also endows a single-layer, single-head \emph{cross}-attention with universal approximation capabilities.
    This result further underscores that much of a Transformer’s expressiveness can reside solely in its attention block, even when the queries and keys come from distinct input sequences.
\end{itemize}

\paragraph{Organization.}
\cref{sec:preliminaries} presents the ideas we built on.
\cref{sec:method} shows our interpretation of Attention as a Max-Affine Partition in a simplified setting.
\cref{sec:univ} presents our universal approximation results for single-layer, single-head self- and cross-attentions.

\subsection*{Related Work}

\textbf{Universal Approximation.}
Early works of universal approximation theorems focuses on the expressiveness of feed-forward networks (FFN) \cite{cybenko1989approximation,hornik1991approximation,carroll1989construction}.
Since \citet{vaswani2017attention} propose the transformer architecture and the scaled dot-product attention module, there is a series of research aiming to explain the expressiveness of transformer.
\citet{yun2019transformers,kajitsuka2023transformers} offer explanation from the perspective of contextual mapping. Among them, \citet{yun2019transformers} are the first to prove the universal approximation capability of transformer. 
Yet since the network in \cite{yun2019transformers} requires excessive layers ($\mathcal{O}(n(1/\delta)^{dn}/n!)$), \citet{kajitsuka2023transformers} make more careful estimation upon the numerical results of contextual mapping and proves that with skip connections, a one-layer transformer is capable of approximating any permutation equivariant continuous function.  \citet{takakura2023approximation} add positional encoding to lift the restriction of permutation equivariance, and demonstrate a one-layer transformer approximates shift-equivariant $\alpha$-smoothness function with an error independent of input and output dimension.
\citet{jiang2023approximation} give a non-constructive proof using Kolmogorov representation theorem on the Jackson-type approximation rate of a two-layer transformer. 
While prior works have achieves diverse and extensive result regarding the expressive capability of transformer, their results require the feed-forward network (FFN) to add expressiveness to the attention module in order to achieve universal approximation, which differs from our results derived from attention-only network.

\paragraph{Provable Capabilities of Transformer.}
Recent theoretical studies also shed light on the practical behavior of attention mechanism.
\citet{olsson2022context} show that induction heads help models learn patterns in context. 
\citet{sanford2024transformers} prove that Transformers can do complex computations with few layers because they work in parallel. 
In contrast, \citet{luo2022your} find that some Transformer designs lose expressivity when using relative positional encodings. 
\citet{kim2024transformers,chen2025provable} provide Transformer's hardness results on learning constrained boolean functions. 
To add on these ideas,
We prove that a single-layer, single-head softmax attention with a simple linear layer can approximate any continuous function on a compact domain. 
This shows that attention alone can learn arbitrary sequence-to-sequence mappings.

\section{Preliminaries}
\label{sec:preliminaries}
We now present some ideas we built on.

\textbf{Notation.}
For a vector $v$, we denote its $i$-th entry by $v_i$ and its subvector from the $i_1$-th to the $i_2$-th entry (inclusive) by $v_{i_1:i_2}$ with $i_1 < i_2$. 
For a matrix $M$, we use $M_{i,j}$ for the entry in the $i$-th row and $j$-th column, $M_{i,:}$ for the $i$-th row, and $M_{:,j}$ for the $j$-th column. 
The submatrix spanning rows $i_1$ through $i_2$ and columns $j_1$ through $j_2$ is denoted by $M_{i_1:i_2,,j_1:j_2}$ with $i_1 < i_2,,j_1 < j_2$. 
We define $c_{a\times b}$ as an $a \times b$ matrix with constant entries $c$, and abbreviate $c_{a\times 1}$ as $c_a$.
For norms, we define $\|\cdot\|_\infty$ as the maximum absolute element in a vector or matrix. 
The $p$-norms are given by $\|v\|_p = (\sum_{i} |v_i|^p)^{1/p}$ for a vector $v$ and $\|M\|_p = (\sum_{i,j} |M_{i,j}|^p)^{1/p}$ for a matrix $M$.
For function norms, we define the $L_\infty$ norm as $\|f\|_{L_\infty} := \sup_{x\in X_f} \|f(x)\|_\infty$, where $X_f$ is the input domain of $f$, and the $L_p$ norm as $\|f\|_{L_p} := (\int_{x\in X_f} |f(x)|_p^p,dx)^{1/p}$ for $1 \le p < \infty$. 
For functions, when a function $f:\R\to \R$ is applied on a vector or a matrix, it means to apply $f$ on every entry of the vector/matrix (i.e.,$\exp([a_1,a_2]):=[\exp(a_1),\exp(a_2)]$).

\paragraph{Self-Attention and Cross-Attention Layers.}
For a self-attention $\atn_s : \R^{D\times N} \to \R^{D\times N_{\rm out}}$, and any input $Z\in \R^{D\times N}$, we define its output as:
\begin{align*}
    & ~ \atn_s(Z) = W_VZ\Softmax((W_KZ)^\top W_QZ)W_O,
\end{align*}
where $W_K, W_Q \in \R^{d_{\rm Attn}\times D}$, $W_V \in \R^{D\times D}$, $W_O\in \R^{N\times N_{\rm out}}$. 
Here $d_{\rm Attn}$ stands for the hidden size of the attention block. $N_{\rm out}$ stands for the output sequence length.

For a cross-attention $\atn_c : \R^{D\times N} 
\times \R^{D \times N} \to \R^{D\times N_{\rm out}}$ and any input $Z_K,Z_Q\in \R^{D\times N}$, we define its output as:
\begin{align*}
    & ~ \atn_c(Z_K,Z_Q) = W_VZ_K\Softmax((W_KZ_K)^\top W_QZ_Q)W_O.
\end{align*}
Here $W_K, W_Q, W_V, W_O$ are defined as those in self-attention.

Since we provide separated discussions for self-attention and cross-attention in this work, we omit the subscript and denote them as $\atn$ when this causes no ambiguity.

\paragraph{Layer of Sum of Linear Transformations.}
We use $\li: \R^{D_1\times N_1} \to \R^{D_2\times N_2}$ to denote a layer of sum of linear transformations. For any input $Z\in \R^{D_1\times N_1}$, we define its output as follows:
\begin{align*}
    \li(Z) := \sum^H_{i=1}P_iZQ_i+R,
\end{align*}
where $P_i \in \R^{D_2\times D_1}$, $Q_i\in \R^{N_1\times N_2}$ for $i\in [H]$, $R\in \R^{D_2\times N_2}$. Here $H$ is a positive integer which denotes the number of linear transformations to sum.

\section{Attention as Max-Affine Value Reassignment}
\label{sec:method}
In this section, we introduce a new interpretation of attention as a value reassignment to a max affine function. 
Essentially, we show that attention prepended with a $\li$ layer is able to reassign values to a partition generated by a max-affine function.
We start with the below definition.

\begin{definition}[Max-Affine Function]
\label{def:max-affine_f}
Let $\calX \subset \mathbb{R}^{d_x}$ be a domain, and fix a positive integer $N_{\rm ma}$. 
For each $i \in [N_{\rm ma}]$, define an affine function $y_i:\calX \to \R$ for all $x\in \calX$:
\begin{align*}
    y_i(x) &= a_i^\top x + b_i,\quad
    \text{where $a_i \in \mathbb{R}^{d_x}$ and $b_i \in \mathbb{R}$. }
\end{align*} 
The \emph{max-affine function} $\MA: \calX \to \mathbb{R}$ corresponding to  affine functions $\{y_i(\cdot)\}_{i=[N_{\rm ma}]}$ is defined as
\begin{align*}
    \MA(x) &= \max_{i \in [N_{\rm ma}]} \{a_i^\top x + b_i\}.
\end{align*}
\end{definition}

Intuitively, a max-affine function selects, at each point $x \in \mathcal{X}$, the largest output among $N_{\rm ma}$ affine functions. 
Geometrically, each affine function $y_i(x) = a_i^\top x + b_i$ defines a hyperplane in $\mathbb{R}^{d_x+1}$. 
Thus, $\MA$ follows the highest hyperplane at each $x$, forming a piecewise linear, convex surface—the upper envelope of the given affine hyperplanes.

\begin{remark}[Technical Assumption]
\label{rem:tech_assump}
    For simplicity of presenting our interpretation,
    we make the following technical assumption for all results in this section:
    \begin{assumption}
        For any max-affine function $\MA$, we exclude situations where the difference between its largest and second-largest affine components is smaller than a specified threshold. 
        (Please see proofs for explicit definition.)
    \end{assumption}
    We do not apply this assumption in other sections.
\end{remark}

\subsection{Max-Affine Partition}
\label{sec:max_aff_partition}

We now show that a max-affine function $\MA(\cdot)$ induces a partition of its input domain $\calX$.
Specifically, 
the input domain $\calX$ is divided up according to which affine function is the maximum at each point $x$. 
To be concrete, we define this partition as follows:

\begin{proposition}[Max-Affine Partition]
\label{def:max-affine_part_f}
Following  \cref{def:max-affine_f}, 
consider a max-affine function $\MA(x) = \max_{i\in [N_{\rm ma}]} \{a_i^\top x+b_i\}$, and let $\mathcal{X} \subset \mathbb{R}^{d_x}$ be its input domain. 
Then $\MA$  generates a partition on $\calX$:  
\begin{align*}
        & ~ P_{\rm ma} :=  \{U_i \mid i\in [N_{\rm ma}]\},\\
        & ~ U_i :=  \{x \in \calX \mid \MA(x)=a_i^\top x+b_i\},\quad i\in [N_{\rm ma}].
    \end{align*}
We call the partition $P_{\rm ma}$ the \emph{max-affine partition} of $\mathcal{X}$ induced by $\MA$.
\end{proposition}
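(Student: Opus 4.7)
The plan is to verify the two defining properties of a partition: (i) covering, $\calX = \bigcup_{i \in [N_{\rm ma}]} U_i$, and (ii) pairwise disjointness, $U_i \cap U_j = \emptyset$ whenever $i \neq j$.

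For coverage, I would argue directly from the definition of $\MA$: for every $x \in \calX$, the quantity $\MA(x) = \max_{i\in[N_{\rm ma}]}\{a_i^\top x + b_i\}$ is a maximum over a finite index set and hence is attained by at least one index $j \in [N_{\rm ma}]$. That is precisely the membership condition for $U_j$, so $x \in U_j \subseteq \bigcup_i U_i$, giving $\calX \subseteq \bigcup_i U_i$. The reverse inclusion is immediate from $U_i \subseteq \calX$ by construction.

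For disjointness, I would exploit the geometric structure of the cells and then invoke the technical assumption of \cref{rem:tech_assump}. Each $U_i$ is the closed convex polyhedron $\{x \in \calX : (a_i - a_j)^\top x + (b_i - b_j) \geq 0 \text{ for all } j \neq i\}$, cut out by finitely many half-spaces. Two distinct cells $U_i$ and $U_j$ can only overlap on the affine hyperplane $H_{ij} := \{x : (a_i - a_j)^\top x + (b_i - b_j) = 0\}$, which is exactly the locus where the $i$-th and $j$-th affine components tie for the maximum. The assumption in \cref{rem:tech_assump} requires a strictly positive gap between the largest and the second-largest of $\{a_k^\top x + b_k\}_{k \in [N_{\rm ma}]}$ at every $x \in \calX$, which forbids precisely these tie points. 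Consequently, the argmax index at each $x$ is unique, and the cells are pairwise disjoint.

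The only subtlety is the tie locus $\bigcup_{i\neq j} H_{ij}$, and the technical assumption is exactly what neutralizes it; outside of this bookkeeping, the argument reduces to the elementary fact that a maximum over a finite set is attained. I therefore do not anticipate any substantive obstacle, and the proof should amount to just a few lines once the coverage and uniqueness statements are written out.
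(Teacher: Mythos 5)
Your proposal is correct and follows essentially the same route as the paper's proof: coverage follows because the finite maximum is attained by some index, and disjointness is obtained by invoking the technical assumption of \cref{rem:tech_assump} to exclude the tie locus where several affine components attain the maximum simultaneously. Your version is slightly more explicit about the geometry of the overlap set (the hyperplanes $H_{ij}$), but the substance is identical.
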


Intuitively,
$U_i$ is the set of all point $x$ for which the $i$-th affine function $a_i^\top x + b_i$ achieves the same value as the max-affine output.
Since $\MA(\cdot)$ is the \emph{maximum} of all the affine components, the $i$-th component is (one of) the highest among all components.
Hence, the input domain $\calX$ becomes partitioned ``regions'' $\{U_i\}_{i=[N_{\rm ma}]}$.
That is, 
if a point $x$ belongs to a region $U_i$, the corresponding affine function $a_i^\top x + b_i$ is (tied for) the largest. 
For more details, please see \cref{proof:def:max-affine_part_f} for a detailed proof.

\textbf{Set Overlaps and Boundaries.}
By construction, every $x \in \mathcal{X}$ lies in at least one of the sets $\{U_i\}$, but it may belong to multiple sets if several affine components attain the same maximal value. 
Hence, the collection $\{U_i\}$ is generally a “partition” in an informal sense: while each $U_i$ is typically associated with a distinct region, their pairwise intersections are non-empty on boundary hyperplanes.  
We address these overlaps in detail within our theorems, where boundary regions do not affect the main approximation arguments but require careful handling to ensure mathematical rigor.

\textbf{Indicator Encoding of the Partition.} 
For certain analytical and algorithmic tasks, it is helpful to embed the notion of ``which affine part is active'' into a vector-valued indicator.
Formally, we define the indicators for max-affine partitions.
\begin{definition}[Indicator of Max-Affine Partition]
\label{def:ma_indicator}
    Following the notations in \cref{def:max-affine_part_f}, for a max-affine partition $\{U_i|i\in [N_{\rm ma}]\}$, we define $i_x  :=\textstyle\argmax_{i\in N_{\rm ma}}(y_{i}(x))$ to be the label of the maximal affine component. Then, we define the indicator $E:\R^{d_x} \to \R^{N_{\rm ma}}$ as:
    \begin{align*}
        E(x) = \e{N_{\rm ma}}{i_x},
    \end{align*}
    which is a one-hot vector whose only non-zero entry is the $i_x$-th one.

\end{definition}
In other words, each component of $E(x)$ is zero unless it corresponds to an index achieving the maximum, in which it has the value of $1$.

\begin{figure}[ht]
\vspace{-.5em}
    \centering
    \includegraphics[width=\columnwidth]{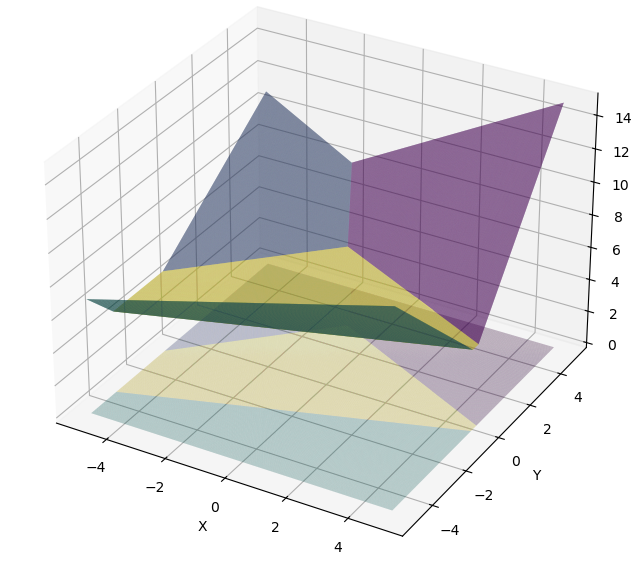}
    \vspace{-2em}
    \caption{\small\textbf{Max-Affine Partition on a $2$-Dimensional Domain.}
    The different affine components are dyed with different colors. 
    The domains corresponding to the different affine components are also dyed with the according color. }
    \label{fig:2d-ma-partition}
    \vspace{-1em}
\end{figure}

In \cref{fig:2d-ma-partition}, we show an example of the max-affine partition.

\subsection{Attention Scores Encode Indicators for Max-Affine Partition}
We now discuss the connection between self-attention and a max-affine partition. We show that self-attention with a $\li$ layer attached before it can generate a max-affine partition. Further, for every input token, the attention score matrix approximately indicates which part of the partition it belongs to. 
We state this result as follows:

\begin{proposition}[Attention Approximates Indicator of Max-Affine Partition]
\label{prop:indicator}
Let $X = [X_1,X_2,\cdots,X_n] \in \R^{d\times n}$ denote any input sequence. 
We use $\mathcal{X}$ to denote the domain of all $X_i$, $i\in [n]$.
Let $\MA$ be any max-affine function on $\mathcal{X}$ with $N_{\rm ma}$ components, and let $\epsilon>0$ be any positive real number. 
We define $P_{\rm ma} = \{U_i | i\in [N_{\rm ma}]\}$ as the max-affine partition generated by $\MA$ as in \cref{def:max-affine_part_f}. 
Then, there exists a $\li$ layer and a self-attention $\atn$ whose attention matrix satisfies:
\begin{align*}
    \|\Softmax((W_K\li(X))^\top W_Q\li(X))W_O
    -[E(X_1),E(X_2),\cdots,E(X_n)]\|_\infty
    \leq \epsilon,
\end{align*}
with exception of a region of arbitrarily small Lebesgue measure in $\R^n$.
Here $W_K$, $W_Q$ are the attention weights within $\atn$. $W_O$ only truncates the irrelevant part of the attention score matrix.  
\end{proposition}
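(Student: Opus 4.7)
The core intuition is that a softmax applied to $\lambda\cdot v$ approximates a one-hot vector at $\argmax_i v_i$ as $\lambda\to\infty$, so if the pre-softmax scores at a query position $j$ equal $\lambda(a_i^\top X_j + b_i)$ across key positions $i\in[N_{\rm ma}]$, the softmax column converges to $E(X_j)$. The plan is to engineer $\li$ and the attention weights so that exactly these scores appear in the right block of $(W_K\li(X))^\top W_Q\li(X)$, and then read off the answer through $W_O$.

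First, I would use $\li$ to build an augmented sequence $\li(X)\in\R^{D\times N}$ whose coordinates carry two kinds of ``slots'': a parameter block that stores the $N_{\rm ma}$ vectors $[a_i;b_i]$ in a designated sub-block of columns (contributed by the input-independent bias term $R$ of $\li$), and an input block that stores $[X_j;1]$ across another sub-block of columns (contributed by $\sum_h P_h X Q_h$, which embeds and permutes the input columns into the right positions). One extra tag coordinate, set to different constants on parameter-bearing versus input-bearing columns, is reserved so that later weights can tell the two apart. The ``$1$'' coordinate lets biases $b_i$ be absorbed into $a_i$, making $[a_i;b_i]\cdot[X_j;1]=a_i^\top X_j+b_i$ directly available as an inner product.

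Next, I would choose $W_K$ and $W_Q$ so that $(W_K\li(X))^\top W_Q\li(X)$ has the block structure: entry equal to $\lambda(a_i^\top X_j+b_i)$ on the parameter-key by input-query block (with $\lambda>0$ a large tunable scalar), and entry $-M$ elsewhere (with $M>0$ a large constant). The first block is realized by letting $W_K$ read the parameter slot on parameter columns, yielding $\lambda[a_i;b_i]$, and $W_Q$ read the input slot on input columns, yielding $[X_j;1]$; the second block is realized by having the tag coordinate contribute a $-M$ penalty on every unwanted row/column pair. Applying the column-wise softmax, for each input-query column $j$ the mass on the parameter-key rows concentrates on $\argmax_i(a_i^\top X_j+b_i)=i_{X_j}$ and the mass on the other rows is exponentially small in $M$. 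Finally, I would take $W_O$ as the block selector that retains only the input-query columns, so that $\Softmax(\cdot)W_O$ matches $[E(X_1),\ldots,E(X_n)]$ (zero-padded on the non-parameter rows) up to $\epsilon$ in $\|\cdot\|_\infty$.

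The main obstacle is controlling the exception region from \cref{rem:tech_assump}. Softmax sharpness at scale $\lambda$ gives an $\ell_\infty$ error of at most $N_{\rm ma}\,e^{-\lambda\delta}$ on any column whose top-two affine components differ by at least $\delta$; sharpness fails only on inputs where some $X_j$ lies within distance $\lesssim\delta$ of a partition boundary. That bad set is a finite union of slabs around affine hyperplanes in $\R^{d_x}$, so it has Lebesgue measure $O(N_{\rm ma}^2\, n\,\delta)$ in the ambient space $\R^{d_x\times n}$ and shrinks to zero as $\delta\to 0$. The quantitative loop is therefore standard: fix the desired excluded measure, choose $\delta$ accordingly, then pick $\lambda\geq\delta^{-1}\log(N_{\rm ma}/\epsilon)$ and $M$ large enough to guarantee the $\|\cdot\|_\infty$ bound uniformly on the complement, which closes the argument.
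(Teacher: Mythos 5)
Your proposal is correct and follows essentially the same route as the paper's proof: use $\li$ to augment the input so that the affine coefficients $[a_i;b_i]$ and the tokens $[X_j;1]$ become accessible, arrange $W_K,W_Q$ so the relevant block of $K^\top Q$ equals the scaled scores $\lambda(a_i^\top X_j+b_i)$, invoke softmax sharpness under a top-two gap to get a near-one-hot column, truncate with $W_O$, and absorb the boundary slabs into the small-measure exception set. The only cosmetic differences are that the paper stores the coefficients directly in $W_K$ (with $\li$ appending an identity block) rather than in $\li$'s bias, and it sizes the key block to exactly $N_{\rm ma}$ parameter columns so your extra $-M$ tag penalty is unnecessary.
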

\cref{prop:indicator} shows that the attention score matrix is able to approximate a vector denoting the position of the input token, by indicating which part of the max-affine partition contains the input token.

\begin{proof}[Proof Sketch]\vspace{-.5em}
    Our proof consists of $3$ steps:

\textbf{Step 1.} After simple preprocessing by $\li$, multiply $\li(X)$ with $W_K$ to obtain the coefficients of the $N_{\rm ma}$ affine components $\{y_i(X)=a_i^\top X+b_i\}_{i\in[N_{\rm ma}]}$ of $\MA$.
        Meanwhile, multiply $\li(X)$ with $W_Q$ to obtain $X$.

\textbf{Step 2.}
        Use attention to multiply $X$ with the $N_{\rm ma}$ coefficients of the affine components of $\MA$.
        This configures the attention score matrix to simulate a layout of all $y_i(X), ~i\in [N_{\rm ma}]$.

\textbf{Step 3.}
        Use $\Softmax$ in attention to select the maximal entry in the layout of all $y_i(X), ~i\in [N_{\rm ma}]$ and output a one-hot vector to denote this selection, which is exactly the indicator $E$.

    Please see \cref{proof:prop:indicator} for a detailed proof.
\end{proof}\vspace{-.5em}

\subsection{Attention Reassign Value to Each Part of the Max-Affine Partition}

In the work of \cite{kim2022parameterized}, they prove that max-affine functions are universal approximators for convex functions. In order to turn them into universal approximators, a possible solution is to reassign value to each part of the max-affine partition generated by the original max-affine function. In the following theorem, we show that a single-head self-attention is capable of completing this task.

\begin{proposition}[Attention Reassigns Value to Max-Affine Partition]
\label{prop:reassign}
    Following the notation in \cref{prop:indicator}, Let $F:\R^d \to \R^d_{\rm out}$ be a piece-wise constant function which is separately constant on each $U_i, i\in [N_{\rm ma}]$. We show that for any $\epsilon > 0$, there exists an self-attention $\atn$ such that
    \begin{align*}
        \| \atn(X)-[F(X_1),F(X_2),\cdots,F(X_n)] \|_\infty \leq \epsilon,
    \end{align*}
    for every $X$ in $\mathcal{X}$ with exception of a region of arbitrarily small Lebesgue measure in $\R^n$.
\end{proposition}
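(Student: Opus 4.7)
The plan is to reduce the statement to \cref{prop:indicator} by engineering the value path of the attention so that, once the attention scores approximate the indicator matrix $[E(X_1),\dots,E(X_n)]$, multiplying by $W_V\li(X)$ picks out the desired constant $F|_{U_i}$ on each cell. Concretely, since $F$ is constant on each $U_i$, write $F|_{U_i}=c_i\in\R^{d_{\rm out}}$, and target the identity
\[
\atn(X) = \bigl(W_V\li(X)\bigr)\cdot \Softmax\bigl((W_K\li(X))^\top W_Q\li(X)\bigr)W_O
\approx [c_{i_{X_1}},c_{i_{X_2}},\dots,c_{i_{X_n}}],
\]
where $i_{X_j}$ is the index of the cell in $P_{\rm ma}$ containing $X_j$. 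If we can arrange both (i) that the right-hand factor approximates $[E(X_1),\dots,E(X_n)]\in\R^{N_{\rm ma}\times n}$ in the sense of \cref{prop:indicator}, and (ii) that $W_V\li(X)$ is a \emph{constant} $d_{\rm out}\times N_{\rm ma}$ matrix whose $i$-th column equals $c_i$, then the matrix product is exactly $[c_{i_{X_1}},\dots,c_{i_{X_n}}]$ up to the Softmax error, which is what we want.

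The key step is to augment, rather than replace, the $\li$ layer furnished by \cref{prop:indicator}. Recall that in the proof sketch of \cref{prop:indicator}, $\li(X)$ already carries two logical ``blocks'': a block of $N_{\rm ma}$ columns that encodes the fixed affine-component coefficients $\{(a_i,b_i)\}$ (independent of $X$, hence produced by the bias term $R$ in $\li$), and a block of $n$ columns that carries a copy of $X$ for the query side. I would extend $R$ with an extra set of rows of dimension $d_{\rm out}$ whose first $N_{\rm ma}$ columns are exactly $[c_1,c_2,\dots,c_{N_{\rm ma}}]$ and whose remaining columns are zero. The matrix $W_V$ is then taken to be the row-selector that reads off these extra rows, so that $W_V\li(X)$ is the constant matrix $[c_1,\dots,c_{N_{\rm ma}},0,\dots,0]$. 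Meanwhile $W_K,W_Q$ (and the companion rows of $\li$) are inherited unchanged from \cref{prop:indicator}, and $W_O$ is taken to be the column selector that keeps only the last $n$ columns of the Softmax output, exactly as in \cref{prop:indicator}. Because $W_V$ only reads the freshly added rows while $W_K,W_Q$ only read the original rows, the two designs do not interfere, and the hypothesis of \cref{prop:indicator} applies verbatim to yield the indicator approximation.

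Combining the two pieces, on the good set (the complement of the small-measure boundary region excluded by \cref{prop:indicator}), the final attention output satisfies
\[
\bigl\|\atn(X)-[F(X_1),\dots,F(X_n)]\bigr\|_\infty
\le \|W_V\li(X)\|_\infty\cdot n\cdot \epsilon',
\]
where $\epsilon'$ is the Softmax error from \cref{prop:indicator} and $\|W_V\li(X)\|_\infty=\max_i\|c_i\|_\infty$ is a fixed constant determined only by $F$. Choosing $\epsilon'$ small enough as a function of $\epsilon$, $n$, and $\max_i\|c_i\|_\infty$ delivers the bound $\|\atn(X)-[F(X_1),\dots,F(X_n)]\|_\infty\le\epsilon$, while the exceptional set remains of arbitrarily small Lebesgue measure, as inherited from \cref{prop:indicator}.

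The main obstacle I anticipate is the bookkeeping needed to make the ``augmented'' $\li$ layer simultaneously satisfy the structural requirements of \cref{prop:indicator} and carry the extra bias block that encodes the values $\{c_i\}$. This is genuinely only a matter of allocating disjoint row-blocks in the embedding dimension $D$ and showing that enlarging $D$ and adding bias-only rows does not perturb the key/query computation or the Softmax normalization. Everything else is a linear-algebraic consequence of the fact that multiplying a constant matrix by an approximate one-hot columnwise-indicator matrix recovers a small column-wise selection error.
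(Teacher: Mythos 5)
Your proposal is correct and takes essentially the same route as the paper: reuse the $\li$, $W_K$, $W_Q$, $W_O$ from \cref{prop:indicator} so the score matrix approximates the indicator, arrange the value path so that $W_V\li(X)$ is the constant matrix $[c_1,\dots,c_{N_{\rm ma}}]$, and absorb the resulting $\max_i\|c_i\|_\infty$ factor into the choice of the indicator error. The only (immaterial) differences are that the paper obtains the constant value matrix without augmenting $\li$ — it sets $W_V=[0_{1\times d},\,V_1,\dots,V_{N_{\rm ma}}]$ against the identity block already present in $\li(X)$ — and that the inner dimension in your error bound should be $N_{\rm ma}$ rather than $n$, a constant that is harmlessly absorbed anyway.
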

\cref{prop:reassign} shows that attention is able to output different values according to the indicator generated in \cref{prop:indicator}.

\begin{proof}[Proof Sketch]\vspace{-.5em}
    The proof consists of $3$ steps:
    \begin{itemize}
        \item \textbf{Step 1.} Construct $\li$, $W_K$ and $W_Q$ as in \cref{prop:indicator}.
        This yields the attention score matrix to be an indicator of the position of each input token.

        \item \textbf{Step 2.} Construct $W_V$ in $\atn$ such that $V := W_V\li(X)$ is a collection of the value of $F$ on each $U_i$, lined up in the identical order correspondent of that in the indicator (i.e., the $i$-th row of attention score matrix and the $i$-th column of $V$ both corresponds to $U_i$).

        \item \textbf{Step 3.} 
        When the input token falls in $U_i$, attention score matrix approximates $\e{N_{\rm ma}}{i}$.
        Then by multiplying the attention score matrix with $V$, attention selects the value of $F$ on $U_i$ and outputs it.

    \end{itemize}
    
    Please see \cref{proof:prop:reassign} for a detailed proof.
\end{proof}\vspace{-.5em}

We conclude this section with two remarks.

\begin{remark}[Extension to Function on All Tokens]
In this section, for the conciseness in demonstration of method, we adopted a token-wise function $F$ as the example function.
Yet since affine functions on all tokens can be easily obtained by adding token-wise affine functions,
this simplified version of our method generalizes well on functions taking all tokens as input and leads us to results shown in \cref{sec:univ}.
\end{remark}

\begin{remark}
Lastly, we emphasize that here the approximation excludes a small area for overall simplicity in this demonstration of our method. The issue is addressed in the proofs of the universal approximation theorems.
\end{remark}

\begin{figure}[ht!]
    \centering
    \includegraphics[width=\columnwidth]{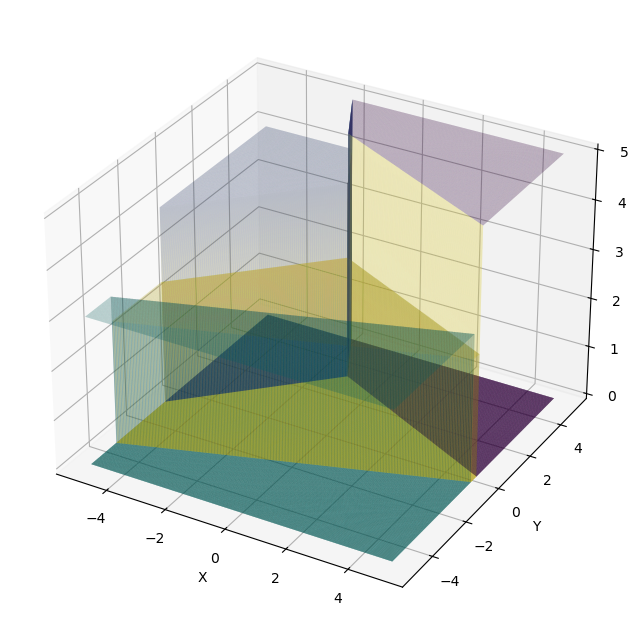}
    \vspace{-3em}
    \caption{\small \textbf{Value Reassignment of Max-Affine Function in \cref{fig:2d-ma-partition}.}
    Each part of the max-affine partition in \cref{fig:2d-ma-partition} was assigned with an affine function different from the original affine function that generated this partition.}
    \label{fig:val-reassign}
\end{figure}

\cref{fig:val-reassign} provides us an example of \cref{prop:reassign}.

\section{Single-Layer, Single-Head Attention Achieves Universal Sequence-to-Sequence Approximation}
\label{sec:univ}
In this section, we present our main results: both
\begin{itemize}
    \item a single layer of single-head self-attention (with one prepended linear layer), and  
    \item a single layer of single-head cross-attention (with one prepended linear layer) 
\end{itemize}
are sequence-to-sequence universal approximators for continuous functions on a compact domain.

Importantly, we achieve attention-only universal approximation for both the $L_p$-norm and $L_\infty$-norm, whereas most existing results apply only to the $L_p$-norm and require additional auxiliary components in the transformer block (e.g., multiple attention or feed-forward layers).
Moreover, our universality result for cross-attention is the first of its kind.

Specifically, we present our results for self-attention in \cref{sec:self_att_uni_approx} and for cross-attention in \cref{sec:cross_attn_approx}.

\subsection{Single-Head Self-Attention as a Universal Seq-to-Seq Approximator}
\label{sec:self_att_uni_approx}

We now present our main result: 
a single-layer, single-head self-attention module, combined with a linear transformation, is sufficient to approximate any continuous map $f:\mathbb{R}^{d\times n}\to\mathbb{R}^{d\times n}$ on a compact domain $U\subseteq[-D,D]^{d\times n}$. 
We present the result first in terms of the $L_\infty$ norm for continuous $f$ and then extend it to $L_p$ integrable functions.

\begin{theorem}[$L_\infty$-Norm Universal Approximation]
\label{thm:self-univ-infinite}
    Let $f: \R^{d\times n} \to \R^{d\times n}$ denote any continuous function on a compact domain $U \subset \R^{d\times n}$ and let $\epsilon>0$ be any positive real number. There exists a self-attention $\atn$ with a prepended $\li$ layer, such that
    \begin{align*}
        \textstyle\|f-\atn\circ \li\|_{L_\infty}
        \leq 
        \epsilon.
    \end{align*}
\end{theorem}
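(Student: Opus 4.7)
The strategy is to reduce universal approximation in $L_\infty$ to the value-reassignment machinery of \cref{prop:reassign}, with extra care taken on the boundaries between cells of the max-affine partition so that errors remain uniformly controlled (not merely small in measure). Concretely, I would proceed as follows.

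\textbf{Step 1: Piecewise-constant approximation via uniform continuity.} Since $U$ is compact and $f$ is continuous, $f$ is uniformly continuous. Given $\epsilon>0$, choose $\delta>0$ such that $\|X-Y\|_\infty<\delta$ implies $\|f(X)-f(Y)\|_\infty<\epsilon/3$. Cover $U$ by a grid of axis-aligned boxes of side length less than $\delta$, obtaining a finite partition $\{U_i\}_{i\in[N]}$. On each $U_i$, pick a representative $\widebar{X}_i$ and define the piecewise-constant surrogate $\bar f(X):=f(\widebar{X}_i)$ for $X\in U_i$; then $\|f-\bar f\|_{L_\infty}\leq \epsilon/3$.

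\textbf{Step 2: Realize the grid as a max-affine partition.} Each grid cell is a convex polytope cut out by axis-aligned hyperplanes, so we can construct a max-affine function $\MA$ on $\R^{d\times n}$ whose induced partition (in the sense of \cref{def:max-affine_part_f}) refines $\{U_i\}$ — for instance by using, for each axis and each grid level, an affine piece that is maximal precisely on one side of that level, and combining these via indices chosen so that each cell corresponds to a unique maximal component. A standard construction is to take $N_{\rm ma}$ large and let the $i$-th affine piece be a carefully chosen linear functional that dominates the others exactly on the $i$-th box (up to a boundary set of measure zero).

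\textbf{Step 3: Invoke value reassignment.} Apply \cref{prop:reassign} with this max-affine partition and target $\bar f$: for any $\eta>0$, there exist $\li$ and $\atn$ with $\|\atn\circ\li(X)-\bar f(X)\|_\infty\leq\eta$ for all $X\in U$ outside a boundary region $B_\tau$ whose Lebesgue measure vanishes as the technical threshold $\tau$ shrinks. Pick $\eta\leq\epsilon/3$. On the ``interior'' $U\setminus B_\tau$, the triangle inequality yields $\|\atn\circ\li(X)-f(X)\|_\infty\leq 2\epsilon/3<\epsilon$.

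\textbf{Step 4: Handling the boundary (the main obstacle).} The delicate point for the $L_\infty$ norm — as opposed to $L_p$ — is that we cannot discard $B_\tau$. However, the attention output on every $X$ is a \emph{convex combination} (via $\Softmax$) of the value vectors assigned to the various cells of the max-affine partition, since softmax rows sum to one and value columns are preserved after multiplication with $W_O$ in the construction of \cref{prop:reassign}. The non-trivial mixing in $B_\tau$ only occurs between cells $U_i, U_j$ whose defining affine components are nearly tied at $X$; by the geometry of the max-affine partition these cells must be spatially adjacent, hence of diameter $<\delta$ in a common neighborhood of $X$. Thus the mixed output is a convex combination of values $f(\widebar{X}_i), f(\widebar{X}_j),\ldots$ all within $\epsilon/3$ of $f(X)$, so $\|\atn\circ\li(X)-f(X)\|_\infty\leq\epsilon/3+\eta\leq 2\epsilon/3$ on $B_\tau$ as well. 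Combining with Step 3 gives the claimed uniform bound $\|f-\atn\circ\li\|_{L_\infty}\leq\epsilon$. The hard part is precisely this ``adjacent cells only'' argument: one must verify that the max-affine construction of Step 2 is chosen so that in the $\tau$-neighborhood where softmax fails to be nearly one-hot, the only competing affine components index cells $U_j$ lying within distance $\delta$ of $X$, so that continuity of $f$ absorbs the residual softmax blending.
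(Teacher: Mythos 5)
Your overall strategy matches the paper's: grid the flattened domain using uniform continuity, realize the grid as the max-affine partition induced by the quadratic-completion components ${\rm Aff}_j(\tilde Z)=v_j^\top\tilde Z-\tfrac12\|v_j\|_2^2$, make the attention output a softmax-weighted combination of the values $f(\tilde v_j)$ at the grid centers, and absorb boundary blending by observing that the output is a convex combination of values of $f$ at centers near $\tilde Z$. One organizational difference: the paper does not route through \cref{prop:reassign} and then patch the excluded region — that proposition is token-wise and carries the small-measure exception precisely because it is a simplified demonstration. The actual proof of \cref{thm:self-univ-infinite} is self-contained: it constructs $\li, W_K, W_Q, W_V, W_O$ so that the output equals $\sum_{j}\alpha(Z)^{-1}\exp\big(R\,{\rm Aff}_j(\tilde Z)\big)\,f(\tilde v_j)$ for \emph{every} $Z$, and then bounds the error pointwise with no excluded set. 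Your Step 4 is therefore not a patch on top of Step 3; it is the entire error analysis.

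The one substantive claim you leave unverified — that in the blending region ``the only competing affine components index cells lying within distance $\delta$ of $X$'' — is not literally true, and is not what the paper proves. Since ${\rm Aff}_i(\tilde Z)-{\rm Aff}_j(\tilde Z)=\tfrac12\|\tilde Z-v_j\|_2^2-\tfrac12\|\tilde Z-v_i\|_2^2$, two centers both far from $\tilde Z$ can be nearly tied \emph{with each other}; adjacency of nearly-tied cells fails. What matters is proximity to the \emph{maximum}: the paper splits indices into the nearest center $j_m$, the set $J_0$ of components within $\delta$ of the maximum (whose centers are provably close to $\tilde Z$, so $\|f(\tilde v_j)-f(Z)\|_\infty\le 2\epsilon/3$ there), and the remainder $J_1$, whose \emph{total} softmax weight is forced below $\epsilon/(6B_0)$ by taking $R$ of order $\delta^{-2}\log(B_0G/\epsilon)$. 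Far centers are permitted to tie with one another; they are simply annihilated by the temperature. To close your argument, replace the adjacency claim with this three-way split, and recalibrate thresholds to account for the fact that a gap of $\delta$ in affine values corresponds to a gap in \emph{squared} Euclidean distances, not in $\|\cdot\|_\infty$ distances.
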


\cref{thm:self-univ-infinite} indicates that a \emph{single-layer} self-attention block, combined with a linear preprocessing layer $\li$, approximates sequence-to-sequence $f$ in the $L_\infty$-norm.

\textbf{Overview of Proof Strategy.} 
We adopt a proof strategy based on a key observation: 
self-attention is capable of approximating target functions via implicit $\MA$ operations.
    Our proof consists of the following $4$ steps:
    \begin{itemize}
        \item \textbf{Step 1: Partition Input Domain $U$ via $\MA$.}
        Construct a max-affine function $\MA$ over $U$ (i.e., input domain of target function $f$) 
        such that this $\MA$ induces a partition of size-$N_{\rm ma}$ of $U$.

        \item \textbf{Step 2: Configure $\li$ and $\atn$ to Imitate $\MA$ over $U$.} 
        Use $\li$ and $W_K,W_Q$ in $\atn$ to map the input $Z\in U$ to values of the affine components $\{y_i(Z)=a_i^\top \tilde{Z}+b_i\}_{i\in[N_{\rm ma}]}$ of $\MA$. 
        Here we flatten the input sequence \!$Z\in \R^{d\times n}$ to \!$\tilde{Z}\in \R^{dn}$ to compute $\MA$.
        
        \item \textbf{Step 3: Engineer $\atn$ to Generate an Indicator of Which Partition Cell the Input Belongs To.}
        Within self-attention $\atn$, design $K^\top Q$ so that $\mathrm{Softmax}(K^\top Q)$ produces a near-one-hot vector as an indicator to the max-affine partition induced by $\MA$ (as defined in \cref{def:ma_indicator}).
        This indicator (approximately an one-hot vector) shows which part (i.e., partitioned cell) of the partition contains the input sequence $Z$.

        \item \textbf{Step 4: Map the Indicator to the Target Value $f(Z)$.} 
        Map each partition cell’s indicator to the corresponding value of $f$. 
        By continuity of $f$, refining the partitioned cell ensures $\|f - \atn \circ \li\|_\infty \le \epsilon$.

    \end{itemize}

\begin{proof}[Proof Sketch]
We elaborate above in detail.
Consider a continuous function $f: U \subseteq [-D,D]^{d\times n}\to\R^{d\times n}$ on a compact domain $U$. Let $\epsilon>0$. 
We aim to construct a \emph{single-layer, single-head} self-attention mechanism $\atn$ (prepended with a linear transformation $\li$) such that
\begin{align*}
    \| f - \atn \circ \li \|_{L_\infty} \;\le\; \epsilon.
\end{align*}

\textbf{Step 1: Partition Input Domain $U$ via $\MA$.}  
\begin{itemize}
    \item \textbf{Flattening Input.}  
    Each input $Z\in\mathbb{R}^{d\times n}$ is reshaped into a single vector $\tilde{Z}\in\mathbb{R}^{dn}$ by stacking its rows or columns. 
    This unifies the domain as $\tilde{Z}\in[-D,D]^{dn}$.
    
    \item \textbf{Grid / Max-Affine Construction.}  
    Since $f$ is uniformly continuous on the compact set $U$, choose $\delta>0$ such that
    \begin{align*}
        \|Z_1 - Z_2\|_\infty < \delta \implies \|f(Z_1)-f(Z_2)\|_\infty < \epsilon.
    \end{align*}
    We subdivide $[-D,D]^{dn}$ into cubes of side $\leq \delta$, yielding $G = P^{dn}$ grid centers $\{v_j\}_{j=0}^{G-1}$.
    We treat $\MA$ as a piecewise (max-)affine or piecewise-constant partition: for each $\tilde{Z}$, there's a nearest $v_j$ within $\delta/2$.
    
    \item \textbf{Technical Highlight.}  
    This partition-based approach leverages uniform continuity to discretize $U$. 
    The number of partitions can be large but finite, ensuring we only need a single-layer of attention to “select” the correct grid cell.
\end{itemize}

\item \textbf{Step 2: Configure $\li$ and $\atn$ to Imitate $\MA$ over $U$.}  

\begin{itemize}
    \item \textbf{Sum-of-Linear-Transformations Map $\li$.}  
    Design $\li:\R^{d\times n}\to\R^M$ (for some dimension $M$) to capture the dot products $\langle v_j,\tilde{Z}\rangle$. 
    Essentially, $\li(Z)$ arranges these $\{v_j^\top \tilde{Z}\}$ in a form accessible to attention. 
    This ensures each grid center $v_j$ can be individually “queried.”
    
    \item \textbf{Encoding Affine Components.}  
    Observe that $\max_{j}\{\langle v_j,\tilde{Z}\rangle - \tfrac12\|v_j\|^2\}$ is akin to a max-affine function. 
    We store terms $v_j^\top \tilde{Z}$, plus $-\tfrac12\|v_j\|^2$, into $K$ and $Q$ for later use in $\mathrm{Softmax}(K^\top Q)$.
    
    \item \textbf{Technical Highlight.}  
    This step demonstrates how we embed $\{\langle v_j,\tilde{Z}\rangle\}$ into a single-head attention setting --- no extra feed-forward layers required. 
    The linear map $\li$ is carefully constructed so that each “component” is individually addressable.
\end{itemize}

\textbf{Step 3: Engineer $\atn$ to Generate an Indicator of Which Partition Cell the Input Belongs To.}
\begin{itemize}
    \item \textbf{Construct $K^\top Q$.}  
    In the self-attention block, let $K^\top Q \approx R(\langle v_j,\tilde{Z}\rangle - \tfrac12\|v_j\|^2)$, where $R>0$ is large. 
    This makes $\mathrm{Softmax}(K^\top Q)$  favor the row $j^*$ maximizing 
    \begin{align*}
        \langle v_j,\tilde{Z}\rangle - \tfrac12\|v_j\|^2.
    \end{align*}

    \item \textbf{Near-One-Hot Distribution.}
    Hence the $j^*$-th row obtains probability close to 1, effectively identifying which grid center $v_{j^*}$ is nearest to $\tilde{Z}$. 
    We interpret this as a near-one-hot “indicator” vector for the correct partition cell.
    
\item \textbf{Technical Highlight.}  
This is the crux: attention’s softmax can act as a \emph{continuous $\arg\max$} by scaling the scores with $R$. As $R\!\to\!\infty$, the distribution becomes more peaked, approximating a hard partition.
\end{itemize}

\item \textbf{Step 4: Map the Indicator to the Target Value $f(Z)$.} 
    \begin{itemize}
    \item \textbf{Assigning Values.}  
    We place $f(v_j)$ in the “value matrix” $W_V$, so that once row $j^*$ is selected, the attention output is $\approx f(v_{j^*})$. 
    Since $Z$ is within $\delta/2$ of $v_{j^*}$, uniform continuity implies 
    \begin{align*}
        \|f(Z)-f(v_{j^*})\|\le\epsilon, \text{ (for suitably chosen $\delta$).}
    \end{align*} 

    \item \textbf{Final Reshaping (If Needed).}  
    A small linear projection $M$ can reshape the output back to $\R^{d\times n}$. The essential logic is that the correct $f(v_j)$ is “routed” to the final output via the near-one-hot attention distribution.

\item \textbf{Technical Highlight.}  
This reveals how a single-head attention layer, armed with linear preprocessing, suffices to replicate the entire function $f$. No feed-forward sub-layer or multiple heads are needed to achieve universal approximation.
\end{itemize}

In sum, combining these steps, we see that:
\begin{enumerate}
\item A finite grid subdivides $U$ to handle uniform continuity,
\item $\li$ encodes $\{\langle v_j,\tilde{Z}\rangle\}$,
\item Large-$R$ $\mathrm{Softmax}(K^\top Q)$ selects the best anchor $v_{j^*}$,
\item A “value matrix” translates that selection into $f(v_{j^*})$,
\end{enumerate}
we conclude that a single-layer, single-head self-attention block approximates $f$ within $\epsilon$ in the $L_\infty$ norm.
    Please see  \cref{proof:thm:self-univ-infinite} for a detailed proof.
\end{proof}\vspace{-.5em}

We also extend above $L_\infty$-Norm result to $L_p$-Norm.
\begin{corollary}[$L_p$-Norm Universal Approximation]
\label{thm:self-univ-p}
    Let $f: \R^{d\times n} \to \R^{d\times n}$ denote any Lebesgue integrable function on a compact domain $U \in \R^{d\times n}$ and let $\epsilon>0$ be any positive real number. 
    Then, there exists a self-attention $\atn$ prepended with a $\li$ layer such that
    \begin{align*}
        \|f-\atn\circ \li\|_{L_p}
        \leq 
        \epsilon.
    \end{align*}
\end{corollary}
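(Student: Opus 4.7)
The plan is to deduce this $L_p$ statement directly from \cref{thm:self-univ-infinite} via a standard density argument, exploiting that the domain $U$ is compact and hence has finite Lebesgue measure $\mu(U)$. The key observation is that on a finite-measure domain, continuous functions are dense in $L_p$, and a pointwise $L_\infty$ approximation of a continuous function automatically produces an $L_p$ approximation with only an extra factor of $\mu(U)^{1/p}$.

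Concretely, I would proceed in three steps. First, invoke the density of continuous functions in $L_p(U,\R^{d\times n})$ (for example via Lusin's theorem applied entrywise, or directly by convolving $f$ with a mollifier and restricting) to obtain a continuous $g:U\to\R^{d\times n}$ with $\|f-g\|_{L_p}\le \epsilon/2$. Second, apply \cref{thm:self-univ-infinite} to $g$ with tolerance
\[
  \eta \;=\; \frac{\epsilon}{2\, c_{d,n,p}\, \mu(U)^{1/p}},
\]
where $c_{d,n,p}$ is a fixed constant converting the entrywise $\|\cdot\|_\infty$ appearing in $\|\cdot\|_{L_\infty}$ to the matrix $p$-norm used in the definition of $\|\cdot\|_{L_p}$ from the preliminaries. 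This yields a single-layer, single-head self-attention $\atn$ and a prepended $\li$ with $\|g-\atn\circ\li\|_{L_\infty}\le \eta$, from which
\[
  \|g-\atn\circ\li\|_{L_p}^{\,p}
  \;=\; \int_U \|g(Z)-(\atn\circ\li)(Z)\|_p^{\,p}\, dZ
  \;\le\; c_{d,n,p}^{\,p}\,\eta^{\,p}\,\mu(U)
  \;\le\; (\epsilon/2)^p.
\]
Third, close by the triangle inequality:
\[
  \|f-\atn\circ\li\|_{L_p} \;\le\; \|f-g\|_{L_p} + \|g-\atn\circ\li\|_{L_p} \;\le\; \epsilon.
\]

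I do not expect a substantive obstacle: the density step is classical and the $L_\infty$-to-$L_p$ conversion is immediate once $\mu(U)<\infty$. The only mild subtlety is that \emph{Lebesgue integrable} literally means $f\in L_1(U)$, while the density argument above is cleanest if $f\in L_p(U)$; this is handled by first truncating $f$ at a large level $K$, noting that the truncation $f_K := \mathrm{clip}_K(f)$ lies in $L_\infty(U)\subset L_p(U)$ and satisfies $\|f-f_K\|_{L_p}\to 0$ as $K\to\infty$ by dominated convergence (after observing that the corollary implicitly assumes $f\in L_p(U)$, as otherwise $\|f-\atn\circ\li\|_{L_p}$ is not finite to begin with). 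With this cosmetic adjustment, applying the three-step density argument above to $f_K$ and absorbing the truncation error into $\epsilon/3$-style splittings gives the stated bound.
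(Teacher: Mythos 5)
Your proposal is correct and follows essentially the same route as the paper: the paper also reduces to \cref{thm:self-univ-infinite} by replacing $f$ with a continuous $g$ via Lusin's theorem and then splitting the error between the small exceptional set and its complement, which is just your density-plus-triangle-inequality argument unpacked. If anything, you are more careful than the paper on the integrability subtlety --- the paper asserts that Lebesgue integrability on a compact set gives an a.e.\ bound $B_p$ on $\|f\|_p$ and uses it to control the integral over the exceptional set, whereas your truncation step handles the $L_1$-versus-$L_p$ issue explicitly.
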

\begin{proof}[Proof Sketch]\vspace{-.5em}
    The same partition-based construction applies almost everywhere; outside a negligible set, $f$ is continuous (Lusin’s theorem). 
    Thus the $L_\infty$ argument extends.
    Please see \cref{proof:thm:self-univ-p} for a detailed proof.
\end{proof}\vspace{-.5em}

\subsection{Single-Head Cross-Attention as a Universal Seq-to-Seq  Approximator}
\label{sec:cross_attn_approx}

Here we extend self-attention universal approximation results from \cref{sec:self_att_uni_approx} to cross-attention.
Importantly,
we establish the first known universal approximation in cross-attention setting.
First, we state our main result in  $L_\infty$-norm.
\begin{theorem}[$L_\infty$-Norm Universal Approximation]
\label{thm:cross-univ-infinite}
    Let $f: U_K\times U_Q \to \R^{d\times n}$ denote any continuous function on a compact domain $U_K \times U_Q$ and let $\epsilon$ be any positive real number. Here $U_K,U_Q\in \R^{d\times n}$ stands for the compact domain of the two input sequences of cross-attention. Then there exists a cross-attention $\atn$ prepended with a $\li$ layer such that
    \begin{align*}
        \textstyle\|f-\atn\circ \li\|_{L_\infty}
        \leq 
        \epsilon.
    \end{align*}
\end{theorem}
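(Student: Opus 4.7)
The plan is to lift the four-step max-affine-partition strategy used in the proof of \cref{thm:self-univ-infinite} from self-attention to cross-attention, by treating the ordered pair $(Z_K, Z_Q)$ as a single joint input on the compact product domain $U_K\times U_Q$. By uniform continuity of $f$, we first pick $\delta>0$ such that the oscillation of $f$ on any $\delta$-cube is at most $\epsilon$, and tile $U_K\times U_Q$ with a finite grid of anchors $\{(v_j^K,v_j^Q)\}_{j=1}^{G}$. The task then reduces to building a cross-attention layer whose $\Softmax$ score matrix behaves like the indicator of the max-affine partition induced by this joint grid, after which the value branch routes each cell's indicator to the stored target value $f(v_j^K, v_j^Q)$.

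The new difficulty compared with self-attention is that the score matrix now takes the bilinear form $(W_K Z_K)^\top(W_Q Z_Q)$, which is not \emph{a priori} affine in either input separately. The key construction is to use the prepended $\li$ layer (specifically its additive bias term) to append a row of $1$s to each pre-processed input. Products between a constant row on one side and an input row on the other then realize ordinary linear functionals of the other argument; by stacking and summing these contributions inside $W_K$ and $W_Q$, we can arrange the $(i,j)$-entry of $(W_K\li(Z_K))^\top(W_Q\li(Z_Q))$ to equal $R\cdot[\langle a_j^K,\tilde Z_K\rangle+\langle a_j^Q,\tilde Z_Q\rangle+b_j]$ for coefficients $(a_j^K,a_j^Q,b_j)$ chosen so that, on the joint grid, the maximizing index $j^\ast$ correctly identifies the anchor $(v_{j^\ast}^K,v_{j^\ast}^Q)$. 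This reduces the problem to a max-affine construction on the product domain, so the analog of \cref{prop:indicator} applies: for $R$ sufficiently large, $\Softmax$ of these scores is within $\epsilon/3$ of a one-hot vector pointing to $j^\ast$ outside a boundary set of arbitrarily small Lebesgue measure.

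For the value branch we reuse the constant-row device: since $\li(Z_K)$ contains a row of $1$s, $W_V\li(Z_K)$ can be populated with the arbitrary constant vectors $\{f(v_j^K,v_j^Q)\}_j$ independently of the particular $Z_K$. Multiplying by the near one-hot $\Softmax$ routes $f(v_{j^\ast}^K,v_{j^\ast}^Q)$ into each output token, and $W_O$ reshapes the result into $\R^{d\times n}$. Uniform continuity then yields $\|f(Z_K,Z_Q)-f(v_{j^\ast}^K,v_{j^\ast}^Q)\|_\infty\le\epsilon/3$, and summing the three error contributions (max-affine indicator error, value-routing error, and continuity error) gives $\|f-\atn\circ\li\|_{L_\infty}\le\epsilon$ as required.

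The main obstacle is engineering the bilinear form $(W_K\li(Z_K))^\top(W_Q\li(Z_Q))$ so that it imitates a \emph{joint} max-affine function of $(\tilde Z_K,\tilde Z_Q)$ rather than producing uncontrolled cross-terms of the form $\tilde Z_K^\top\tilde Z_Q$. The constant-row trick resolves this in principle, but the bookkeeping — choosing the output width of $\li$, distributing the constant and linear slots across $W_K$ and $W_Q$, and verifying that no unwanted bilinear interactions between $\tilde Z_K$ and $\tilde Z_Q$ survive — is the delicate part. Once this layout is fixed, handling the measure-zero boundary region and tuning the softmax temperature $R$ proceeds exactly as in the self-attention proof, so the rest of the argument is essentially mechanical.
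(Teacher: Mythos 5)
Your high-level route is the same as the paper's: tile the product domain $U_K\times U_Q$ with a joint grid, realize a separable joint max-affine score $R\bigl(v_i^\top\tilde Z_K+v_j^\top\tilde Z_Q-\tfrac12\|v_i\|_2^2-\tfrac12\|v_j\|_2^2\bigr)$ through the bilinear form $(W_K\li_K(Z_K))^\top(W_Q\li_Q(Z_Q))$ by pairing each input-dependent row on one side with constant rows on the other (so no $\tilde Z_K^\top\tilde Z_Q$ cross-terms survive), and let a large-$R$ softmax act as an approximate argmax over the $G^2$ cells. That part of your plan matches the paper's construction of $\li_K,\li_Q,W_K,W_Q$.

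The genuine gap is in your value branch. You propose to store the constants $f(v_j^K,v_j^Q)$ in $W_V\li(Z_K)$ and let the near-one-hot softmax column route the selected value to each output token. But $f(v_{j^*}^K,v_{j^*}^Q)$ is a $d\times n$ \emph{matrix}, while the product $V\,\Softmax(K^\top Q)$ can only deliver, to each output column, a single column of $V$ weighted by that column's (near-one-hot) attention weights. Since your scores do not depend on the output position, every column of the score matrix is the same near-one-hot vector, so every output token receives the \emph{same} $d$-vector; the output is forced to have identical columns and cannot approximate a general sequence-to-sequence $f$. A right-multiplication by $W_O$ cannot repair this (it cannot redistribute entries of a flattened $dn$-vector into distinct columns), and the theorem allows no appended linear layer. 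The paper's resolution is the $E/T$ device you do not mention: it defines $T=1_{d\times n}+f/B_0$ and $E=1_{d\times n}-f/B_0$, embeds $\ln T(\tilde v_i,\tilde v_j)$ and $\ln E(\tilde v_i,\tilde v_j)$ \emph{additively into the attention scores} so that after exponentiation they become elementwise multiplicative factors carrying the full $d\times n$ value; the identity $T+E\equiv 2_{d\times n}$ keeps the softmax denominator column-invariant, and the value matrix $[X_2\;{-X_2}]$ forms $T-E=2f/B_0$. Separately, your appeal to the \cref{prop:indicator}-style indicator leaves an excluded set of small Lebesgue measure near cell boundaries, which is incompatible with the $L_\infty$ claim over all of $U_K\times U_Q$; the paper instead bounds the error everywhere by splitting the grid indices into the nearest anchor, the anchors within $\delta$ of the maximum (contributing at most $2\epsilon/3$ by uniform continuity), and the far anchors (contributing at most $\epsilon/3$ for $R$ large), so no points are excluded.
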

\cref{thm:cross-univ-infinite} indicates that a \emph{single-layer} \textit{cross}-attention block, prepended with a linear preprocessing layer $\li$, approximates $f:U_K\to U_Q \to \R^{d\times n}$ in  $L_\infty$-norm.

\begin{proof}[Proof Sketch]
Our proof follows that of \cref{thm:self-univ-infinite}  except one additional step:
use $\atn$ to aggregate the max-affine functions on $U_K,U_Q$ and merge into a $\MA$ function on $U_K \times U_Q$.
The proof consists of the following steps:

\textbf{Step 1: Partition the Input Domain $U_K$ and $U_Q$ with $\MA_K$ and $\MA_Q$ Respectively.}
        Construct two max-affine function $\MA_K$ over $U_K$ and $\MA_Q$  over $U_Q$ such that this $\MA_K$ induces a partition of size-$N_{\rm ma}$ of $U$ and $\MA_Q$ a same size partition on $U_Q$.

\textbf{Step 2: Configure $\li$ and $\atn$ to Imitate $\MA_K, \MA_Q$ over $W_K, U_Q$ Respectively.} 
        Use $\li$ and $W_K,W_Q$ in $\atn$ to map the input $Z_K,Z_Q\in U$ to values of the affine components $\{y_i(Z)=a_i^\top \tilde{Z}+b_i\}_{i\in[N_{\rm ma}]}$ of $\MA_K$ and $\MA_Q$ respectively. 
        Here we flatten the input sequence $Z\in \R^{d\times n}$ to $\tilde{Z}\in \R^{dn}$ to express $\MA$ concisely.

\textbf{Step 3: Use $\atn$ to Aggregate $\MA_K$ and $\MA_Q$ to Form a $\MA: U_K\times U_Q \to \R$ on Both Input Sequences.}
        Use $\atn$ to generate $\MA(Z_K,Z_Q) := \MA_K(Z_K)+\MA_Q(Z_Q)$. 
        This max-affine function merges the partition on $U_K$ and $U_Q$ to generate a unified partition on $U_K \times U_Q$.

\textbf{Step 4: Use $\atn$ to Indicate the Position of the Both Input Sequence in the $\MA$-Generated Partition.}
        Use $\atn$ to generate an indicator to the max-affine partition generated by $\MA$ (as defined in \cref{def:ma_indicator}). 
        This indicator (approximately a one-hot vector) shows which part of the $\MA$-generated partition contains the Cartesian product of both input sequences $Z_K \times Z_Q$.

\textbf{Step 5: Map the indicator to the Corresponding Value of $f$.} Map the indicator to the corresponding value of the target function $f$ by adding terms related to $f$ to $\atn$.

Please see \cref{proof:thm:cross-univ-infinite} for a detailed proof.
\end{proof}

We next extend the sequence-to-sequence universal approximation of cross-attention to $L_p$-norm.

\begin{corollary}
[$L_p$-Norm Universal Approximation]
\label{thm:cross-univ-p}
    Let $f: U_K\times U_Q \to \R^{d\times n}$ denote any Lebesgue integrable function on a compact domain $U_K \times U_Q$ and let $\epsilon$ be any positive real number. Here $U_K,U_Q\in \R^{d\times n}$ stands for the compact domain of the two input sequences of cross-attention. 
   Then, there exists a cross-attention $\atn$ prepended with a $\li$ layer such that
    \begin{align*}
        \|f-\atn\circ \li\|_{L_p}
        \leq 
        \epsilon.
    \end{align*}
\end{corollary}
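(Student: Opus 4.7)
The plan is to extend \cref{thm:cross-univ-infinite} from continuous targets to Lebesgue integrable targets via Lusin's theorem, in direct analogy with the reduction used in \cref{thm:self-univ-p} for self-attention. The only structural change is that the domain is the product $U_K\times U_Q\subset\R^{d\times n}\times\R^{d\times n}$ and the approximating network is a cross-attention block; neither change affects the measure-theoretic argument, since the conclusion of \cref{thm:cross-univ-infinite} already supplies a uniform approximator for any continuous target on $U_K\times U_Q$.

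First I would truncate the target: since $f$ is $L_p$-integrable on the compact product domain, setting $f_B(Z_K,Z_Q) := f(Z_K,Z_Q)\,\mathds{1}_{\{\|f(Z_K,Z_Q)\|_\infty \le B\}}$ for $B$ sufficiently large yields $\|f - f_B\|_{L_p}\le \epsilon/3$ by the dominated convergence theorem. Next I would invoke Lusin's theorem on the bounded measurable $f_B$: for any $\eta>0$ there exists a continuous $\tilde f:U_K\times U_Q\to\R^{d\times n}$ with $\|\tilde f\|_\infty\le B$ (using Tietze extension to preserve the bound) such that $\tilde f = f_B$ outside a measurable set $E$ of measure less than $\eta$. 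Since $\tilde f$ is continuous on the compact set $U_K\times U_Q$, \cref{thm:cross-univ-infinite} gives a cross-attention $\atn$ prepended with a $\li$ layer satisfying $\|\tilde f - \atn\circ\li\|_{L_\infty}\le \epsilon'$, hence $\|\tilde f - \atn\circ\li\|_{L_p}\le \epsilon'\,\mu(U_K\times U_Q)^{1/p}$.

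Combining via the triangle inequality,
\begin{align*}
\|f - \atn\circ\li\|_{L_p}
\;\le\; \|f - f_B\|_{L_p} + \|f_B - \tilde f\|_{L_p} + \|\tilde f - \atn\circ\li\|_{L_p},
\end{align*}
where the middle term is at most $2B\,\eta^{1/p}$ since both $f_B$ and $\tilde f$ are bounded by $B$ in $\|\cdot\|_\infty$ and disagree only on $E$. Choosing parameters in the order $B$, then $\eta$, then $\epsilon'$, each of the three contributions can be driven below $\epsilon/3$, yielding the desired bound.

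The main obstacle will be controlling the middle term in $L_p$: on the exceptional set $E$, $f_B$ and $\tilde f$ may differ by up to $2B$ pointwise, so the estimate is viable only because we measure in $L_p$ (not $L_\infty$) and can shrink $\mu(E)$ freely after $B$ is fixed. The order of parameter choices matters --- $B$ is fixed first to tame the truncation error, $\eta$ is then chosen as a function of $B$ to keep $2B\eta^{1/p}$ small, and only then is $\epsilon'$ (and with it the cross-attention architecture supplied by \cref{thm:cross-univ-infinite}) selected. No new construction beyond \cref{thm:cross-univ-infinite} is needed; the Lusin-plus-Tietze machinery cleanly transfers continuous universality to integrable universality for cross-attention exactly as it does for self-attention.
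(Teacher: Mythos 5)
Your proposal follows essentially the same route as the paper: reduce to the continuous case via Lusin's theorem, invoke \cref{thm:cross-univ-infinite} for the continuous surrogate, and split the $L_p$ error into a contribution from the good set (controlled by the $L_\infty$ bound) and one from the small exceptional set. The one substantive difference is your preliminary truncation to $f_B$: the paper instead asserts that Lebesgue integrability on a compact domain makes $f$ bounded almost everywhere and then bounds the integral over the exceptional set by $\Delta\cdot B_p^p$, which is not justified for a general $L_p$ function (an integrable singularity defeats it); your truncation step, paid for by an extra $\epsilon/3$ in the triangle inequality, closes that gap and is the more careful version of the same argument.
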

\begin{proof}\vspace{-.5em}
    Please see \cref{proof:thm:cross-univ-p} for a detailed proof.
\end{proof}\vspace{-.5em}

\section{Extension to Practice}
\label{sec:application}
In practical scenarios, despite defined on a high dimension input domain ($\R^{d\times n}$), attention is often considered to approximate a function defined upon a small input domain $\calX \subset \R^{d\times n}$.

To this end, we extend our method to the approximation rate of $L$-Lipschitz functions with a relatively small input domain.
We state our result as the following theorem.

\begin{theorem}
\label{thm:small_region}
    Let $f: \R^{d\times n} \to \R^{d\times n}$ denote an $L$-Lipschitz function (in terms of $2$-norm) whose input domain is $\cal{X}$.
    For any $\epsilon > 0$, assume $\cal{X}$ is contained in $N_x$ spheres by the radius of $\epsilon/(3L)$ in $2$-norm.
    Then, there exists a $\li$ layer and a $\atn$ layer such that:
    \begin{align*}
        \| \atn \circ \li - f \|_\infty 
        \leq 
        \epsilon.
    \end{align*}
    Furthermore, $\atn$ and $\li$ have a total number of $\mathcal{O}(dn N_x)$ trainable parameters.
\end{theorem}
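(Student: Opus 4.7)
The plan is to adapt the construction in the proof of \cref{thm:self-univ-infinite} but to replace the ambient $\delta$-grid over $[-D,D]^{dn}$ (which would cost $P^{dn}$ anchor points) by the given covering of $\mathcal{X}$ of size $N_x$. The point is that the universal-approximation machinery in \cref{sec:self_att_uni_approx} only needs a finite set of anchors $\{v_j\}$ and a max-affine function whose $j$-th cell contains the $j$-th anchor; the number of trainable parameters scales linearly with the number of anchors and with the flattened dimension $dn$. Since the covering already supplies $N_x$ anchors with resolution $\epsilon/(3L)$, the Lipschitz assumption is precisely what is needed to convert this geometric resolution into a functional resolution of $\epsilon/3$.

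Concretely, I would first fix one center $v_1,\ldots,v_{N_x}\in\mathbb{R}^{dn}$ for each of the covering balls and, after flattening any input $Z\in\mathcal{X}$ to $\tilde Z\in\mathbb{R}^{dn}$, define the max-affine function
\begin{align*}
    \MA(\tilde Z) \;=\; \max_{j\in[N_x]} \Bigl\{\langle v_j,\tilde Z\rangle - \tfrac12\|v_j\|_2^2\Bigr\},
\end{align*}
whose induced partition has the property that the $j$-th cell contains every point closer to $v_j$ than to any other center (in the sense of \cref{def:max-affine_part_f}). In particular, any $Z\in\mathcal{X}$ lies in a cell $U_{j(Z)}$ whose center $v_{j(Z)}$ is within $\epsilon/(3L)$ of $\tilde Z$ in $2$-norm, so by $L$-Lipschitzness $\|f(Z)-f(v_{j(Z)})\|_\infty\le\|f(Z)-f(v_{j(Z)})\|_2\le\epsilon/3$.

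Next, I would invoke \cref{prop:indicator} and \cref{prop:reassign} (with the target piecewise-constant function $F$ taking the value $f(v_j)$ on cell $U_j$) to build a $\li$ layer and a single-head self-attention $\atn$ such that $\atn\circ\li$ produces, up to error $\epsilon/3$, the reassigned value $f(v_{j(Z)})$ for $Z\in\mathcal{X}$. Combining this with the Lipschitz bound above via the triangle inequality yields $\|\atn\circ\li-f\|_\infty \le \epsilon/3+\epsilon/3 < \epsilon$ on $\mathcal{X}$, after absorbing the softmax sharpening error into the remaining $\epsilon/3$ budget by taking the softmax temperature $R$ sufficiently large (as in Step 3 of the proof sketch of \cref{thm:self-univ-infinite}).

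Finally, I would tally parameters. The $\li$ layer only needs to flatten $Z$ and replicate it across $N_x$ positions, which can be done with $O(dn\,N_x)$ entries; the matrices $W_K,W_Q\in\mathbb{R}^{N_x\times dn}$ encode the $N_x$ affine components of $\MA$ and contribute $O(dn\,N_x)$ parameters; the value matrix $W_V$ (together with $W_O$) stores the $N_x$ target vectors $f(v_j)\in\mathbb{R}^{dn}$, again $O(dn\,N_x)$ parameters. Summing gives the claimed $\mathcal{O}(dn\,N_x)$ parameter count. The main obstacle I anticipate is bookkeeping: one must ensure that the covering-based partition truly avoids the technical boundary issue of \cref{rem:tech_assump} (i.e., points nearly equidistant to two centers), but this is handled exactly as in the proof of \cref{thm:self-univ-infinite} by choosing the softmax temperature large enough that the near-one-hot output incurs at most $\epsilon/3$ error uniformly over the (compact) covered set $\mathcal{X}$.
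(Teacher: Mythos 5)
Your proposal is correct and follows essentially the same route as the paper: the paper's proof literally reruns the construction of \cref{thm:self-univ-infinite} with the uniform grid replaced by the $N_x$ covering centers, uses the fact that the softmax-selected anchor $v_{j_m}$ is the nearest center (hence within $\epsilon/(3L)$ of $\tilde Z$, so Lipschitzness gives an $\epsilon/3$ functional error), and tallies $O(dnN_x)$ parameters exactly as you do. The only caution is that \cref{prop:indicator,prop:reassign} alone would not suffice (they hold only off an exceptional set), but you correctly note that the uniform bound instead comes from the soft $j_m/J_0/J_1$ error decomposition of \cref{thm:self-univ-infinite}, which is precisely what the paper does.
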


\begin{proof}[Proof Sketch]
    This proof only differs from the proof of \cref{thm:self-univ-infinite} on the choice of partition.
    For universal approximation, we choose a partition that evenly partition the whole space.
    In this theorem, we change this partition to have each part centered on a different sphere described in the \cref{thm:small_region}.
    By characterizing our partition, we achieve a more precise approximation result.
    
    Please see \cref{proof:thm:small_region} for a detailed proof.
\end{proof}

\cref{thm:small_region} states that when the input domain is contained in $N_x$ spheres of $\epsilon$-level radius, there exists a single-head self-attention layer that approximates the target function with a precision of $\epsilon$.

\section{Proof-of-Concept Experiments}
\label{sec:experiment}
\begin{figure*}[ht!]
    \centering
    \begin{minipage}{0.33\textwidth}
        \centering
        \includegraphics[width=\linewidth]{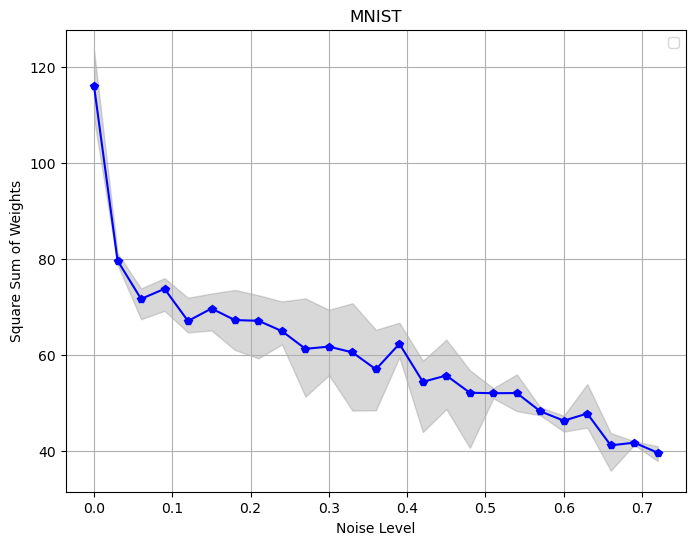}
        \label{fig:graph1}
    \end{minipage}%
    \begin{minipage}{0.33\textwidth}
        \centering
        \includegraphics[width=\linewidth]{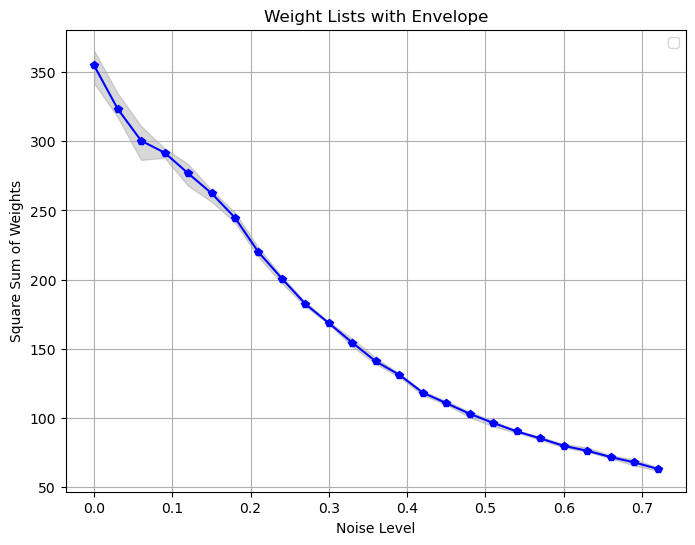}
        \label{fig:graph2}
    \end{minipage}%
    \begin{minipage}{0.33\textwidth}
        \centering
        \includegraphics[width=\linewidth]{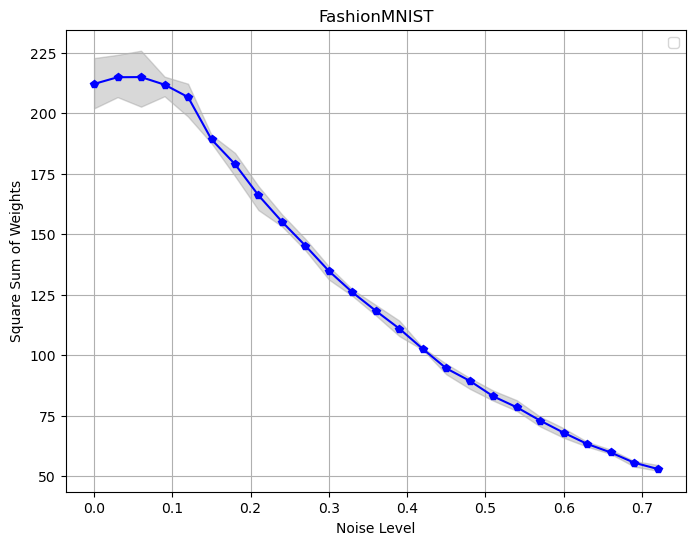}
        \label{fig:graph3}
    \end{minipage}
    \caption{\small\textbf{Scale of Attention Weights vs. Training noise.}
    For MNIST, CIFAR-10, and Fashion-MNIST we plot the $\ell_2$-norm of $W_K$ and $W_Q$ against the injected label-noise ratio. In all three datasets the weight scale declines monotonically as noise increases, corroborating \cref{prop:indicator}: higher noise hampers precise partitioning, so the model reduces the magnitude of weights that form the attention score matrix.}
    \label{fig:three_graphs}
\end{figure*}

In \cref{prop:indicator}, we demonstrate domain-partition mechanism of attention. 
In this mechanism, the temperature of the $\Softmax$ function affects the precision of the max-affine partition generated by attention, which is crucial to the complex approximations accomplished in \cref{thm:self-univ-infinite} and \cref{thm:self-univ-p}.

Since the temperature of $\Softmax$ is equivalent to the scale of the matrix involved in computing the attention score matrix ($W_K,W_Q$), our theory suggests the scale of $W_K,W_Q$ decreases when the input data contains more noise, as a result of the rise in difficulty to form a clear partition, and an approximation based on this partition.

To verify this conjecture, we test the correlation between the scale of $W_K, W_Q$ and the noise level in the training data.

\paragraph{Objectives.}

Examine the relationship between scale of matrix involved in computing the attention score matrix in attention ($W_K,W_Q$) and the noise level (using Gaussian noise) in the dataset.

\paragraph{Data.}

We perform separate experiments on the training set of the noised MNIST, CIFAR10 and FashionMNIST datasets with noise level (the coefficient multiplying the standard Gaussian noise) gradually adding from $0$ to $0.72$ by the step size of $0.03$.

\paragraph{Network setups.}

Our network consists of a single-head self-attention followed by a feed-forward network. 
Due to the complexity and different characteristics of the selected datasets, the size of the feed-forward network slightly differs between datasets.

\paragraph{Results.}
\cref{fig:three_graphs} presents our  results. 
As the noise level increases, a decrease in the scale of weights in $W_K,W_Q$ becomes evident in all settings.
This aligns with our theory.

\section{Concluding Remarks}
\label{sec:conclusion}
We introduce a novel interpretation of attention as a mechanism for reassigning values to a partition induced by a max-affine function.  
This unique perspective allows us to show that prepending a single linear layer before either self-attention or cross-attention enables the network to  
(i) generate indicator functions representing max-affine partitions (\cref{prop:indicator}) and  
(ii) selectively reassign values to each partition cell (\cref{prop:reassign}).  
As a result, we prove that both single-head self-attention and single-head cross-attention, when combined with a single layer of sum of linear transformations, achieve universal approximation of compactly supported continuous functions under $L_\infty$ norm, or integrable functions under $L_p$ norm.  

\paragraph{Key Insights and Results.}
\begin{itemize}
    \item \textbf{Max-Affine Partition.} A max-affine function naturally partitions its input domain, and attention (with appropriate transformations) can approximate the indicator functions of these partitions.
    \item \textbf{Value Reassignment.} Self-attention reassigns output values based on partition indicators, capturing a broad class of piecewise-defined functions.
    \item \textbf{Universal Approximation.} With only a single linear layer and a single-head attention module, one can approximate arbitrary sequence-to-sequence maps in both the $L_\infty$ and $L_p$ senses, for both self-attention (\cref{thm:self-univ-infinite,thm:self-univ-p}) and cross-attention (\cref{thm:cross-univ-infinite,thm:cross-univ-p}) architectures.
\end{itemize}

\paragraph{Limitations.}
While our results highlight the surprising representational power of single-head attention with linear preprocessing, several limitations warrant discussion:
\begin{itemize}
    \item \textbf{Large Dimensions and Network Size.} Our minimal-assumption design needs many partition regions to cover diverse targets. 
    This follows naturally from the general setting we study.
    High-dimensional inputs or long sequences then inflate the parameter count and hinder practice. 
    \cref{sec:application} eases the burden but does not eliminate it entirely.

    \item \textbf{Boundary Overlaps.} 
    As emphasized in \cref{rem:tech_assump}, Max-affine partitions create boundary regions where multiple affine components tie for the maximum, requiring careful handling. 
    However, we remark that these do not affect the main approximation argument in \cref{sec:univ}.
    They only complicate the proofs.
    
    \item \textbf{Training Complexity.} Our proofs are \textit{constructive} rather than \textit{prescriptive} for training, meaning standard gradient-based methods may not (always) efficiently find the required weight configurations.
    
    \item \textbf{Data Distribution Shifts.} Like many universal approximation results, our approach does not account for distribution shifts or generalization beyond the compact domain used for training.
\end{itemize}

\paragraph{Implications and Future Work.}
Our findings explain why transformers excel at modeling heterogeneous data: attention can create flexible partitions of the input space and assign context-dependent outputs.  

This perspective raises open questions for future research:  
\begin{itemize}
    \item Can multi-head or deeper attention layers simplify representational requirements or reduce approximation constants?  
    
    \item How might learned partitions or specialized positional encodings improve efficiency in practice?  

    \item Can adaptive or data-driven strategies automatically discover near-optimal partitions for specific tasks?
\end{itemize}

Overall, our results establish a theoretical foundation for understanding attention-based architectures as universal function approximators, illustrating how tokenwise information is \textit{partitioned and reassigned} to represent complex sequence-to-sequence functions with minimal assumptions and structural requirements on data and model.

\section*{Acknowledgments}
The authors would like to thank Mimi Gallagher, Sara Sanchez, Dino Feng and Andrew Chen for useful discussions; and
Weimin Wu, Hong-Yu Chen and Jennifer Zhang  for collaborations on related topics.
JH also thanks the Red Maple Family for support.
The authors would like to thank the anonymous reviewers and program chairs for constructive comments.

Lastly, JH dedicates this work to the memory of his aunt, Lily Cheung, who passed away during its preparation (March 2025). 
Her loving and caring spirit will always inspire him.

JH is partially supported by the Walter P. Murphy Fellowship.
HL is partially supported by NIH R01LM1372201, AbbVie and Dolby.
This research was supported in part through the computational resources and staff contributions provided for the Quest high performance computing facility at Northwestern University which is jointly supported by the Office of the Provost, the Office for Research, and Northwestern University Information Technology.
The content is solely the responsibility of the authors and does not necessarily represent the official
views of the funding agencies.

\newpage
\appendix
\label{sec:append}
\part*{Appendix}
{
\setlength{\parskip}{-0em}
\startcontents[sections]
\printcontents[sections]{ }{1}{}
}

\clearpage
\section{Proofs of Results in \texorpdfstring{\cref{sec:method}}{}}

\subsection{Proof of \texorpdfstring{\cref{def:max-affine_part_f}}{}}
\label{proof:def:max-affine_part_f}

\begin{proposition}[\cref{def:max-affine_part_f} Restated: Max-Affine Partition]
Following  \cref{def:max-affine_f}, 
consider a max-affine function $\MA(x) = \max_{i\in [N_{\rm ma}]} \{a_i^\top x+b_i\}$, and let $\mathcal{X} \subset \mathbb{R}^{d_x}$ be its input domain. 
Then $\MA$  generates a partition on $\calX$:  
\begin{align*}
        & ~ P_{\rm ma} :=  \{U_i \mid i\in [N_{\rm ma}]\},\\
        & ~ U_i :=  \{x \in \calX \mid \MA(x)=a_i^\top x+b_i\},\quad i\in [N_{\rm ma}].
    \end{align*}
We call the partition $P_{\rm ma}$ the \emph{max-affine partition} of $\mathcal{X}$ induced by $\MA$.
\end{proposition}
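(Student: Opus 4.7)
The plan is to verify two things: that the collection $\{U_i\}_{i \in [N_{\rm ma}]}$ covers $\calX$, and that distinct $U_i$'s intersect only on lower-dimensional boundary sets, so the collection deserves the name ``partition'' in the informal sense already flagged in the discussion after the statement. The proof is essentially bookkeeping once one unpacks the definitions, so the main task is to phrase it cleanly rather than overcome a serious obstacle.

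For the covering property, I would argue as follows. Since $[N_{\rm ma}]$ is a finite index set, for every $x \in \calX$ the maximum $\MA(x) = \max_{i \in [N_{\rm ma}]}\{a_i^\top x + b_i\}$ is attained by at least one index $i^\star(x) \in [N_{\rm ma}]$. By the very definition of $U_{i^\star(x)}$, the point $x$ belongs to $U_{i^\star(x)}$, whence
\begin{align*}
    \calX \;=\; \bigcup_{i \in [N_{\rm ma}]} U_i.
\end{align*}

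For the overlap structure, I would fix any pair $i \neq j$ and examine $U_i \cap U_j$. Any $x$ in this intersection must satisfy $a_i^\top x + b_i = a_j^\top x + b_j = \MA(x)$, i.e.\ $(a_i - a_j)^\top x = b_j - b_i$. Three cases arise: if $a_i \neq a_j$, the intersection is contained in an affine hyperplane of codimension one in $\mathbb{R}^{d_x}$, hence has Lebesgue measure zero; if $a_i = a_j$ and $b_i \neq b_j$, the intersection is empty; and if $a_i = a_j$ and $b_i = b_j$, the two affine components coincide and may be merged into a single one without altering $\MA$. In every case, $U_i \cap U_j$ lies in a set of Lebesgue measure zero in $\mathbb{R}^{d_x}$.

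The only ``obstacle'' worth mentioning is strictly linguistic: since the $U_i$'s are not pairwise disjoint on boundary hyperplanes, the word ``partition'' is being used in the informal sense described in the paragraph on ``Set Overlaps and Boundaries''. If a genuine set-theoretic partition is desired, one can replace each $U_i$ by $U_i' := \{x \in \calX : i = \min\,\argmax_{k \in [N_{\rm ma}]}(a_k^\top x + b_k)\}$, which breaks ties by index and yields disjoint sets whose union is $\calX$. For the approximation arguments in \cref{sec:univ}, however, the measure-zero overlap suffices, so I would state the conclusion as covering plus almost-disjointness and defer boundary-tie breaking to the universal approximation proofs where it is explicitly handled.
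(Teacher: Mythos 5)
Your proof is correct and follows essentially the same route as the paper: both establish coverage by noting that the finite maximum is attained by some index, so every $x$ lies in the corresponding $U_i$ (the paper phrases this as a contradiction, you phrase it directly). The paper then simply dismisses the overlaps by invoking the technical assumption of \cref{sec:method}, whereas you supply the explicit measure-zero hyperplane argument and a tie-breaking fix; this is a welcome strengthening but not a different approach.
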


\begin{proof}
    If an $x_0$ is not grouped to any $U_i, ~i\in [N_{\rm MaxAff}]$.
Since $\MA$ is define over $\calX$ and thus defined on $x_0$, we have:
\begin{align*}
\MA(x_0) \neq a_i^\top x_0+b_i, \quad i\in [N_{\rm MaxAff}].    
\end{align*}
This is contradictory to the definition of $\MA$.

Since in \cref{sec:method} we exclude the discussion on the overlapped regions of the affine components $\{y_i = a_i^\top x +b_i\}$, $\{U_i \mid i\in [N_{\rm MaxAff}]\}$ form a partition on $\calX$.
This completes the proof.
\end{proof}

\subsection{Proof of \texorpdfstring{\cref{prop:indicator}}{}}
\label{proof:prop:indicator}

\begin{proposition}
[\cref{prop:indicator} Restated: Attention Approximates Indicator of
Max-Affine Partition]
Let $X = [X_1,X_2,\cdots,X_n] \in \R^{d\times n}$ denote any input sequence. 
We use $\mathcal{X}$ to denote the domain of all $X_i$, $i\in [n]$.
Let $\MA$ be any max-affine function on $\mathcal{X}$ with $N_{\rm MaxAff}$ components, and let $\epsilon>0$ be any positive real number. 
We define $P_{\rm MaxAff} = \{U_i \mid i\in [N_{\rm MaxAff}]\}$ as the max-affine partition generated by $\MA$ as in \cref{def:max-affine_part_f}. 
Let $E$ be the indicator of $P_{\rm MaxAff}$ as defined in \cref{def:ma_indicator}.
Under the above definitions, there exists a $\li$ layer and a self-attention $\atn$ whose attention matrix satisfies
\begin{align*}
\|\Softmax((W_K\li(X))^\top W_Q\li(X))W_O -[E(X_1),E(X_2),\cdots,E(X_n)]\|_\infty
    \leq \epsilon,
\end{align*}
with exception of an arbitrarily small region.
Here $W_K$, $W_Q$ are the attention weights within $\atn$.  
\end{proposition}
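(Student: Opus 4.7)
The plan is to engineer $\li$, $W_K$, $W_Q$, and $W_O$ so that the pre-softmax score matrix becomes (up to a large temperature scalar) the $N_{\rm ma}\times n$ ``score board'' $[y_i(X_j)]_{i,j}$, then exploit the max-affine gap assumption to ensure that column-wise softmax collapses to the claimed one-hot indicator matrix. Concretely, I first design $\li$ so that $\li(X)\in\R^{D'\times(N_{\rm ma}+n)}$ has two blocks: the first $N_{\rm ma}$ columns are constant tokens that, after multiplication by $W_K$, expose the parameters $(a_i,b_i)$ of each affine component of $\MA$, while the last $n$ columns carry the original tokens $X_j$ augmented with a constant ``$1$'' entry so that biases can be absorbed into inner products. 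Since both operations are affine in $X$ (indeed the first block is constant in $X$), they fit in the sum-of-linear-transformations form $\sum_h P_h X Q_h + R$ by an appropriate choice of $R$ and a single rank-one $P_1 X Q_1$ block.

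Next I pick $W_Q$ so that on the second block $W_Q\li(X)_{:,N_{\rm ma}+j}=R\cdot[X_j;1]$, where $R>0$ is a scalar temperature to be fixed later, and so that $W_Q$ zeroes out the first block. Symmetrically I pick $W_K$ so that on the first block $W_K\li(X)_{:,i}=[a_i;b_i]$ and on the second block the output is zero. Then the relevant $(i,j)$ block of $(W_K\li(X))^\top(W_Q\li(X))$ equals $R(a_i^\top X_j+b_i)=R\,y_i(X_j)$, which is precisely the desired score-board scaled by $R$. The remaining rows and columns of this $N\times N$ score matrix are zero and will be discarded at the end by $W_O$, which I take to be a projection that keeps the $N_{\rm ma}\times n$ submatrix of interest and places it into an $N\times N_{\rm out}$ output of the prescribed shape.

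For the softmax step I apply column-wise softmax to the scaled score board. Fix $X_j\in U_{i_{X_j}}$ and set $\Delta(X_j):=y_{i_{X_j}}(X_j)-\max_{i\neq i_{X_j}}y_i(X_j)$. Under the assumption in \cref{rem:tech_assump}, $\Delta(X_j)\ge\Delta_0>0$ off a set of arbitrarily small Lebesgue measure. A direct estimate gives
\begin{align*}
\bigl\|\mathrm{Softmax}(R\,y_{\cdot}(X_j))-\e{N_{\rm ma}}{i_{X_j}}\bigr\|_\infty\;\le\;\frac{(N_{\rm ma}-1)e^{-R\Delta_0}}{1+(N_{\rm ma}-1)e^{-R\Delta_0}},
\end{align*}
so choosing $R$ large enough (as a function of $\epsilon$, $N_{\rm ma}$, $\Delta_0$) drives this below $\epsilon$ uniformly in $j$. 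Taking the maximum over $j\in[n]$ and combining with the fact that zero columns of the score matrix map to zero columns of the output after $W_O$ truncation yields the claimed $\|\cdot\|_\infty$ bound.

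The only real obstacle is the boundary of the partition, where $\Delta(X_j)$ can be arbitrarily small and softmax cannot be made near one-hot for any fixed $R$. This is handled exactly as advertised by \cref{rem:tech_assump}: the $\delta$-neighborhood of the boundary hyperplanes $\{y_i=y_{i'}\}$ has Lebesgue measure that shrinks to zero with $\delta$, so excising such a region reinstates a uniform gap $\Delta_0$ on the complement and permits a single global choice of $R$. The remaining details (choosing $W_O$ to zero out the auxiliary rows/columns, verifying that the sum-of-linear form of $\li$ realizes a constant block together with an $X$-dependent block) are routine and follow by a single-head, single-layer construction without any further components.
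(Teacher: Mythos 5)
Your proposal is correct and follows essentially the same route as the paper's proof: use the additive (bias) part of $\li$ to expose the affine coefficients $(a_i,b_i)$ to $W_K$ while $W_Q$ retrieves the tokens augmented with a constant $1$, so that $K^\top Q$ realizes the scaled score board $R\,[y_i(X_j)]$, then invoke the gap assumption of \cref{rem:tech_assump} to make the column-wise softmax near-one-hot and let $W_O$ truncate. The one cosmetic difference is that you place the constant and token columns in disjoint column ranges, which leaves $n$ extra zero rows inside each relevant softmax column and thus adds $n$ spurious $e^{0}$ terms to the denominator; this is harmless once you shift all $b_i$ by a common constant so that $\max_i y_i(X_j)>0$ (the paper sidesteps it by overlaying the blocks so every row of a relevant column is a score-board entry), but you should state that fix explicitly.
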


\begin{proof}
We first denote that according to the premise of \cref{sec:method}, the intersection region of different affine components are omitted. 
This means for an arbitrarily small $\delta>0$, this proposition malfunctions on any points within a $\delta$ radius neighborhood of the intersecting lines of max-affine partitions. 

Our proof consists of two parts:
\begin{enumerate}
    \item Construct $\li$ and $\atn$.
    \item Estimate the error between the attention score matrix of $\atn \circ \li$ and the target indicator.
\end{enumerate}

For the max-affine function $\MA$, we denote it as follows.
\begin{definition}[Max-Affine Function]
    Let $a_i\in \R^d,b_i\in \R$, $i\in [N_{\rm MaxAff}]$ denote the coefficients of the affine components of $\MA$.
    In this definition, $\MA$ writes out as
    \begin{align}
        \label{def:ma_in_4.1proof}
        \MA(Z) = \max_{i\in [N_{\rm MaxAff}]}\{a_i^\top Z+b_i\},
    \end{align}
    for any $Z\in \R^d$.
\end{definition}
\begin{remark}
    For conciseness of presentation, we assume the top component of $\MA$ exceeds the second-largest by a fixed $\Delta>0$, independent of the input and arbitrarily small.
\end{remark}

\paragraph{Construction of $\li$.} 

Without loss of generality, assume $N_{\rm MaxAff} \geq n$.
We construct $\li$ (the layer of linear transformations) to be
\begin{align*}
    \li(Z) := 
    \begin{bmatrix}
        I_d \\
        0_{n\times d}
    \end{bmatrix}
    Z 
    \begin{bmatrix}
        I_n & 0_{n\times (N_{\rm MaxAff}-n)}
    \end{bmatrix}
    +
    \begin{bmatrix}
        0_{d\times N_{\rm MaxAff}} \\
        I_{\rm MaxAff}
    \end{bmatrix}.
\end{align*}

The the output of $\li(X)$ is
\begin{align}
\label{eq:linear_X_s}
\li(X)
    = & ~
    \begin{bmatrix}
        I_d \\
        0_{n\times d}
    \end{bmatrix}
    X 
    \begin{bmatrix}
        I_n & 0_{n\times (N_{\rm MaxAff}-n)}
    \end{bmatrix}
    +
    \begin{bmatrix}
        0_{d\times N_{\rm MaxAff}} \\
        I_{\rm MaxAff}
    \end{bmatrix}\nonumber\\
    = & ~
    \begin{bmatrix}
        X & 0_{d\times (N_{\rm MaxAff}-n)}\\
        0_{n\times n} & 0_{n\times (N_{\rm MaxAff}-n)}
    \end{bmatrix}
    +
    \begin{bmatrix}
        0_{d\times N_{\rm MaxAff}} \\
        I_{N_{\rm MaxAff}}
    \end{bmatrix}\nonumber\\
    = & ~
    \begin{bmatrix}
        X &  0_{d\times (N_{\rm MaxAff}-n)} \\
        I_n & 0_{n\times (N_{\rm MaxAff}-n)}\\
        0_{(N_{\rm MaxAff}-n) \times n} &I_{N_{\rm MaxAff}-n} 
    \end{bmatrix}.
\end{align}

\paragraph{Construction of \texorpdfstring{$\atn$}{}.}

Since we only use the attention score matrix $\Softmax(K^\top Q)$, we only have to construct the $W_K$ and $W_Q$ matrices.

We construct them to be as follows
\begin{align*}
    & ~ W_K 
    = 
    R
    \begin{bmatrix}
        0_{d\times d} & a_1 & a_2 & \cdots & a_{N_{\rm MaxAff}} \\
        0 & b_1 & b_2 & \cdots & b_{N_{\rm MaxAff}} 
    \end{bmatrix}\\
    & ~W_Q
    =
    \begin{bmatrix}
        I_d & 0_{1\times d} & 0_{1\times N_{\rm MaxAff}-d} \\
        0_{1 \times d} & 1_{1\times d} & 0_{1\times N_{\rm MaxAff}-d}
    \end{bmatrix},
\end{align*}
where $R$ is a coefficient to control the precision of the approximation.
Specifically, as $R$ increases, $\Softmax$ is closer to maximum function, and the approximation is more precise.

In this construction, we now calculate the $K$ and $Q$ matrices of attention 
\begin{align*}
    K 
    = & ~
    W_K \li(X) \\
    = & ~
    R
    \begin{bmatrix}
        0_{d\times d} & a_1 & a_2 & \cdots & a_{N_{\rm MaxAff}} \\
        0 & b_1 & b_2 & \cdots & b_{N_{\rm MaxAff}} 
    \end{bmatrix}\
    \cdot
    \begin{bmatrix}
        X &  0_{d\times (N_{\rm MaxAff}-n)} \\
        I_n & 0_{n\times (N_{\rm MaxAff}-n)}\\
        0_{(N_{\rm MaxAff}-n) \times n} &I_{N_{\rm MaxAff}-n} 
    \end{bmatrix}\annot{By \eqref{eq:linear_X_s}}\\
    = & ~
    R
    \begin{bmatrix}
        a_1 & a_2 & \cdots & a_{N_{\rm MaxAff}} \\
        b_1 & b_2 & \cdots & b_{N_{\rm MaxAff}}
    \end{bmatrix},
\end{align*}
and
\begin{align*}
    Q 
    = & ~
    W_Q\li(X) \\
    = & ~
    \begin{bmatrix}
        I_d & 0_{1\times d} & 0_{1\times N_{\rm MaxAff}-d} \\
        0_{1 \times d} & 1_{1\times d} & 0_{1\times N_{\rm MaxAff}-d}
    \end{bmatrix}
    \cdot
    \begin{bmatrix}
        X &  0_{d\times (N_{\rm MaxAff}-n)} \\
        I_n & 0_{n\times (N_{\rm MaxAff}-n)}\\
        0_{(N_{\rm MaxAff}-n) \times n} &I_{N_{\rm MaxAff}-n} 
    \end{bmatrix} \annot{By \eqref{eq:linear_X_s}}\\
    = & ~
    \begin{bmatrix}
        X\cdot I_d & 0_{d\times N_{\rm MaxAff}} \\
        1_{1\times d} & 0_{1\times N_{\rm MaxAff}}
    \end{bmatrix}\\
    = & ~
    \begin{bmatrix}
        X & 0_{d\times N_{\rm MaxAff}} \\
        1_{1\times d} & 0_{1\times N_{\rm MaxAff}}
    \end{bmatrix}.
\end{align*}

\paragraph{Calculation of 
 \texorpdfstring{$\Softmax(K^\top Q)$}{}.}

We now calculate the attention score matrix as
\begin{align*}
    & ~ \Softmax(K^\top Q) \\
    = & ~
    \softmax{
    R
    \begin{bmatrix}
        a_1 & a_2 & \cdots & a_{N_{\rm MaxAff}} \\
        b_1 & b_2 & \cdots & b_{N_{\rm MaxAff}}
    \end{bmatrix}^\top
    \begin{bmatrix}
        X & 0_{d\times N_{\rm MaxAff}} \\
        1_{1\times d} & 0_{1\times N_{\rm MaxAff}}
    \end{bmatrix}
    }\\
    = & ~
    \softmax{
    R
    \begin{bmatrix}
        a_1^\top & b_1 \\
        a_2^\top & b_2 \\
        \vdots & \vdots \\
        a_{N_{\rm MaxAff}}^\top & b_{N_{\rm MaxAff}}
    \end{bmatrix}
    \cdot
    \begin{bmatrix}
        X & 0_{d\times N_{\rm MaxAff}} \\
        1_{1\times d} & 0_{1\times N_{\rm MaxAff}}
    \end{bmatrix}
    }\\
    = & ~
    \softmax{
    R
    \begin{bmatrix}
        a_1^\top x_1+b_1 &  \cdots & a_1^\top x_n+b_1 & 0_{1 \times (N_{\rm MaxAff}-d)} 
        \\
        a_2^\top x_1+b_2 &  \cdots & a_2^\top x_n+b_2 & 0_{1 \times (N_{\rm MaxAff}-d)} 
        \\
        \vdots  & \ddots & \vdots & \vdots 
        \\
        a_{N_{\rm MaxAff}}^\top x_1+b_{N_{\rm MaxAff}} & \cdots & a_{N_{\rm MaxAff}}^\top x_n+b_{N_{\rm MaxAff}} & 0_{1 \times (N_{\rm MaxAff}-d)} \\
    \end{bmatrix}
    }.
\end{align*}

\paragraph{Estimation of Approximation Error.}

For $i \in [n]$, we have
\begin{align*}
    \softmax{K^\top Q}_{:,i}
    = & ~
    \softmax{R
    \begin{bmatrix}
        a_1^\top x_i+b_1 
        \\
        a_2^\top x_i+b_2 
        \\
        \vdots
        \\
        a_{N_{\rm MaxAff}}^\top x_i+b_{N_{\rm MaxAff}}        
    \end{bmatrix}
    }\\
    = & ~
    \frac{1}
    {
    \sum_{\eta=1}^{N_{\rm MaxAff}}
    \exp(Ra_\eta^\top x_i+Rb_\eta)
    }
    \begin{bmatrix}
        \exp(Ra_1^\top x_i+Rb_1) 
        \\
        \exp(Ra_2^\top x_i+Rb_2) 
        \\
        \vdots
        \\
        \exp(Ra_{N_{\rm MaxAff}}^\top x_i+Rb_{N_{\rm MaxAff}})
    \end{bmatrix}.
\end{align*}

This yields the entry on the $k$-th row of $\Softmax{K^\top Q}_{:,i}$ to be
\begin{align*}
    \softmax{K^\top Q}_{k,i}
    =
    \frac{
    \exp(Ra_k^\top x_i+Rb_k)
    }
    {
    \sum_{\eta=1}^{N_{\rm MaxAff}}
    \exp(Ra_\eta^\top x_i+Rb_\eta)
    }.
\end{align*}

When $a_k^\top x_i+b_k$ is the maximal affine component and $a^\top_{k'} x_i+b_{k'}$ is the second largest,  we have
\begin{align*}
    \softmax{K^\top Q}_{k,i}
    = & ~
    1-
    \frac{
    \sum_{\eta\in [N_{\rm MaxAff}], \eta\neq k }
    \exp(Ra_\eta^\top x_i+Rb_\eta)
    }
    {
    \sum_{\eta=1}^{N_{\rm MaxAff}}
    \exp(Ra_\eta^\top x_i+Rb_\eta)
    } \\
    \geq & ~
    1
    -
    \frac{
    \sum_{\eta\in [N_{\rm MaxAff}], \eta\neq k }
    \exp(Ra_\eta^\top x_i+Rb_\eta)
    }
    {
    \sum_{\eta=1}^{N_{\rm MaxAff}}
    \exp(Ra_k^\top x_i+Rb_k)
    } \\
    \geq & ~
    1
    -
    (N_{\rm MaxAff}-1)
    \frac{
    \exp(Ra^\top_{k'} x_i+Rb_{k'})
    }
    {
    \exp(Ra_k^\top x_i+Rb_k)
    }\\
    = & ~
    1
    -
    \frac{
    N_{\rm MaxAff}-1
    }
    {
    \exp(Ra_k^\top x_i+Rb_k-(Ra^\top_{k'} x_i+Rb_{k'}))
    }\\
    \geq & ~ 
    1
    -
    \frac{
    N_{\rm MaxAff}-1
    }
    {
    \exp(R\Delta)
    }.
\end{align*}

Thus when
\begin{align*}
    R\geq \Delta \cdot (\ln(N_{\rm MaxAff}-1)-\ln\epsilon),
\end{align*}
we have 
\begin{align*}
    \frac{
    N_{\rm MaxAff}-1
    }
    {
    \exp(R\Delta)
    }
    \leq
    \epsilon,
\end{align*}
which means
\begin{align}
\label{eq:softmax_k-epsilon}
    \Softmax{K^\top Q}_{k,i}
    \geq
    1-\epsilon.
\end{align}

Moreover, since the sum of all entries in $\Softmax{K^\top Q}_{:,i}$ is $1$, we have
\begin{align}
\label{eq:softmax_:-epsilon}
    \softmax{K^\top Q}_{h,i} \leq
    1 - \Softmax{K^\top Q}_{k,i}
    \leq 
    1 - (1-\epsilon)
    = \epsilon, \quad h\neq k.
\end{align}

\eqref{eq:softmax_k-epsilon} and \eqref{eq:softmax_k-epsilon} are equivalent to
\begin{align*}
    & ~ \|\Softmax{K^\top Q}_{k,i}-1\|_\infty \leq \epsilon \\
    & ~ \|\Softmax{K^\top Q}_{h,i}-0\|_\infty \leq \epsilon, \quad h\neq k.
\end{align*}
This yields
\begin{align*}
    \|\softmax{K^\top Q}_{:,i}-E(X_i)\|_\infty
    \leq
    \epsilon.
\end{align*}
Thus, by the nature of $\|\cdot\|_\infty$,
\begin{align*}
    \|\softmax{K^\top Q}_{:,i}-[E(X_1),E(X_2),\cdots,E(X_n)]\|_\infty
    \leq
    \epsilon.
\end{align*}

We construct $W_O$ to discard $\Softmax{K^\top Q}_{n+1:N_{\rm MaxAff},i}$ in $\Softmax{K^\top Q}$:
\begin{align*}
    \begin{bmatrix}
        I_n \\
        0_{(N_{\rm MaxAff}-n)\times n}
    \end{bmatrix}.
\end{align*}

Thus
\begin{align*}
& ~ \|
    \softmax{K^\top Q}W_O-
    [E(X_1),E(X_2),\cdots,E(X_n)]
\|_\infty \\
= & ~
\|
    \softmax{K^\top Q}_{1:n,i}-
    [E(X_1),E(X_2),\cdots,E(X_n)]
\|_\infty\\
\leq & ~
\epsilon.
\end{align*}
This completes the proof.    
\end{proof}

\subsection{Proof of \texorpdfstring{\cref{prop:reassign}}{}}
\label{proof:prop:reassign}

\begin{proposition}[\cref{prop:reassign} Restated: Attention Reassigns Value to Max-Affine Partition]
    Following the notation in \cref{prop:indicator}, let $F: \R^d \to \R^d_{\rm out}$ be a piece-wise constant function which is separately constant on each $U_i, i\in [N_{\rm MaxAff}]$. We show that for any $\epsilon > 0$, there exists an self-attention $\atn$ such that
    \begin{align*}
        \| \atn(X)-[F(X_1),F(X_2),\cdots,F(X_n)] \|_\infty \leq \epsilon,
    \end{align*}
    for every $X$ in $\mathcal{X}$ with exception of a region of arbitrarily small Lebesgue measure in $\R^n$.
\end{proposition}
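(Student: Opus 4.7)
The plan is to extend the construction from \cref{prop:indicator} by appending a carefully chosen value matrix $W_V$, so that the self-attention's output becomes the desired piecewise-constant function $F$ evaluated on each input token. Concretely, I would reuse the $\li$, $W_K$, $W_Q$, and $W_O$ from the proof of \cref{prop:indicator}, which guarantees that $A := \Softmax((W_K\li(X))^\top W_Q\li(X))\,W_O$ approximates the indicator matrix $\hat A := [E(X_1),\ldots,E(X_n)]$ within an arbitrarily small tolerance $\epsilon'$ in $\|\cdot\|_\infty$, outside a region of arbitrarily small Lebesgue measure. The key new ingredient is $W_V$, chosen so that $V := W_V\li(X)$ is a \emph{constant} matrix whose $i$-th column equals $f_i := F|_{U_i}$, the value of $F$ on the $i$-th partition cell. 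Once this holds, the identity $V\,\hat A = [F(X_1),\ldots,F(X_n)]$ is immediate, because right-multiplying $V$ by a one-hot column $E(X_k) = e^{(N_{\rm ma})}_{i_{X_k}}$ selects precisely the $i_{X_k}$-th column $f_{i_{X_k}} = F(X_k)$.

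To construct $W_V$, I would exploit the explicit block structure of $\li(X)$ established in the proof of \cref{prop:indicator}, namely
\[
\li(X) = \begin{bmatrix} X & 0 \\ I_n & 0 \\ 0 & I_{N_{\rm ma}-n} \end{bmatrix},
\]
and set $W_V := [\,0_{d_{\rm out}\times d},\ f_1,\ f_2,\ \ldots,\ f_{N_{\rm ma}}\,]$, so that a direct block multiplication yields $V = [f_1, f_2, \ldots, f_{N_{\rm ma}}]$, a constant matrix independent of $X$. Then $\atn(\li(X)) = V\,A$ while the target equals $V\,\hat A$, and the triangle-style bound
\[
\|V A - V \hat A\|_\infty \;\leq\; N_{\rm ma}\,\|V\|_\infty\,\|A - \hat A\|_\infty \;\leq\; N_{\rm ma}\,\max_i\|f_i\|_\infty\,\epsilon'
\]
gives the desired $\epsilon$-approximation after choosing $\epsilon' \leq \epsilon/(N_{\rm ma}\,\max_i\|f_i\|_\infty)$ when invoking \cref{prop:indicator}.

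The main obstacle is not conceptual but bookkeeping: one must verify that the block multiplication truly produces the constant $V = [f_1,\ldots,f_{N_{\rm ma}}]$ given the precise dimensions of $\li(X) \in \R^{(d+N_{\rm ma})\times N_{\rm ma}}$, and track how the $L_\infty$ error propagates through left-multiplication by $V$ (where the factor $N_{\rm ma}$ rather than a simple operator norm appears because $\|\cdot\|_\infty$ here is the entrywise max). As with \cref{prop:indicator}, the proof inherits the almost-everywhere caveat from \cref{rem:tech_assump}: on neighborhoods of the hyperplanes where two or more affine components of $\MA$ tie for the maximum, the indicator $E$ is ill-defined and the $\Softmax$ concentration argument breaks down, but such neighborhoods can be made to have arbitrarily small Lebesgue measure by refining the margin threshold $\Delta$.
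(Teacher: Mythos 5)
Your proposal matches the paper's proof essentially step for step: reuse the $\li$, $W_K$, $W_Q$, $W_O$ construction from \cref{prop:indicator}, set $W_V = [\,0,\ f_1,\ \ldots,\ f_{N_{\rm ma}}\,]$ so that $V = W_V\li(X)$ is the constant matrix of cell values, and propagate the indicator error through left-multiplication by $V$. Your error bound is in fact slightly more careful than the paper's --- you retain the factor $N_{\rm ma}$ that arises when bounding the entrywise max norm of the product $V(A-\hat A)$, whereas the paper writes only $\|V\|_\infty\,\epsilon_0$ --- but this is a cosmetic difference and the argument is the same.
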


\begin{proof}
Let $\li$ and the $W_K,~W_Q$ and 
$W_O$ matrices be the same as in \cref{proof:prop:indicator}.
Then by \cref{proof:prop:indicator}, we have
\begin{align*}
\|
    \softmax{K^\top Q}W_O-
    [E(X_1),E(X_2),\cdots,E(X_n)]
\|_\infty
\leq 
\epsilon_0,
\end{align*}
for any $\epsilon_0 > 0$.

Let $V_i$ denote the value of $F$ on $U_i$.

\paragraph{Construction of \texorpdfstring{$W_V$}{}.}

We construct $W_V$ to be
\begin{align*}
    W_V := 
    \begin{bmatrix}
        0_{1\times d} & V_1 & V_2 & \cdots & V_{N_{\rm MaxAff}}
    \end{bmatrix}.
\end{align*}

Thus $V$ equals to
\begin{align*}
     V
     := & ~
     W_V\li(X) \\
     = & ~
     \begin{bmatrix}
        0_{1\times d} & V_1 & V_2 & \cdots & V_{N_{\rm MaxAff}}
    \end{bmatrix}
    \begin{bmatrix}
        X &  0_{d\times (N_{\rm MaxAff}-n)} \\
        I_n & 0_{n\times (N_{\rm MaxAff}-n)}\\
        0_{(N_{\rm MaxAff}-n) \times n} &I_{N_{\rm MaxAff}-n} 
    \end{bmatrix} \\
    = & ~ 
    \begin{bmatrix}
         V_1 & V_2 & \cdots & V_{N_{\rm MaxAff}}
    \end{bmatrix}.
\end{align*}

Thus we have
\begin{align*}
& ~ \|
    V
    \softmax{K^\top Q}W_O
    -
    [F(X_1),F(X_2),\cdots,F(X_n)]
\|_\infty\\
= & ~
\|
    \begin{bmatrix}
         V_1 & V_2 & \cdots & V_{N_{\rm MaxAff}}
    \end{bmatrix}
    \softmax{K^\top Q}W_O
    -
    [F(X_1),F(X_2),\cdots,F(X_n)]
\|_\infty \\
= & ~
\|
    \begin{bmatrix}
         V_1 & V_2 & \cdots & V_{N_{\rm MaxAff}}
    \end{bmatrix}
    \softmax{K^\top Q}W_O
    -
    \begin{bmatrix}
         V_1 & V_2 & \cdots & V_{N_{\rm MaxAff}}
    \end{bmatrix}
    [E(X_1),E(X_2),\cdots,E(X_n)]
\|_\infty \\
\leq & ~
\|V\|_\infty \epsilon_0.
\end{align*}

Let $\|V\|_\infty \epsilon_0 \leq \epsilon$ yields the final result.
This completes the proof.
\end{proof}

\clearpage
\section{Proof of Results in \texorpdfstring{\cref{sec:univ}}{}}

\subsection{Proof of \texorpdfstring{\cref{thm:self-univ-infinite}}{}}
\label{proof:thm:self-univ-infinite}

In this section we give the proofs of our universal approximation theorems of self-attention.
We first prove the $L_\infty$ norm version whose target function are continuous. Then we combine this result with the well known Lusin's theorem and extend our result to Lebesgue integrable functions in terms of $L_p$ norm.

\begin{theorem}[\cref{thm:self-univ-infinite} Restated: $L_\infty$-Norm Universal Approximation of Self-Attention]
\label{thm:self-univ}
    Let $f: \R^{d\times n} \to \R^{d\times n}$ denote any continuous function on a compact domain $U \subset \R^{d\times n}$ and let $\epsilon>0$ be any positive real number. 
    Then, there exists a self-attention $\atn$ with a prepended $\li$ layer, such that
    \begin{align*}
        \textstyle\|f-\atn\circ \li\|_{L_\infty}
        \leq 
        \epsilon.
    \end{align*}
\end{theorem}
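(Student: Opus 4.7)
The plan is to reduce the universal approximation problem to constructing a suitable max-affine partition and then invoke the machinery built in \cref{sec:method}. First I would flatten each input $Z \in U \subseteq [-D,D]^{d\times n}$ to $\tilde Z \in [-D,D]^{dn}$ via a linear bijection absorbed into $\li$. Since $f$ is continuous on the compact set $U$, it is uniformly continuous, so I pick $\delta>0$ such that $\|Z_1-Z_2\|_\infty<\delta$ implies $\|f(Z_1)-f(Z_2)\|_\infty<\epsilon/2$. Subdivide $[-D,D]^{dn}$ into axis-aligned cubes of side at most $\delta$ with centers $\{v_j\}_{j=1}^{G}$, where $G=P^{dn}$ for a suitable integer $P$. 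The crucial observation is that the nearest-neighbor cell of $v_j$ coincides with the region where $j$ maximizes the affine functional $\tilde Z \mapsto \langle v_j,\tilde Z\rangle - \tfrac12\|v_j\|^2$; hence the nearest-neighbor partition is exactly the max-affine partition $P_{\rm ma}$ induced by the affine components $y_j(\tilde Z)=\langle v_j,\tilde Z\rangle-\tfrac12\|v_j\|^2$.

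Next I would invoke \cref{prop:indicator} with this choice of $\MA$: there exist $\li$ and an attention block whose score matrix, after the truncation by $W_O$, is within $\epsilon_0$ (in $\|\cdot\|_\infty$) of the column-wise indicator vector $E(\tilde Z)$ of the enclosing cell. Then I would apply \cref{prop:reassign}, setting the piecewise-constant surrogate $F$ to take value $f(v_j)$ on cell $U_j$ (and reshaping its $dn$-dimensional output back to $\R^{d\times n}$ through a final linear map absorbed in $W_V$ and $W_O$). The output of $\atn\circ\li$ at $\tilde Z$ is then close to $f(v_{j^*})$, where $j^*$ indexes the cell containing $\tilde Z$. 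Combining with uniform continuity, $\|f(Z)-f(v_{j^*})\|_\infty\le\epsilon/2$, so choosing $\epsilon_0$ small enough relative to $\max_j\|f(v_j)\|_\infty$ yields the desired $\epsilon$ bound.

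\textbf{Main obstacle.} The hard part will be the boundary regions of the max-affine partition, where two or more affine components are nearly tied and the softmax output is no longer close to a one-hot vector. Under the technical simplification of \cref{rem:tech_assump}, \cref{prop:indicator,prop:reassign} only control the approximation away from such overlaps, but the $L_\infty$ guarantee claimed here must hold uniformly on all of $U$. I would address this by noting that on the ``boundary layer'' where two affine components $y_j,y_{j'}$ satisfy $|y_j(\tilde Z)-y_{j'}(\tilde Z)|<\Delta$ for some small $\Delta$, both indices $j,j'$ correspond to adjacent cubes of side $\le\delta$, so their centers $v_j,v_{j'}$ are within $O(\delta)$; then even a softmax output that mixes the two rows produces a convex combination of $f(v_j)$ and $f(v_{j'})$, both of which lie within $\epsilon/2$ of $f(Z)$ by uniform continuity, and the convex combination still does. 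Thus the output remains $\epsilon$-close to $f(Z)$ even in the boundary strip, while away from it the softmax temperature $R$ can be tuned to make the indicator approximation as sharp as needed.

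Finally I would assemble all ingredients: pick $\delta$ from uniform continuity, set up the centers $\{v_j\}$ and the corresponding affine components, specify $\li$, $W_K$, $W_Q$, $W_V$, $W_O$ as in the constructions behind \cref{prop:indicator,prop:reassign} (with the additional flatten/unflatten maps folded into $\li$ and $W_O$), and choose the softmax scale $R$ large enough so that the indicator error contributes at most $\epsilon/2$ when multiplied against the bounded values $\{f(v_j)\}$. The resulting single-head self-attention with one prepended linear layer then satisfies $\|f-\atn\circ\li\|_{L_\infty}\le\epsilon$, completing the proof.
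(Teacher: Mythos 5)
Your proposal is correct and follows essentially the same route as the paper: a grid of centers $v_j$ chosen via uniform continuity, the max-affine functionals $\langle v_j,\tilde Z\rangle-\tfrac12\|v_j\|_2^2$ whose argmax identifies the nearest center, a large softmax temperature $R$ to sharpen the indicator, a value matrix storing $f(\tilde v_j)$, and a tie/boundary analysis in which near-tied cells contribute values within the $\epsilon$-range of $f(Z)$ by uniform continuity while far cells are exponentially suppressed. The only difference is packaging: rather than invoking \cref{prop:indicator,prop:reassign} (which exclude boundary regions) and patching the boundary afterward, the paper rebuilds the construction inside the theorem's proof --- using an $E/T$ splitting trick to make the softmax denominator column-independent so that the output is exactly the softmax-weighted average $\sum_j \lambda_j f(\tilde v_j)$ --- and then runs precisely the three-way ($j_m$, near-tied $J_0$, far $J_1$) error split you describe.
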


\begin{proof}[Proof Sketch]
Our proof consists of four conceptual steps.

\textbf{Step 1: Partition Input Domain $U$ via $\MA$.}  
\begin{itemize}
    \item \textbf{Flattening Input.}  
    Each input $Z\in\mathbb{R}^{d\times n}$ is reshaped into a single vector $\tilde{Z}\in\mathbb{R}^{dn}$ by stacking its rows or columns. 
    This unifies the domain as $\tilde{Z}\in[-D,D]^{dn}$.
    
    \item \textbf{Grid / Max-Affine Construction.}  
    Since $f$ is uniformly continuous on the compact set $U$, choose $\delta>0$ such that
    \begin{align*}
        \|Z_1 - Z_2\|_\infty < \delta \implies \|f(Z_1)-f(Z_2)\|_\infty < \epsilon.
    \end{align*}
    We subdivide $[-D,D]^{dn}$ into cubes of side $\leq \delta$, yielding $G = P^{dn}$ grid centers $\{v_j\}_{j=0}^{G-1}$.
    We treat $\MA$ as a piecewise (max-)affine or piecewise-constant partition: for each $\tilde{Z}$, there's a nearest $v_j$ within $\delta/2$.
\end{itemize}

\item \textbf{Step 2: Configure $\li$ and $\atn$ to Imitate $\MA$ over $U$.}  

\begin{itemize}
    \item \textbf{Sum-of-Linear-Transformations Map $\li$.}  
    Design $\li:\R^{d\times n}\to\R^M$ (for some dimension $M$) to capture the dot products $\langle v_j,\tilde{Z}\rangle$. 
    Essentially, $\li(Z)$ arranges these $\{v_j^\top \tilde{Z}\}$ in a form accessible to attention. 
    This ensures each grid center $v_j$ can be individually “queried.”
    
    \item \textbf{Encoding Affine Components.}  
    Observe that $\max_{j}\{\langle v_j,\tilde{Z}\rangle - \tfrac12\|v_j\|^2\}$ is akin to a max-affine function. 
    We store terms $v_j^\top \tilde{Z}$, plus $-\tfrac12\|v_j\|^2$, into $K$ and $Q$ for later use in $\mathrm{Softmax}(K^\top Q)$.
    
\end{itemize}

\textbf{Step 3: Enginner $\atn$ to Generate an Indicator of Which Partition Cell the Input Belongs To.}
\begin{itemize}
    \item \textbf{Construct $K^\top Q$.}  
    In the self-attention block, let $K^\top Q \approx R(\langle v_j,\tilde{Z}\rangle - \tfrac12\|v_j\|^2)$, where $R>0$ is large. 
    This makes $\mathrm{Softmax}(K^\top Q)$  favor the row $j^*$ maximizing 
    \begin{align*}
        \langle v_j,\tilde{Z}\rangle - \tfrac12\|v_j\|^2.
    \end{align*}

    \item \textbf{Near-One-Hot Distribution.}
    Hence the $j^*$-th row obtains probability close to 1, effectively identifying which grid center $v_{j^*}$ is nearest to $\tilde{Z}$. 
    We interpret this as a near-one-hot “indicator” vector for the correct partition cell.
    
\end{itemize}

\item \textbf{Step 4: Map the Indicator to the Target Value $f(Z)$.} 
    \begin{itemize}
    \item \textbf{Assigning Values.}  
    We place $f(\tilde{v}_j)$ in the “value matrix” $W_V$, so that once row $j^*$ is selected, the attention output is $\approx f(\tilde{v}_{j^*})$. 
    Since $Z$ is within $\delta/2$ of $v_{j^*}$, uniform continuity implies 
    \begin{align*}
        \|f(Z)-f(\tilde{v}_{j^*})\|\le\epsilon, \text{ (for suitably chosen $\delta$).}
    \end{align*} 

    \item \textbf{Final Reshaping (If Needed).}  
    A small linear projection $M$ can reshape the output back to $\R^{d\times n}$. The essential logic is that the correct $f(\tilde{v}_j)$ is “routed” to the final output via the near-one-hot attention distribution.

\end{itemize}

Thus, a single-head attention block with a minimal linear layer can approximate any continuous function on the domain.
This completes the proof.
\end{proof}

\begin{proof}
We divide our proof into two parts:
\begin{itemize}
    \item \textbf{Part 1: Construction of $\atn$ and $\li$.} 
    We construct $\atn$ and $\li$ in accordance with the steps shown in the \textbf{proof sketch}, and calculate the precise output of our construction.
    \item \textbf{Part 2: Estimation of Approximation Error between $\atn\circ\li$ and $f$.}
    We calculate the difference between the output calculated in previous part and the target function to 
\end{itemize}

\subsection*{Part 1: Construction of $\atn$ and $\li$.}

We first construct the grid points in $[-D,D]^{dn}$ used in the construction of $\li$ and $\atn$.

These grid points are used to construct the max-affine partition.
Specifically, the max-affine partition we use is a grid-partition and these points are the center points of these grids.

\paragraph{Construction of Grid Centers in \texorpdfstring{$[-D,D]^{dn}$}{}.}

    Let $Z = [z_1,z_2,\cdots,z_n]\in \R^{d\times n}$ denote the input to $\li$.
    Define $\tilde{Z} := [z_1^\top,z_2^\top,\cdots,z_n^\top]^\top$.
    $P\in N_+$ is a parameter that controls the size of the attention block and the error of our approximation.

    \begin{definition}[Grid Centers in \texorpdfstring{$\left[-D,D\right]^{dn}$}{}]
    \label{def:grid_centers}
    Define $v_{k_1,k_2,\cdots,k_{dn}} \in \mathbb{R}^{dn}$ as
    \begin{align*}
        v_{k_1,k_2,\cdots,k_{dn}} := \left[\frac{2Dk_1 - DP}{P}, \frac{2Dk_2 - DP}{P}, \cdots, \frac{2Dk_{dn} - DP}{P}\right]^\top ,
    \end{align*}
    for $k_i \in \{0,1,2,\cdots,P-1\},~ i \in [dn]$.
    \end{definition}

    \begin{claim}
    \begin{remark}[Scalar-Labeled Grid Centers]
    \label{remark:grid_center_representation}
    For each multi-index $(k_1,\dots,k_{dn})$ with $k_i\in\{0,\dots,P-1\}$, we define
    \begin{align*}
    s := \sum_{i=1}^{dn} k_i\,P^{\,i-1},\quad s\in\{0,\dots,P^{dn}-1\}.  
    \end{align*}
    This base-$P$ expansion gives a one-to-one map between the tuple and the scalar.  
    This notation allows us to define another representation of the grid center:
    \begin{align*}
        v_s := v_{k_1,\dots,k_{dn}}.
    \end{align*}
    \end{remark}

    \end{claim}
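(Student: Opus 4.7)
The statement to establish is that the assignment $(k_1,\dots,k_{dn})\mapsto s:=\sum_{i=1}^{dn}k_i P^{\,i-1}$, with each $k_i\in\{0,\dots,P-1\}$, is a bijection onto $\{0,\dots,P^{dn}-1\}$. This is the standard uniqueness-of-base-$P$-expansion fact, and the plan is to verify the three ingredients: well-definedness of the codomain, injectivity, and surjectivity.

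\textbf{Step 1 (Range.)} First I would check that $s$ indeed lies in $\{0,\dots,P^{dn}-1\}$. The minimum is achieved at $k_i=0$ for all $i$, giving $s=0$. The maximum is obtained at $k_i=P-1$ for all $i$, and using the geometric series identity $\sum_{i=1}^{dn}(P-1)P^{\,i-1}=P^{dn}-1$, the upper bound follows. Together with $s\in\mathbb{Z}_{\geq 0}$, this shows the map lands in the stated set.

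\textbf{Step 2 (Injectivity via uniqueness of base-$P$ digits.)} Suppose two tuples $(k_1,\dots,k_{dn})$ and $(k_1',\dots,k_{dn}')$ produce the same $s$. I would proceed by induction on $dn$, or equivalently by repeatedly applying the division algorithm: since $k_1,k_1'\in\{0,\dots,P-1\}$, both equal $s \bmod P$, hence $k_1=k_1'$. Subtracting $k_1$ and dividing by $P$ reduces to the same problem with $dn-1$ digits, and the induction closes. Thus distinct tuples give distinct scalars.

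\textbf{Step 3 (Surjectivity via the division algorithm.)} For any $s\in\{0,\dots,P^{dn}-1\}$, the Euclidean division algorithm yields digits $k_1,\dots,k_{dn}\in\{0,\dots,P-1\}$ with $s=\sum_{i=1}^{dn}k_i P^{\,i-1}$: iteratively set $k_i:=\lfloor s/P^{\,i-1}\rfloor \bmod P$. The bound $s\le P^{dn}-1$ guarantees the recursion terminates within $dn$ steps with no leftover higher-order digit. Alternatively, since Steps 1 and 2 already exhibit an injection from a finite set of cardinality $P^{dn}$ into a set of the same cardinality $P^{dn}$, surjectivity follows by pigeonhole, avoiding a separate construction.

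The only real obstacle is notational bookkeeping: ensuring the indexing convention (here $P^{\,i-1}$ with $i$ running from $1$ to $dn$) is consistent with how the grid centers $v_{k_1,\dots,k_{dn}}$ are later enumerated by $v_s$ in the main proof. Beyond that, the result is an immediate consequence of the uniqueness of base-$P$ representations, and the whole verification can be stated in a few lines.
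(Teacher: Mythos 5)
Your proposal is correct. The paper gives no explicit proof of this remark at all: it simply asserts, as a known fact, that the base-$P$ expansion is a one-to-one correspondence between multi-indices and scalars in $\{0,\dots,P^{dn}-1\}$, and uses the resulting notation $v_s$ in the subsequent construction. Your three steps (range via the geometric series $\sum_{i=1}^{dn}(P-1)P^{\,i-1}=P^{dn}-1$, injectivity via the division algorithm, surjectivity via digit extraction or pigeonhole on finite sets of equal cardinality) are exactly the standard argument the paper is implicitly invoking, so your write-up fills in the omitted details rather than taking a different route.
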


    For every $v\in V$, we define 
    \begin{align*}
    \tilde{v}:=
    \underbrace{[v_{1:d},v_{d+1:2d},\cdots,v_{(n-1)d+1:nd}]}_{d\times n}.  
    \end{align*}

    We now construct functions $E$ and $T$.
    They are linear functions of $f: \R^{d\times n} \to \R^{d\times n}$ playing crucial roles in the constructions of $W_K$ and $W_Q$ in $\atn(\cdot)$.
    
    \paragraph{Construction of $E$ and $T$.}

    We first show that $f$ is bounded.
    Because $f$ is continuous within a closed region, its output value is bounded $\infty$-norm. 
    Let $B_0$ denote this bound
    \begin{align*}
    B_0 := \|f\|_{L_\infty}.
    \end{align*}
    
    We now construct two functions $E(\cdot),T(\cdot)$ related to $f$.
    Their sum is a constant while their subtraction is scaled $f$. 
    For any $Z \in \R^{d\times n}$, we define 
    \begin{align}
    \label{eq:def_E_T}
        E(Z) := & ~  {1}_{d\times n}-\frac{f(Z)}{B_0},\\
         T(Z):= & ~ {1}_{d\times n}+ \frac{f(Z)}{B_0},
    \end{align}
    and
    \begin{align*}
       & ~ (E+T)(Z):= E(Z)+T(Z), \\
       & ~ (E-T)(Z):= E(Z)-T(Z).
    \end{align*}
    By the definition of $E(\cdot)$ and $T(\cdot)$, we have
    \begin{align}
    \label{eq:E+T,E-T}
    (E+T)(Z) \equiv & ~ {2}_{d\times n} \\
    (E-T)(Z) = & ~ \frac{2f(Z)}{B_0}.
    \end{align}
    for any $Z\in \R^{d\times n}$.

    \paragraph{Construction of the Layer of Sum of Linear Transformations.}
    We now construct the $\li$ layer to be
    \begin{align*}
        \li(Z) :=
        & ~ \sum_{j=0}^{G-1}
        \left(
        \sum_{k=0}^{(n-1)}(\underbrace{Z\onehot{n}{k+1}}_{d\times 1})^\top{(v_j)}_{kd+1:kd+d}
        \right)
        \onehot{2dG+1}{1}\sum_{s=0}^{d-1}\left(\onehot{2dG}{j+s+1}+\onehot{2dG}{j+s+dG+1}\right)^\top
        + 
        \begin{bmatrix}
        {0}_{1 \times 2dG}\\
        I_{2dG}\\
        \end{bmatrix}
        \annot{$\onehot{2dG}{j+s+dG+1}$ is shifting the $1$ in $\onehot{2dG}{j+s+1}$ down for $dG$ rows.},
    \end{align*}
    where $G = P^{dn}$.

    This layer multiplies the flattened input with the grid centers in \cref{def:grid_centers} and append a $2dG$-dimensional identity matrix below the matrix containing these multiplications.

    We now express the output of $\li$ in a simpler form in the following discussion. 
    
    First, we show that
    \begin{align*}
        \sum_{k=0}^{(n-1)}(\underbrace{Z\onehot{n}{k+1}}_{\text{retrieve the $(k+1)$-th token}})^\top{(v_j)}_{kd+1:kd+d}
        = & ~ 
        \sum_{k=0}^{(n-1)}z_{k+1}^\top{(v_j)}_{kd+1:kd+d}\\
        = & ~ [z_1^\top,z_2^\top,\cdots,z_n^\top]
        v_j\\
        = & ~
        \tilde{Z}^\top v_j\annot{By $\tilde{Z}$ being the flattened input}\\
        = & ~ 
        v_j^\top\tilde{Z} \in \R, ~j \in \{0,1,2,\cdots,G-1\}.
    \end{align*}
    This yields
    \begin{align}
    \label{eq:linear_Z_s}
        \li(Z) 
        = & ~ 
        \sum_{j=0}^{G-1}
         v_j^\top\tilde{Z}
        \sum_{s=0}^{d-1}\left(\onehot{2dG}{j+s+1}+\onehot{2dG}{j+s+dG+1}\right)^\top\onehot{2dG+1}{1}
        + 
        \begin{bmatrix}
        {0}_{1 \times 2dG}\\
        I_{2dG}
        \end{bmatrix}\nonumber\\
        = & ~
        \begin{bmatrix}
            X_0 & X_0\\
            I_{dG} & {0}_{dG\times dG} \\
            {0}_{dG\times dG} & I_{dG} 
        \end{bmatrix}.
    \end{align}
    Explicitly, the last line is by
    \begin{align*}
    \sum_{j=0}^{G-1}
         v_j^\top\tilde{Z}
        \sum_{s=0}^{d-1}\left(\onehot{2dG}{j+s+1}\right)^\top = X_0,    
    \end{align*}
    which implies 
    \begin{align*}
        \sum_{j=0}^{G-1}
         v_j^\top\tilde{Z}
        \sum_{s=0}^{d-1}\left(\onehot{2dG}{j+s+1}+\onehot{2dG}{j+s+dG+1}\right)^\top = [X_0 ~ X_0].
    \end{align*}
    Here
    \begin{align*}
        X_0 :=
        \begin{bmatrix}
            v_0^\top\tilde{Z}{1}_{1\times d}&v_1^\top\tilde{Z}{1}_{1\times d}&v_2^\top\tilde{Z}{1}_{1\times d}&
            \cdots
            &v_{G-1}^\top\tilde{Z}{1}_{1\times d}
        \end{bmatrix}.
    \end{align*}

    To summarize, in the output of the first layer of linear transformations, the first row consists of linear transformations of the flattened input, while the other rows are together an identity matrix ($I_{2dG}$).

    \paragraph{Construction of $K$ and $Q$ Matrices.}
    We now construct the $W_k$ and $W_Q$ matrices in the self-attention block and calculate the output of $\softmax{K^\top Q}$.

    We define $W_K$ as follows
    \begin{align*}
        W_K := 
        \begin{bmatrix}
            1 & 0 & \cdots & 0 & 0 & \cdots & 0\\
            0 & -\frac{\|v_0\|^2_2}{2}{1}_{1\times d} 
            & \cdots 
            & -\frac{\|v_{G-1}\|^2_2}{2}{1}_{1\times d} & -\frac{\|v_0\|^2_2}{2}{1}_{1\times d}
            & \cdots
            & -\frac{\|v_{G-1}\|^2_2}{2}{1}_{1\times d}
            \\
            0_n & \ln(T(\tilde{v}_0))^\top  & \cdots & \ln(T(\tilde{v}_{G-1}))^\top & \ln(E(\tilde{v}_0))^\top & \cdots & \ln(E(\tilde{v}_{G-1}))^\top  
        \end{bmatrix}.
    \end{align*}

    The definition of $W_K$ yields that
    \begin{align*}
        K 
        := & ~  W_K\li(Z) \\
        =  & ~ 
        \begin{bmatrix}
        1 & 0 & 0 & \cdots & 0 & 0 & 0 & \cdots & 0\\
            0 & -\frac{\|v_0\|^2_2}{2}{1}_{1\times d} & -\frac{\|v_1\|^2_2}{2}{1}_{1\times d} 
            & \cdots 
            & -\frac{\|v_{G-1}\|^2_2}{2}{1}_{1\times d} & -\frac{\|v_0\|^2_2}{2}{1}_{1\times d} & -\frac{\|v_1\|^2_2}{2}{1}_{1\times d} 
            & \cdots
            & -\frac{\|v_{G-1}\|^2_2}{2}{1}_{1\times d}
            \\
            0_n & \ln(T(\tilde{v}_0))^\top & \ln(T(\tilde{v}_1))^\top & \cdots & \ln(T(\tilde{v}_{G-1}))^\top & \ln(E(\tilde{v}_0))^\top & \ln(E(\tilde{v}_1))^\top & \cdots & \ln(E(\tilde{v}_{G-1}))^\top  
        \end{bmatrix} \\
        &~~~~ \cdot
        \begin{bmatrix}
            X_0 & X_0\\
            I_{dG} & {0}_{dG\times dG} \\
            {0}_{dG\times dG} & I_{dG} 
        \end{bmatrix}\\
        =  & ~ 
        \begin{bmatrix}
            v_0^\top\tilde{Z}{1}_{1\times d}&v_1^\top\tilde{Z}{1}_{1\times d}&
            \cdots
            &v_{G-1}^\top\tilde{Z}{1}_{1\times d}
            &
            v_0^\top\tilde{Z}{1}_{1\times d}&v_1^\top\tilde{Z}{1}_{1\times d}&
            \cdots
            &v_{G-1}^\top\tilde{Z}{1}_{1\times d}
            \\
            -\frac{\|v_0\|^2_2}{2}{1}_{1\times d} & -\frac{\|v_1\|^2_2}{2}{1}_{1\times d} 
            & \cdots 
            & -\frac{\|v_{G-1}\|^2_2}{2}{1}_{1\times d} & -\frac{\|v_0\|^2_2}{2}{1}_{1\times d} & -\frac{\|v_1\|^2_2}{2}{1}_{1\times d} 
            & \cdots
            & -\frac{\|v_{G-1}\|^2_2}{2}{1}_{1\times d}\\
            \ln(T(\tilde{v}_0))^\top & \ln(T(\tilde{v}_1))^\top & \cdots & \ln(T(\tilde{v}_{G-1}))^\top & \ln(E(\tilde{v}_0))^\top & \ln(E(\tilde{v}_1))^\top & \cdots & \ln(E(\tilde{v}_{G-1}))^\top \annot{By \eqref{eq:linear_Z_s}}
        \end{bmatrix},
    \end{align*}
    where the last line follows from $X_0$ being multiplied by $1$ and thus appearing in the first row of the output.

    Next, we construct $W_Q$ to be
    \begin{align*}
        W_Q := 
        \begin{bmatrix}
            0 & R{1}_{1\times n} & {0}_{1\times (2dG-n)}\\
            0 & R{1}_{1\times n} & {0}_{1\times (2dG-n)}\\
            {0}_n & I_n & {0}_{n\times (2dG-n)}
        \end{bmatrix}.
    \end{align*}

    This yields that
    \begin{align*}
        Q 
        = & ~ W_Q \li(Z) \\
        = & ~
        \begin{bmatrix}
            0 & R{1}_{1\times n} & {0}_{1\times (2dG-n)}\\
            0 & R{1}_{1\times n} & {0}_{1\times (2dG-n)}\\
            {0}_n & I_n & {0}_{n\times (2dG-n)}
        \end{bmatrix}
        \cdot
        \begin{bmatrix}
            X_0 & X_0\\
            I_{dG} & {0}_{dG\times dG} \\
            {0}_{dG\times dG} & I_{dG} 
        \end{bmatrix} \\
        = & ~ 
        \begin{bmatrix}
            R{1}_{1\times n} & {0}_{1\times (2dG-n)}\\
            R{1}_{1\times n} & {0}_{1\times (2dG-n)}\\
            I_n & {0}_{n\times (2dG-n)}
        \end{bmatrix}.
    \end{align*}

    We now calculate the attention matrix $\softmax{K^\top Q}$.

    \paragraph{Calculation of \texorpdfstring{$\Softmax(K^\top Q)$}{}.}

    First, $K^\top Q$ writes out as
    
    \begin{align}
        K^\top Q
        = & ~
        \begin{bmatrix}
            v_0^\top\tilde{Z}{1}_{d} & \frac{\|v_0\|_2^2}{2}{1}_{d} & \ln(T(\tilde{v}_0)) \\
            v_1^\top\tilde{Z}{1}_{d} & \frac{\|v_1\|_2^2}{2}{1}_{d} & \ln(T(\tilde{v}_1)) \\
            & \vdots & \\
            v_{G-1}^\top\tilde{Z}{1}_{d} & \frac{\|v_1\|_2^2}{2}{1}_{d} & \ln(T(\tilde{v}_{G-1})) \\
            v_0^\top\tilde{Z}{1}_{d} & \frac{\|v_0\|_2^2}{2}{1}_{d} & \ln(E(\tilde{v}_0)) \\
            v_1^\top\tilde{Z}{1}_{d} & \frac{\|v_1\|_2^2}{2}{1}_{d} & \ln(E(\tilde{v}_1)) \\
            & \vdots &  \\
            v_{G-1}^\top\tilde{Z}{1}_{d} & \frac{\|v_1\|_2^2}{2}{1}_{d} & \ln(E(\tilde{v}_{G-1})) \\
        \end{bmatrix}\cdot
        \begin{bmatrix}
            R{1}_{1\times n} & {0}_{1\times (2dG-n)}\\
            R{1}_{1\times n} & {0}_{1\times (2dG-n)}\\
            I_n & {0}_{n\times (2dG-n)}
        \end{bmatrix}\nonumber\\
        = & ~
        \label{eq:KtopQ_self}
        \begin{bmatrix}
            R(v_0^\top\tilde{Z}-\frac{\|v_0\|_2^2}{2}){1}_{d\times n}+\ln(T(\tilde{v}_0)) & {0}_{d\times (2dG - n)}\\
            R(v_1^\top\tilde{Z}-\frac{\|v_1\|_2^2}{2}){1}_{d\times n}+\ln(T(\tilde{v}_1))& {0}_{d\times (2dG - n)}\\
            \vdots & \vdots \\
            R(v_{G-1}^\top\tilde{Z}-\frac{\|v_{G-1}\|_2^2}{2}){1}_{d\times n}+\ln(T(\tilde{v}_{G-1}))& {0}_{d\times (2dG - n)}\\
            R(v_0^\top\tilde{Z}-\frac{\|v_0\|_2^2}{2}){1}_{d\times n}+\ln(E(\tilde{v}_0))& {0}_{d\times (2dG - n)}\\
            R(v_1^\top\tilde{Z}-\frac{\|v_1\|_2^2}{2}){1}_{d\times n}+\ln(E(\tilde{v}_1))& {0}_{d\times (2dG - n)}\\
            \vdots & \vdots  \\
            R(v_{G-1}^\top\tilde{Z}-\frac{\|v_{G-1}\|_2^2}{2}){1}_{d\times n}+\ln(E(\tilde{v}_{G-1}))& {0}_{d\times (2dG - n)}\\
        \end{bmatrix}, %
    \end{align}
    where the last line follows from the multiplication of block matrices. 
    This multiplication between $K^\top$ and $Q$ is equivalent to first multiplying the first $2$ columns in $K^\top$ with $R$ and then broadcasting their sum to the first $n$ columns, and then adding the result with $T$ and $E$ related blocks. Columns are all filled with $0$ except for the first $n$ columns. 

\begin{claim}
\begin{remark}[Interpretation of $K^\top Q$]

The non-zero entries of $K^\top Q$ is an aggregation of two matrices
\begin{align}
\label{eq:aggre_dist_part}
    \begin{bmatrix}
            R(v_0^\top\tilde{Z}-\frac{\|v_0\|_2^2}{2}){1}_{d\times n}\\ 
            R(v_1^\top\tilde{Z}-\frac{\|v_1\|_2^2}{2}){1}_{d\times n}\\
            \vdots \\
            R(v_{G-1}^\top\tilde{Z}-\frac{\|v_{G-1}\|_2^2}{2}){1}_{d\times n}\\
            R(v_0^\top\tilde{Z}-\frac{\|v_0\|_2^2}{2}){1}_{d\times n}\\
            R(v_1^\top\tilde{Z}-\frac{\|v_1\|_2^2}{2}){1}_{d\times n}\\
            \vdots \\
            R(v_{G-1}^\top\tilde{Z}-\frac{\|v_{G-1}\|_2^2}{2}){1}_{d\times n}
    \end{bmatrix}
\end{align}
and
\begin{align}
\label{eq:aggre_func_part}
    \begin{bmatrix}
            \ln(T(\tilde{v}_0)) \\
            \ln(T(\tilde{v}_1))\\
            \vdots \\
            \ln(T(\tilde{v}_{G-1}))\\
            \ln(E(\tilde{v}_0))\\
            \ln(E(\tilde{v}_1))\\
            \vdots \\
            \ln(E(\tilde{v}_{G-1}))
    \end{bmatrix}.
\end{align}

In these two matrices, \eqref{eq:aggre_dist_part} is identical between columns and has the precision coefficient $R$ free of our choice.
In later discussions, we set $R$ to be sufficiently large so that the $\Softmax$ approximates a maximum function, and ``selects'' the $i$ of the maximal $R(v_i^\top\tilde{Z}-\frac{\|v_i\|_2^2}{2}){1}_{d\times n}$ for $i\in \{0,1,\cdots,G-1\}$. 
By "select" we mean only the entries with the selected label has a value not close to $0$ in each column of $\Softmax(K^\top Q)$.

\eqref{eq:aggre_func_part} does not include $R$ related terms.
Thus when $R$ is set to be sufficiently large in our later discussions, \eqref{eq:aggre_func_part} does not affect the selection made by \eqref{eq:aggre_dist_part}.

If we exclude the \eqref{eq:aggre_func_part} in the attention score matrix $\Softmax(K^\top Q)$, the output approximates a matrix whose columns are all-zero except for two sub-vector equal to $1/2d \cdot 1_d$. 
This writes out as (here we only show the first $n$ non-constant columns)
\begin{align}
\label{eq:aggre_dist_column}
    \begin{bmatrix}
        0_{(s-1)d\times n}\\
        \frac{1}{2d} 1_{d\times n} \\
        0_{(G-s)d \times n}\\
        0_{(s-1)d \times n}\\
        \frac{1}{2d} 1_{d \times n} \\
        0_{(G-s)d \times n}\\
    \end{bmatrix},
\end{align}
for any $s\in [G]$.
The addition of \eqref{eq:aggre_func_part} change the $1_d$ in \eqref{eq:aggre_dist_column} to
\begin{align}
\label{eq:aggre_output}
    \begin{bmatrix}
        0_{(s-1)d\times n}\\
        \frac{1}{2d}T(\tilde{v}_{s-1}) \\
        0_{(G-s)d \times n}\\
        0_{(s-1)d \times n}\\
        \frac{1}{2d}E(\tilde{v}_{s-1}) \\
        0_{(G-s)d \times n}\\
    \end{bmatrix}.
\end{align}
In later discussion, we use $V$ to transform \eqref{eq:aggre_output} to $T(\tilde{v}_{s-1})-E(\tilde{v}_{s-1}) = 2f(\tilde{v}_{s-1})/2dB_0$ to obtain the final output.
\end{remark}
\end{claim}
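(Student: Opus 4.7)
The plan is to execute the four-step strategy from the proof sketch: discretize $U$ using uniform continuity, embed dot products with grid centers into $\li(Z)$, use a large softmax temperature to pick out the nearest center, and route the corresponding value of $f$ to the output via $W_V$ and $W_O$. The first three steps are essentially \cref{prop:indicator} specialized to a max-affine function built from a uniform grid, and the fourth step is \cref{prop:reassign}; so the bulk of the work is constructing the right max-affine function and verifying the error bound carefully on (and off) the boundary set where affine components tie.

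First I would flatten the matrix input $Z \in \R^{d\times n}$ into $\tilde Z \in \R^{dn}$ and, using uniform continuity of $f$ on the compact set $U \subset [-D,D]^{d\times n}$, pick $\delta > 0$ such that $\|Z_1-Z_2\|_\infty < \delta$ forces $\|f(Z_1)-f(Z_2)\|_\infty < \epsilon/2$. Then I would tile $[-D,D]^{dn}$ with a cubical grid of side $2D/P$ for $P$ large enough that $2D/P < \delta$, obtaining $G = P^{dn}$ centers $\{v_j\}$. The key algebraic identity driving the construction is $v_j^\top \tilde Z - \tfrac12\|v_j\|_2^2 = -\tfrac12\|\tilde Z - v_j\|_2^2 + \tfrac12\|\tilde Z\|_2^2$, so maximizing the max-affine function $\MA(\tilde Z) = \max_j\{v_j^\top \tilde Z - \tfrac12\|v_j\|_2^2\}$ is equivalent to locating the nearest grid center. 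I would then build $\li$ to emit the dot products $\{v_j^\top \tilde Z\}$ in one row and an identity block in the remaining rows, so that $W_K$ can inject the per-center constants $-\|v_j\|_2^2/2$ and the auxiliary logarithms $\ln T(\tilde v_j),\ \ln E(\tilde v_j)$ into the logits; here $T$ and $E$ are the functions defined in the construction so that $T-E = 2f/B_0$ and $T+E \equiv 2$.

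Second, I would set $W_Q$ to broadcast an $R\cdot\mathbf{1}$ pattern across the first $n$ columns of $Q$ (zero elsewhere), producing entries of the form $R\bigl(v_j^\top \tilde Z - \tfrac12\|v_j\|_2^2\bigr) + \ln T(\tilde v_j)$ in the ``T-block'' rows of $K^\top Q$ and likewise with $\ln E(\tilde v_j)$ in the ``E-block'' rows. Taking $R$ large enough concentrates $\mathrm{Softmax}(K^\top Q)$ on the unique maximizer $j^\star$, with the $\ln T, \ln E$ terms surviving after exponentiation only as multiplicative factors $T(\tilde v_{j^\star})$ and $E(\tilde v_{j^\star})$ on the $j^\star$ entries. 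A value matrix $W_V$ that subtracts the E-block from the T-block, followed by a reshaping/scaling $W_O$ (absorbing the factor $B_0 d/2$), then yields an output approximately equal to $f(\tilde v_{j^\star})$ reshaped to $\R^{d\times n}$. Since $\|\tilde Z - v_{j^\star}\|_\infty \le \delta/2$, uniform continuity gives $\|f(Z) - f(\tilde v_{j^\star})\|_\infty < \epsilon/2$, and the softmax error is driven below $\epsilon/2$ by enlarging $R$; the two pieces combine into the claimed $L_\infty$ bound.

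The main obstacle is the boundary set, where two or more centers are equidistant from $\tilde Z$ so the margin between the top two logits vanishes and no finite $R$ yields a near-one-hot softmax. Unlike in \cref{prop:indicator}, I cannot here invoke an ``exception of arbitrarily small Lebesgue measure'' because the $L_\infty$ conclusion must be uniform on $U$. The resolution I plan is the following: near a tie, the softmax splits its mass among tied (or near-tied) centers $v_{j_1},\dots,v_{j_k}$, all of which lie within $\delta$ of $\tilde Z$, so the output is a convex combination of values $f(\tilde v_{j_i})$ each within $\epsilon/2$ of $f(Z)$; since $\|\cdot\|_\infty$-balls are convex, the bound survives. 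To make this quantitative I would refine the grid so that adjacent cell centers differ by at most $\delta$ in $\infty$-norm, then for any prescribed boundary thickness $\eta$ choose $R$ as a function of the minimal margin on the complement of the $\eta$-strip, and on the strip itself use the convex-combination remark to conclude the bound directly without relying on softmax sharpness. This two-region decomposition is the delicate ingredient that upgrades the almost-everywhere statements of \cref{sec:method} to the uniform $L_\infty$ bound claimed here, and the remaining details reduce to routine block-matrix bookkeeping to verify the explicit $\atn\circ\li$ realization.
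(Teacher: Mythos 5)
Your proposal follows essentially the same route as the paper: flatten the input, build the grid of centers $v_j$, use the identity $v_j^\top\tilde Z-\tfrac12\|v_j\|_2^2=-\tfrac12\|\tilde Z-v_j\|_2^2+\tfrac12\|\tilde Z\|_2^2$ to turn nearest-center selection into a max-affine maximization inside the softmax logits, inject $\ln T(\tilde v_j)$ and $\ln E(\tilde v_j)$ so that $T+E\equiv 2$ fixes the softmax denominator and $W_V$ recovers $T-E\propto f$, and handle ties by splitting components into near-maximal ones (controlled by uniform continuity, since the softmax output is a convex combination) and far ones (suppressed exponentially in $R$) — this is exactly the paper's $j_m$, $J_0$, $J_1$ decomposition. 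The construction and error analysis match; no substantive differences to report.
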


    Now, we divide the calculation of $\softmax{K^\top Q}$ into two parts: the calculation of $\exp(K^\top Q)$ and the calculation of the denominator of every column of $\softmax{K^\top Q}$.
    This denominator explicitly writes out as $\sum_{j=1}^{2dG}\exp(K^\top Q)_{ij}$ for each $i\in[2dG]$.

    For $\exp(K^\top Q)$, by \eqref{eq:KtopQ_self}, we have
    \begin{align}
    \label{eq:numerator}
        \exp(K^\top Q)
        = & ~
        \begin{bmatrix}
            \exp(R(v_0^\top\tilde{Z}-\frac{\|v_0\|_2^2}{2})) T(\tilde{v}_0) & {1}_{d\times (2dG - n)}\\
            \exp(R(v_1^\top\tilde{Z}-\frac{\|v_1\|_2^2}{2})) T(\tilde{v}_1)& {1}_{d\times (2dG - n)}\\
            \vdots \\
            \exp(R(v_{G-1}^\top\tilde{Z}-\frac{\|v_{G-1}\|_2^2}{2})) T(\tilde{v}_{G-1})& {1}_{d\times (2dG - n)}\\
            \exp(R(v_0^\top\tilde{Z}-\frac{\|v_0\|_2^2}{2})) E(\tilde{v}_0) & {1}_{d\times (2dG - n)}\\
            \exp(R(v_1^\top\tilde{Z}-\frac{\|v_1\|_2^2}{2})) E(\tilde{v}_1)& {1}_{d\times (2dG - n)}\\
            \vdots \\
            \exp(R(v_{G-1}^\top\tilde{Z}-\frac{\|v_{G-1}\|_2^2}{2})) E(\tilde{v}_{G-1})& {1}_{d\times (2dG - n)}\\
        \end{bmatrix}.
    \end{align}

    For the denominator, we calculate it in columns. 
    Let $i$ denote the column which we calculate the denominator in $\Softmax$. 
    When $i \in \{n+1,n+2,\cdots,2dG\}$, there are $1 \cdot 2dG = 2dG$ columns. 
    And when $i\in [n]$, we denote that
    \begin{align}
        \sum_{j=1}^{2dG}\exp(K^\top Q)_{i,j}
        = & ~
        \sum_{j=1}^{G}
        \left[
        ({1}_{1\times d} T(\tilde{v}_{j-1})_{:,i}
        +
        {1}_{1\times d} E(\tilde{v}_{j-1})_{:,i})\cdot\exp(R\left(v_{j-1}^\top\tilde{Z}-\frac{\|v_{j-1}\|_2^2}{2}\right))
        \right] 
        \nonumber\annot{By \eqref{eq:numerator}}\\
        = & ~
        \sum_{j=1}^{G}
        \left[
        ({1}_{1\times d} (E+T)(v_{j-1})_{:,i}) \cdot\exp(R\left(v_{j-1}^\top\tilde{Z}-\frac{\|v_{j-1}\|_2^2}{2}\right))
        \right] 
        \nonumber\\
        = & ~
        \sum_{j=1}^{G}
        \left[
        ({1}_{1\times d} ({2}_{d\times n})_{:,i}) \cdot\exp(R\left(v_{j-1}^\top\tilde{Z}-\frac{\|v_{j-1}\|_2^2}{2}\right))
        \right]
        \nonumber\\
        = & ~\label{eq:denominator}
        \sum_{j=1}^{G}
        2d \cdot \exp(R\left(v_{j-1}^\top\tilde{Z}-\frac{\|v_{j-1}\|_2^2}{2}\right)), \quad i\in [n].
    \end{align}
    Observing from $\eqref{eq:denominator}$,  $\sum_{j=1}^{2dG}\exp(K^\top Q)_{i,j}$ is \textit{invariant} of  $i$ for $i \in  [n]$. 
    In this case, we define
    \begin{align*}
        \alpha(Z):=
        \frac{1}{2d}
        \sum_{j=1}^{2dG}\exp(K^\top Q)_{i,j}
        =
        \sum_{j=1}^{G}
        \exp(R\left(v_{j-1}^\top\tilde{Z}-\frac{\|v_{j-1}\|_2^2}{2}\right))\in \R,\quad i\in [n].
    \end{align*}

    From \eqref{eq:numerator} and \eqref{eq:denominator}, we have
    \begin{align}
        & ~ \softmax{K^\top Q} \nonumber\\
        = & ~
        \exp(K^\top Q) \odot 
        \begin{bmatrix}
            \frac{1}{\sum_{j=1}^{2dG}\exp(K^\top Q)_{1j}}{1}_{2dG\times n} & \frac{1}{2dG} {1}_{2dG\times(2dG-n)}   
        \end{bmatrix}\annot{By $\frac{1}{\sum_{j=1}^{2dG}\exp(K^\top Q)_{ij}}$ is invariant of $i$ for $i\in [n]$}
        \nonumber\\
        = & ~
        \begin{bmatrix}
            \exp(R(v_0^\top\tilde{Z}-\frac{\|v_0\|_2^2}{2}))T(\tilde{v}_0) & {1}_{d\times (2dG - n)}\\
            \exp(R(v_1^\top\tilde{Z}-\frac{\|v_1\|_2^2}{2}))T(\tilde{v}_1)& {1}_{d\times (2dG - n)}\\
            \vdots \\
            \exp(R(v_{G-1}^\top\tilde{Z}-\frac{\|v_{G-1}\|_2^2}{2}))T(\tilde{v}_{G-1})& {1}_{d\times (2dG - n)}\\
            \exp(R(v_0^\top\tilde{Z}-\frac{\|v_0\|_2^2}{2}))E(\tilde{v}_0) & {1}_{d\times (2dG - n)}\\
            \exp(R(v_1^\top\tilde{Z}-\frac{\|v_1\|_2^2}{2}))E(\tilde{v}_1)& {1}_{d\times (2dG - n)}\\
            \vdots \\
            \exp(R(v_{G-1}^\top\tilde{Z}-\frac{\|v_{G-1}\|_2^2}{2}))E(\tilde{v}_{G-1})& {1}_{d\times (2dG - n)}\\
        \end{bmatrix}
        \odot
        \begin{bmatrix}
            \frac{1}{2d\alpha(Z)}{1}_{2dG\times n} & \frac{1}{2dG} {1}_{2dG\times(2dG-n)}   
        \end{bmatrix}
        \nonumber\\
        = & ~
        \label{eq:softmaxKTQ_s}
        \frac{1}{2d}
        \begin{bmatrix}
        \frac{\exp(R(v_0^\top\tilde{Z}-\frac{\|v_0\|_2^2}{2}))}{\alpha(Z)}T(\tilde{v}_0) & \frac{1}{G}{1}_{d\times (2dG - n)}
        \\
        \frac{\exp(R(v_1^\top\tilde{Z}-\frac{\|v_1\|_2^2}{2}))}{\alpha(Z)}T(\tilde{v}_1) & \frac{1}{G}{1}_{d\times (2dG - n)}
        \\
        \vdots
        \\
        \frac{\exp(R(v_{G-1}^\top\tilde{Z}-\frac{\|v_{G-1}\|_2^2}{2}))}{\alpha(Z)}T(\tilde{v}_{G-1}) & \frac{1}{G}{1}_{d\times (2dG - n)}
        \\ %
        \frac{\exp(R(v_0^\top\tilde{Z}-\frac{\|v_0\|_2^2}{2}))}{\alpha(Z)}E(\tilde{v}_0) & \frac{1}{G}{1}_{d\times (2dG - n)}
        \\
        \frac{\exp(R(v_1^\top\tilde{Z}-\frac{\|v_1\|_2^2}{2}))}{\alpha(Z)} E(\tilde{v}_1) & \frac{1}{G}{1}_{d\times (2dG - n)}
        \\
        \vdots
        \\
        \frac{\exp(R(v_{G-1}^\top\tilde{Z}-\frac{\|v_{G-1}\|_2^2}{2}))}{\alpha(Z)}E(\tilde{v}_{G-1}) 
        & \frac{1}{G}{1}_{d\times (2dG - n)}
        \end{bmatrix}.
    \end{align}

    \paragraph{Construction of \texorpdfstring{$W_V$}{} and \texorpdfstring{$W_O$}{}.}

    We now construct the $W_V$ matrix and calculate the $V$ matrix of the self-attention.

    We define $W_V$ as:
    \begin{align*}
        W_V:=
        \begin{bmatrix}
            {0}_d & X_1 & -X_1
        \end{bmatrix}_{d\times (1+2dG)},
    \end{align*}
    where
    \begin{align*}
        X_1 :=
        \begin{bmatrix}
            I_d & I_d & \cdots & I_d
        \end{bmatrix}_{d\times dG},
    \end{align*}
    is a matrix formed by stacking $G$ $I_d$ matrix horizontally.

    With this definition, we compute $V$ matrix as follows
    \begin{align}
    \label{eq:V_s}
        V:=  & ~  W_V\li(Z) \nonumber\\
        =  & ~ 
        \begin{bmatrix}
            {0}_d & X_1 & -X_1
        \end{bmatrix} \cdot
        \begin{bmatrix}
            X_0 & X_0\\
            I_{dG} & {0}_{dG\times dG} \\
            {0}_{dG\times dG} & I_{dG} 
        \end{bmatrix} \nonumber\\
        = & ~ 
        \begin{bmatrix}
            X_1 & -X_1
        \end{bmatrix}.
    \end{align}

    After the construction and calculation of $V$, we go on to construct $W_O$ as:
    \begin{align*}
        W_O =
        \begin{bmatrix}
            dB_0I_n\\
            {0}_{(2dG-n)\times n}
        \end{bmatrix}.
    \end{align*}

    The sole purpose of $W_O$ is to extract the non-zero entries of the final output.

    \paragraph{Calculation of the Output of \texorpdfstring{$\atn\circ\li$}{}.}

    We now compute the final output of the self-attention block
    \begin{align*}
        & ~ \atn \circ \li(Z) \\
        = & ~
        \frac{1}{2d}
        \begin{bmatrix}
            X_1 & -X_1
        \end{bmatrix} \cdot
        \begin{bmatrix}
        \frac{\exp(R(v_0^\top\tilde{Z}-\frac{\|v_0\|_2^2}{2}))}{\alpha(Z)}T(\tilde{v}_0) & \frac{1}{G}{1}_{d\times (2dG - n)}
        \\
        \frac{\exp(R(v_1^\top\tilde{Z}-\frac{\|v_1\|_2^2}{2}))}{\alpha(Z)}T(\tilde{v}_1) & \frac{1}{G}{1}_{d\times (2dG - n)}
        \\
        \vdots
        \\
        \frac{\exp(R(v_{G-1}^\top\tilde{Z}-\frac{\|v_{G-1}\|_2^2}{2}))}{\alpha(Z)}T(\tilde{v}_{G-1}) & \frac{1}{G}{1}_{d\times (2dG - n)}
        \\ %
        \frac{\exp(R(v_0^\top\tilde{Z}-\frac{\|v_0\|_2^2}{2}))}{\alpha(Z)}E(\tilde{v}_0) & \frac{1}{G}{1}_{d\times (2dG - n)}
        \\
        \frac{\exp(R(v_1^\top\tilde{Z}-\frac{\|v_1\|_2^2}{2}))}{\alpha(Z)} E(\tilde{v}_1) & \frac{1}{G}{1}_{d\times (2dG - n)}
        \\
        \vdots
        \\
        \frac{\exp(R(v_{G-1}^\top\tilde{Z}-\frac{\|v_{G-1}\|_2^2}{2}))}{\alpha(Z)}E(\tilde{v}_{G-1}) 
        & \frac{1}{G}{1}_{d\times (2dG - n)}
        \end{bmatrix}W_O\annot{By \eqref{eq:V_s} and \eqref{eq:softmaxKTQ_s}}\\
        = & ~
        \frac{1}{2d}
        X_1
        \begin{bmatrix}
            \frac{\exp(R(v_0^\top\tilde{Z}-\frac{\|v_0\|_2^2}{2}))}{\alpha(Z)}(T(\tilde{v}_0)-E(\tilde{v}_0)) & {0}_{d\times (2dG - n)}
            \\
            \frac{\exp(R(v_1^\top\tilde{Z}-\frac{\|v_1\|_2^2}{2}))}{\alpha(Z)}(T(\tilde{v}_1)-E(\tilde{v}_1)) & {0}_{d\times (2dG - n)}
            \\
            \vdots & \vdots
            \\
            \frac{\exp(R(v_{G-1}^\top\tilde{Z}-\frac{\|v_{G-1}\|_2^2}{2}))}{\alpha(Z)}(T(\tilde{v}_{G-1})-E(\tilde{v}_{G-1})) & {0}_{d\times (2dG - n)}
        \end{bmatrix}W_O
        \annot{Sum of $X_1$ multiplied by $T$ related blocks and $-X_1$ multiplied by $E$ related ones}\\
        = & ~
        \frac{1}{2d}
        X_1
        \begin{bmatrix}
            \frac{\exp(R(v_0^\top\tilde{Z}-\frac{\|v_0\|_2^2}{2}))}{\alpha(Z)} \cdot \frac{2f(\tilde{v}_0)}{B_0} & {0}_{d\times (2dG - n)}
            \\
            \frac{\exp(R(v_1^\top\tilde{Z}-\frac{\|v_1\|_2^2}{2}))}{\alpha(Z)}\cdot\frac{2f(\tilde{v}_1)}{B_0} & {0}_{d\times (2dG - n)}
            \\
            \vdots
            \\
            \frac{\exp(R(v_{G-1}^\top\tilde{Z}-\frac{\|v_{G-1}\|_2^2}{2}))}{\alpha(Z)}\frac{2f(\tilde{v}_{G-1})}{B_0} & {0}_{d\times (2dG - n)}
        \end{bmatrix}
        W_O
        \annot{By \eqref{eq:E+T,E-T}}.
    \end{align*}
    Let $I_d$ denote the $d$-dimensional identity matrix.
    We have
    \begin{align*}
        & ~ X_1
        \begin{bmatrix}
            \frac{\exp(R(v_0^\top\tilde{Z}-\frac{\|v_0\|_2^2}{2}))}{\alpha(Z)}\frac{2f(\tilde{v}_0)}{B_0}
            \\
            \frac{\exp(R(v_1^\top\tilde{Z}-\frac{\|v_1\|_2^2}{2}))}{\alpha(Z)}\frac{2f(\tilde{v}_0)}{B_0}
            \\
            \vdots
            \\
            \frac{\exp(R(v_{G-1}^\top\tilde{Z}-\frac{\|v_{G-1}\|_2^2}{2}))}{\alpha(Z)}\frac{2f(\tilde{v}_{G-1})}{B_0}
        \end{bmatrix} \\
        = & ~
        \begin{bmatrix}
            I_d & I_d & \cdots & I_d
        \end{bmatrix}_{d\times dG} \cdot
        \underbrace{\begin{bmatrix}
            \frac{\exp(R(v_0^\top\tilde{Z}-\frac{\|v_0\|_2^2}{2}))}{\alpha(Z)}\frac{2f(\tilde{v}_0)}{B_0}
            \\
            \frac{\exp(R(v_1^\top\tilde{Z}-\frac{\|v_1\|_2^2}{2}))}{\alpha(Z)}\frac{2f(\tilde{v}_1)}{B_0}
            \\
            \vdots
            \\
            \frac{\exp(R(v_{G-1}^\top\tilde{Z}-\frac{\|v_{G-1}\|_2^2}{2}))}{\alpha(Z)}\frac{2f(\tilde{v}_{G-1})}{B_0}
        \end{bmatrix}}_{:= S}
        \annot{Equivalent to summing all blocks in $S$}\\
        = & ~
        \sum_{j=0}^{G-1}I_d \cdot \frac{\exp(R(v_{j}^\top\tilde{Z}-\frac{\|v_{j-1}\|_2^2}{2}))}{\alpha(Z)}\frac{2f(\tilde{v}_{j})}{B_0}\\
        = & ~
        \sum_{j=0}^{G-1}\frac{1}{\alpha(Z)}{\exp(R(v_{j}^\top\tilde{Z}-\frac{\|v_{j}\|_2^2}{2}))}\frac{2f(\tilde{v}_{j})}{B_0}.
    \end{align*}

    This yields
    \begin{align}
    \label{eq:attn_output}
        \atn \circ \li(Z)
        = & ~
        \begin{bmatrix}
            \sum_{j=0}^{G-1}\frac{\exp(R(v_{j}^\top\tilde{Z}-\frac{\|v_{j}\|_2^2}{2}))}{\alpha(Z)}\frac{2f(\tilde{v}_{j})}{B_0}
            &
            {0}_{d\times (2dG-n)}
        \end{bmatrix}W_O\nonumber\\
        = & ~
        \begin{bmatrix}
            \sum_{j=0}^{G-1}\frac{\exp(R(v_{j}^\top\tilde{Z}-\frac{\|v_{j}\|_2^2}{2}))}{\alpha(Z)}\frac{2f(\tilde{v}_{j})}{B_0}
            &
            {0}_{d\times (2dG-n)}
        \end{bmatrix} \cdot
        \begin{bmatrix}
            dB_0I_n\\
            0_{(2dG-n)\times n}
        \end{bmatrix}\nonumber\\
        = & ~
        \sum_{j=0}^{G-1}\frac{1}{\alpha(Z)}{\exp(R(v_{j}^\top\tilde{Z}-\half \|v_{j}\|_2^2 ))}f(\tilde{v}_{j}).
    \end{align}

\subsection*{Part 2: Estimation of the Approximation Error between $\atn\circ\li$ and $f$.
}

    With above calculations of the output of $\atn \circ \li$, we now demonstrate how this output approximates our target function.

    Essentially, we demonstrate that each term in the summation of \eqref{eq:attn_output}, given by 
    \begin{align*} 
    \frac{1}{\alpha(Z)} \exp(R(v_{j}^\top \tilde{Z} - \frac{1}{2} |v_{j}|_2^2)), 
    \end{align*} 
    approximates a max-affine indicator as $R$ becomes sufficiently large.
    They are each multiplied with $f(\tilde{v}_j)$, which is the value of the target function at the center point of the indicated region.

\begin{definition}[Max-Affine Function on \texorpdfstring{$\tilde{Z}$}{}]
    Let ${\rm Aff}_{j} \in \R^{dn} \to \R$ with $j\in \{0,1,2,\cdots, G-1\}$ denote a group of affine functions defined as:
    \begin{align*}
        {\rm Aff}_{j}(\tilde{Z}) = v_{j}^\top\tilde{Z}-\half \|v_{j}\|_2^2, \quad j\in \{0,1,2,\cdots,G-1\}. 
    \end{align*}
    Then let ${\rm MaxAff} \in \R^{dn} \to \R$ denote a max affine function whose affine components are $\{{\rm Aff}_{j} \mid j\in \{0,1,2,\cdots,G-1\}\}$. Explicitly defined as:
    \begin{align*}
        \ma{\tilde{Z}} = \max_{j\in \{0,1,2,\cdots,G-1\}} \left\{{\rm Aff}_{j}(\tilde{Z})\right\}.
    \end{align*}
\end{definition}

    Because the target function $f$ is a continuous function on a closed domain, the function $f$ is uniformly continuous. Thus for $\epsilon$, there exists a $\delta >0$ such that for any $Z_1,Z_2$, as long as $\|\tilde{Z}_1-\tilde{Z}_2\|_\infty \leq \delta$, we have $\|f(Z_1)-f(Z_2)\|_\infty \leq \epsilon/3$.

    According to this $\delta$, we divide the affine components of ${\rm MaxAff}$ into three parts:
    \begin{enumerate}
        \item The maximal component, which has the smallest label $j_m$.
        \item All affine components that match the maximal component or fall within $\delta$ of it ($J_0$ as defined below).
        \item The remaining ${\rm Aff}_{j}$ for $j\in\{0,1,\ldots,G-1\}$ ($J_1$ as defined below).
    \end{enumerate}
    We write out the labels of these groups of components as follows
    \begin{align*}
        &j_m := \min_{j\in\{0,1,2,\cdots,G-1\}}
        \{
        \af{j}(\tilde{Z})= \ma{\tilde{Z}}
        \}, \\
        &J_0 :=
        \{
        j \mid  \ma{\tilde{Z}}-\af{j}(\tilde{Z})\leq\delta
        \},\\
        &J_1 :=
        \{
        j \mid \ma{\tilde{Z}}-\af{j}(\tilde{Z})>\delta
        \}.
    \end{align*}

    For any pair of $i,j\in\{0,1,\cdots,G-1\}$, we have
    \begin{align*}
        \af{i}(\tilde{Z})-
        \af{j}(\tilde{Z}) 
        = & ~
        v_{i}^\top\tilde{Z}-\frac{\|v_{i}\|_2^2}{2}
        -
        \left(v_{j}^\top\tilde{Z}-\frac{\|v_{j}\|_2^2}{2}\right)\\
        = & ~
        -\frac{\|\tilde{Z}\|_2^2}{2}+v_{i}^\top\tilde{Z}-\frac{\|v_{i}\|_2^2}{2}
        -
        \left(
        -\frac{\|\tilde{Z}\|_2^2}{2}+v_{j}^\top\tilde{Z}-\frac{\|v_{j}\|_2^2}{2}
        \right)\\
        = & ~-\frac{1}{2}
        \|
        \tilde{Z} - v_i
        \|_2^2
        +\frac{1}{2}\|
        \tilde{Z} - v_j
        \|_2^2.
    \end{align*}
    
    Thus for $j_m$, we have
    \begin{align*}
-\frac{1}{2}
        \|
        \tilde{Z} - v_{j_m}
        \|_2^2
        +\frac{1}{2}\|
        \tilde{Z} - v_j
        \|_2^2
        = & ~
        \af{j_m}(\tilde{Z})-
        \af{j}(\tilde{Z})\geq 0 , \quad j\in \{0,1,\cdots,G-1\}.
    \end{align*}

    This yields
    \begin{align*}
        \|
        \tilde{Z} - v_{j_m}
        \|_2^2    
        \leq
        \|
        \tilde{Z} - v_{j}
        \|_2^2 ,
    \end{align*}
    for all $j \in \{0,1,\cdots, G-1\}$.

    This denotes $j_m$ is also the label of the closest $v_i$ to $\tilde{Z}$ among all $v_i$, $i\in \{0,1,\cdots,G-1\}$.
    Thus we have
    \begin{align}
    \label{eq:sa_minimal_dist}
        \|v_{j_m}-\tilde{Z}\|_2 = \min_{i\in \{0,1,\cdots,G-1\}}\{\|v_{i}-\tilde{Z}\|_2\}.
    \end{align}

    Now, we prove $v_{j_m}$ (the grid point nearest to $\tilde{Z}$) has a distance to $\tilde{Z}$ smaller than half of the grid width (e.g., $D/g$) in infinite norm. 

    Let $\mathcal{D}:= 2D/g\times \{-1,0,1\}^{dn}$ denote a set differences to $v_{j_m}$ from the set of all $v_i$ $(i\in\{0,1,\cdots,G-1\})$ neighboring $v_{j_m}$. For any $\Delta$ in $\cal{D}$, from \eqref{eq:sa_minimal_dist} we have
    \begin{align*}
        \|v_{j_m}-\tilde{Z}\|_2^2
        \leq
        \|v_{j_m}+\Delta-\tilde{Z}\|_2^2.
    \end{align*}
    This yields
    \begin{align*}
        2\Delta^\top(\tilde{Z}-v_{j_m})
        \leq
        \|\Delta\|_2^2.
    \end{align*}
    This means that, for any $k\in[dn]$, by selecting $\Delta$ to be $\pm 2D/g \e{dn}{k}$, we have:
    \begin{align*}
        \pm2\cdot\frac{2D}{g}(\tilde{Z}-v_{j_m})_{k}
        =
        2\Delta^\top(\tilde{Z}-v_{j_m})
        \leq
        \|\Delta\|_2^2
        =
        \frac{4D^2}{g^2}.
    \end{align*}

    Thus we have
    \begin{align*}
        \left(\left|
        \tilde{Z}-v_{j_m}
        \right|\right)_{k}
        \leq
        \frac{D}{g}, ~ k\in [dn],
    \end{align*}
    which implies
    \begin{align*}
        \|
        \tilde{Z}-v_{j_m}
        \|_\infty
        \leq
        \frac{D}{g}.
    \end{align*}

    Set $g$ to be larger than $2D/\delta$; we have
    \begin{align*}
        \|
        \tilde{Z}-v_{j_m}
        \|_\infty
        \leq
        \frac{\delta}{2},
    \end{align*}
    thus
    \begin{align}
    \label{eq:dist_closest_v}
        \|
        f(Z)-f(\tilde{v}_{j_m})
        \|_\infty
        \leq
        \frac{\epsilon}{3},
    \end{align}
    where the inequality holds by    $\delta/2<\delta$.

\paragraph{Calculation of $\|\atn\circ \li - f\|_\infty$.}

    We now calculate the difference between the output in  \eqref{eq:attn_output} and target function $f$
    \begin{align}
        & ~ \|\atn\circ\li(Z)-f(Z)\|_\infty \nonumber\\
        = & ~
        \|\sum_{j=0}^{G-1}\frac{\exp(R(v_{j}^\top\tilde{Z}-\frac{\|v_{j}\|_2^2}{2}))}{\alpha(Z)}f(\tilde{v}_{j})
        -f(Z)\|_\infty\nonumber\\
        = & ~
        \|\sum_{j=0}^{G-1}\frac{\exp(R(v_{j}^\top\tilde{Z}-\frac{\|v_{j}\|_2^2}{2}))}{\alpha(Z)}(f(\tilde{v}_{j})
        -f(Z))\|
        \annot{By $\sum_{j=0}^{G-1}\frac{\exp(R(v_{j}^\top\tilde{Z}-\frac{\|v_{j}\|_2^2}{2}))}{\alpha(Z)} = 1$}\nonumber\\
        \leq & ~
        \sum_{j=0}^{G-1}\frac{\exp(R(v_{j}^\top\tilde{Z}-\frac{\|v_{j}\|_2^2}{2}))}{\alpha(Z)}\|f(\tilde{v}_{j})
        -f(Z)\|_\infty
        \annot{By property of infinite norm}\nonumber\\
        = & ~
        \frac{\exp(R(v_{j_m}^\top\tilde{Z}-\frac{\|v_{j_m}\|_2^2}{2}))}{\alpha(Z)}\|f(\tilde{v}_{j_m})
        -f(Z)\|_\infty
        \nonumber\\
        & ~ + \sum_{j\in J_0}\frac{\exp(R(v_{j}^\top\tilde{Z}-\frac{\|v_{j}\|_2^2}{2}))}{\alpha(Z)}\|f(\tilde{v}_{j})
        -f(Z)\|_\infty
        \nonumber\\
        \label{eq:threefold}
        & ~ + \sum_{j\in J_1}\frac{\exp(R(v_{j}^\top\tilde{Z}-\frac{\|v_{j}\|_2^2}{2}))}{\alpha(Z)}\|f(\tilde{v}_{j})
        -f(Z)\|_\infty.
    \end{align}

    We now calculate each part in \eqref{eq:threefold}.

    As previously stated, for any $Z_1,Z_2$, as long as $\|\tilde{Z}_1-\tilde{Z}_2\|_\infty \leq \delta$, we have $\|f(Z_1)-f(Z_2)\|_\infty \leq \epsilon/3$. 
    Thus when we designate $Z_1 = v_j$ for any $j\in J_0$ and $Z_2 = v_{j_m}$, along with \eqref{eq:dist_closest_v} we have
    \begin{align}
    & ~ \sum_{j\in J_0}\frac{\exp(R(v_{j}^\top\tilde{Z}-\frac{\|v_{j}\|_2^2}{2}))}{\alpha(Z)}\|f(\tilde{v}_{j})-f(Z)\|_\infty \nonumber \\
    \leq & ~
    \sum_{j\in J_0}\frac{\exp(R(v_{j}^\top\tilde{Z}-\frac{\|v_{j}\|_2^2}{2}))}{\alpha(Z)}(\|f(\tilde{v}_{j})-f(\tilde{v}_{j_m})\|_\infty+\|f(\tilde{v}_{j_m})-f(Z)\|_\infty)\nonumber\\
    \leq & ~
    \sum_{j\in J_0}\frac{\exp(R(v_{j}^\top\tilde{Z}-\frac{\|v_{j}\|_2^2}{2}))}{\alpha(Z)} \cdot (\frac{\epsilon}{3}+\frac{\epsilon}{3})\nonumber\\
    = & ~
    \label{eq:error_J0}
    \sum_{j\in J_0}\frac{\exp(R(v_{j}^\top\tilde{Z}-\frac{\|v_{j}\|_2^2}{2}))}{\alpha(Z)} \cdot \frac{2\epsilon}{3}.
    \end{align}

    For $j_m$, we have
    \begin{align}
        \frac{\exp(R(v_{j_m}^\top\tilde{Z}- \half \|v_{j_m}\|_2^2 ))}{\alpha(Z)}\|f(\tilde{v}_{j_m})
        -f(Z)\|_\infty
        \leq & ~
        \label{eq:error_jm}
        \frac{\exp(R(v_{j_m}^\top\tilde{Z}- \half \|v_{j_m}\|_2^2))}{\alpha(Z)} \cdot 
        \frac{\epsilon}{3}.
    \end{align}

    When $R$ is larger than $\frac{8}{3\delta^2}\ln(\frac{3}{2} B_0 G\epsilon)$, we have
    \begin{align}
        & ~ \sum_{j\in J_1}\frac{\exp(R(v_{j}^\top\tilde{Z}-\frac{\|v_{j}\|_2^2}{2}))}{\alpha(Z)}\|f(\tilde{v}_{j})-f(Z)\|_\infty \nonumber\\
        \leq & ~
        \sum_{j\in J_1}\frac{\exp(R(v_{j}^\top\tilde{Z}-\frac{\|v_{j}\|_2^2}{2}))}{\alpha(Z)}\cdot 2B_0 \annot{By $\|f\|_{L_\infty} = B_0$}\nonumber\\
        \leq & ~
        2B_0 \frac{\sum_{j\in J_1}\exp(R(v_{j}^\top\tilde{Z}-\frac{\|v_{j}\|_2^2}{2}))}{\alpha(Z)}\nonumber\\
        < & ~
        2B_0 \frac{\sum_{j\in J_1}\exp(R(v_{j}^\top\tilde{Z}-\frac{\|v_{j}\|_2^2}{2}))}{\exp(R(v_{j_m}^\top\tilde{Z}-\frac{\|v_{j_m}\|_2^2}{2}))} 
        \annot{$\alpha(Z)$ is the sum of all $\exp(R(v_{j}^\top\tilde{Z}-\frac{\|v_{j}\|_2^2}{2}))$, thus larger than $\exp(R(v_{j_m}^\top\tilde{Z}-\frac{\|v_{j_m}\|_2^2}{2}))$}\nonumber\\
        = & ~
        2B_0\sum_{j\in J_1}\exp(\frac{R}{2}(\|v_{j_m}-Z\|_2^2-\|v_j-Z\|_2^2)) \nonumber\\
        \leq & ~
        2B_0\|J_1\|\exp(\frac{R}{2}\left[(\frac{\delta}{2})^2-\delta^2\right])\nonumber\\
        < & ~
        2B_0 G \exp(\frac{-3R\delta^2}{8})\nonumber\\
        \leq & ~
        2B_0 G\exp(\frac{-3\delta^2 \cdot \frac{8\ln(\frac{2}{3}B_0 G\epsilon)}{3\delta^2}}{8}) \annot{By $R \geq \frac{8}{3\delta^2}\ln(\frac{3}{2} B_0 G\epsilon)$}\nonumber\\
        = & ~
        \label{eq:error_J1}
        \frac{\epsilon}{3}.
    \end{align}

    Combining \eqref{eq:error_J0} and \eqref{eq:error_jm} yields
    \begin{align}
        & ~ \sum_{j\in J_0 \cup \{j_m\}}
        \frac{\exp(R(v_{j}^\top\tilde{Z}-\frac{\|v_{j}\|_2^2}{2}))}{\alpha(Z)}\|f(\tilde{v}_{j})-f(Z)\|_\infty \nonumber\\
        \leq & ~
        \sum_{j\in J_0}\frac{\exp(R(v_{j}^\top\tilde{Z}-\frac{\|v_{j}\|_2^2}{2}))}{\alpha(Z)}\frac{2\epsilon}{3}
        +
        \frac{\exp(R(v_{j_m}^\top\tilde{Z}-\frac{\|v_{j_m}\|_2^2}{2}))}{\alpha(Z)}
        \frac{\epsilon}{3} \nonumber \annot{By \eqref{eq:error_J0} and \eqref{eq:error_jm}}
        \nonumber\\
        \leq & ~
        \sum_{j\in J_0 \cup \{j_m\}}\frac{\exp(R(v_{j}^\top\tilde{Z}-\frac{\|v_{j}\|_2^2}{2}))}{\alpha(Z)}\frac{2\epsilon}{3}
        \nonumber\\
        \leq & ~ \label{eq:error_besideJ1}
        \frac{2\epsilon}{3},
    \end{align}
    where the last line is by $\sum_{j\in J_0 \cup \{j_m\}}\frac{1}{\alpha(Z)} \exp(R(v_{j}^\top\tilde{Z}-\half \|v_{j}\|_2^2))\leq 1$.

    We plug \eqref{eq:error_J1} and \eqref{eq:error_besideJ1} to \eqref{eq:threefold} and get 
    \begin{align*}
        \|\atn\circ\li(Z)-f(Z)\|_\infty
        \leq & ~
        \frac{\exp(R(v_{j_m}^\top\tilde{Z}-\frac{\|v_{j_m}\|_2^2}{2}))}{\alpha(Z)}\|f(\tilde{v}_{j_m})
        -f(Z)\|_\infty
        \nonumber\\
        & ~ + \sum_{j\in J_0}\frac{\exp(R(v_{j}^\top\tilde{Z}-\frac{\|v_{j}\|_2^2}{2}))}{\alpha(Z)}\|f(\tilde{v}_{j})
        -f(Z)\|_\infty
        \nonumber\\
        & ~ + \sum_{j\in J_1}\frac{\exp(R(v_{j}^\top\tilde{Z}-\frac{\|v_{j}\|_2^2}{2}))}{\alpha(Z)}\|f(\tilde{v}_{j})
        -f(Z)\|_\infty\\
        \leq & ~
        \frac{2\epsilon}{3} + \frac{\epsilon}{3}\\
        = & ~
        \epsilon.
    \end{align*}
This completes the proof.
\end{proof}

\subsection{Proof of \texorpdfstring{\cref{thm:self-univ-p}}{}}
\label{proof:thm:self-univ-p}

\begin{theorem}[\cref{thm:self-univ-p} Restated: $L_p$-Norm Universal Approximation of Self-Attention]
    Let $f: \R^{d\times n} \to \R^{d\times n}$ denote any Lebesgue integrable function on a compact domain $U \in \R^{d\times n}$ and let $\epsilon>0$ be any positive real number. 
    Then, there exists a self-attention $\atn$ prepended with a $\li$ layer such that
    \begin{align*}
        \|f-\atn\circ \li\|_{L_p}
        \leq 
        \epsilon.
    \end{align*}
\end{theorem}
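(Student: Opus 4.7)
The plan is to reduce the $L_p$-norm statement for Lebesgue integrable $f$ to the already-proved $L_\infty$-norm statement \cref{thm:self-univ-infinite} for continuous targets, leveraging two standard facts: continuous functions are dense in $L_p(U)$ when $U$ is compact (equivalently, Lusin's theorem combined with Tietze extension), and on a finite-measure domain an $L_\infty$-approximation upgrades to an $L_p$-approximation at the cost of a multiplicative factor $\mu(U)^{1/p}$, where $\mu$ denotes Lebesgue measure. Splitting the target tolerance $\epsilon$ across these two approximation stages closes the argument.

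First, I would pick a continuous $g: U \to \R^{d\times n}$ with $\|f - g\|_{L_p} \le \epsilon/2$. Concretely, Lusin produces a closed set $K \subset U$ with $\mu(U \setminus K) < \eta$ on which $f|_K$ is continuous, and Tietze extends $f|_K$ to a continuous $g$ on $U$. A standard dominated-convergence estimate shows that for any $f \in L_p(U)$, the induced $L_p$ error tends to zero as $\eta \to 0$, so $\eta$ can be chosen to achieve the bound $\epsilon/2$. Next, I would apply \cref{thm:self-univ-infinite} to $g$ with tolerance $\epsilon' := \epsilon/(2\mu(U)^{1/p})$, obtaining a self-attention $\atn$ prepended with a $\li$ layer satisfying $\|g - \atn\circ\li\|_{L_\infty} \le \epsilon'$. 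Integrating the pointwise bound over $U$ converts it to $\|g - \atn\circ\li\|_{L_p} \le \mu(U)^{1/p} \epsilon' = \epsilon/2$, and the triangle inequality delivers $\|f - \atn\circ\li\|_{L_p} \le \epsilon$.

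The main obstacle I anticipate is the case of unbounded Lebesgue integrable $f$, for which the Lusin--Tietze continuous approximant $g$ need not inherit a uniform bound, yet the construction in \cref{thm:self-univ-infinite} depends on $B_0 = \|g\|_{L_\infty}$ through the auxiliary functions $E, T$ and the precision parameter $R$. I would handle this by inserting an initial truncation step: for large $M$, replace $f$ by the coordinatewise clipped function $f_M := \max(-M, \min(M, f))$, which is bounded and (by dominated convergence, since $U$ has finite measure) satisfies $\|f - f_M\|_{L_p} \to 0$ as $M \to \infty$. Applying Lusin--Tietze and \cref{thm:self-univ-infinite} to $f_M$ instead of $f$, and splitting $\epsilon$ across three stages (truncation, continuous approximation, attention approximation), yields the claimed $L_p$ bound. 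Apart from this bookkeeping, the proof is a routine combination of density of continuous functions in $L_p$, the finite-measure comparison inequality, and \cref{thm:self-univ-infinite}.
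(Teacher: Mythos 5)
Your proposal follows essentially the same route as the paper's own proof: invoke Lusin's theorem to replace $f$ by a continuous $g$, apply \cref{thm:self-univ-infinite} to $g$, and convert the resulting $L_\infty$ bound to an $L_p$ bound using the finite measure of the compact domain, splitting $\epsilon$ across the stages. The one substantive difference is your initial truncation to $f_M$, and this is a genuine improvement rather than mere bookkeeping: the paper's proof asserts that Lebesgue integrability on a compact set implies boundedness almost everywhere (false in general, e.g.\ $x \mapsto x^{-1/(2p)}$ on $[0,1]$) and relies on that bound $B_p$ to control the error on the exceptional Lusin set, whereas your truncation step closes exactly that gap.
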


\begin{proof}
    Since $f$ is Lebesgue integrable on a compact set, $f$ is bounded almost every where.
    Let $B_p$ denote the bound of $\|f\|_p$.

    By Lusin's theorem, for $f$ on a compact domain $U$, there exists a continuous function $g$ which is equal to $f$ in $U$ except for a region $D_\delta$ such that $\mu(D_\delta)\leq \Delta$. This can be written as
    \begin{align}
    \label{eq:lusin}
        & ~ D_\delta = \{Z|f(Z)\neq g(Z)\},\\
    \label{eq:lusin_2}
        & ~ \mu(D_\delta) \leq \Delta.
    \end{align}
    Here $\mu$ stands for the Lebesgue measure of a set.

    By \cref{thm:self-univ-infinite}, there exists a net work $\atn\circ\li$, consists of a self-attention $\atn$ and a layer of sum of linear transformation $\li$ such that
    \begin{align*}
        \|\atn\circ\li-g\|_{L_\infty} \leq \epsilon_0,
    \end{align*}
    for any $\epsilon_0 >0$.

    This denote that for any $Z\in U$
    \begin{align*}
        \|\atn\circ\li(Z)-g(Z) \|_p
        \leq
        (dn\cdot \epsilon^p )^\frac{1}{p}
        =\epsilon_0 (dn)^\frac{1}{p}.
    \end{align*}

    Combine this with \eqref{eq:lusin} and \eqref{eq:lusin_2}, we get
    \begin{align}
    \label{eq:mu_delta}
        \mu(\{Z|\|\atn\circ\li(Z)-g(Z)\|_\infty> \epsilon_0\}) 
        \leq & ~
        \mu(\{f(Z)\neq g(Z)\}) \nonumber\\
        \leq & ~ 
        \Delta,
    \end{align}
    since that $f(Z)=g(Z)$, $ \|\atn\circ\li(Z)-g(Z)\| = \|\atn\circ\li(Z)-f(Z)\|\leq\epsilon_0$.
    
    This yields
    \begin{align*}
        \|f-\atn\circ \li\|_{L_p}
        = & ~
        (\int_{Z\in U} \|f-\atn\circ \li\|_p^p ~ {\rm d} x )^\frac{1}{p}\\
        \leq & ~
        (\int_{Z\in U\backslash D_\delta } \|f-\atn\circ \li\|_p^p ~ {\rm d} x
        +
        \int_{Z\in D_\delta } \|f-\atn\circ \li\|_p^p ~ {\rm d} x)^\frac{1}{p} \\
        = & ~
        (\int_{Z\in U\backslash D_\delta } \|g-\atn\circ \li\|_p^p ~ {\rm d} x
        +
        \int_{Z\in D_\delta } \|f-\atn\circ \li\|_p^p ~ {\rm d} x)^\frac{1}{p} \\
        \leq & ~
        (\mu(U\backslash D_\delta)(\epsilon_0 (dn)^\frac{1}{p})^p + \Delta \cdot B_p^p)^\frac{1}{p}\annot{By \eqref{eq:mu_delta}}\\
        \leq & ~
        \epsilon_0 (dn\mu(U))^\frac{1}{p}
        +
        \Delta^\frac{1}{p}B_p.
    \end{align*}

    Set
    \begin{align*}
        & ~ \epsilon_0 
        \leq
        \frac{\epsilon}{2(dn\mu(U))^\frac{1}{p}}\\
        & ~ \Delta 
        \leq
        \frac{\epsilon^p}{B_p \cdot2^p}.
    \end{align*}

    We have
    \begin{align*}
        \|f-\atn\circ \li\|_{L_p}
        \leq & ~
        \epsilon_0 (dn\mu(U))^\frac{1}{p}
        +
        \Delta^\frac{1}{p}B_p \\
        \leq & ~
        (dn\mu(U))^\frac{1}{p}
        \cdot
        \frac{\epsilon}{2(dn\mu(U))^\frac{1}{p}}
        +
        (\frac{\epsilon^p}{B_p \cdot2^p})^\frac{1}{p}B_p\\
        = & ~
        \frac{\epsilon}{2}+ \frac{\epsilon}{2} \\
        = & ~
        \epsilon.
    \end{align*}

    This completes our proof. 
\end{proof}

\subsection{Proof of \texorpdfstring{\cref{thm:cross-univ-infinite}}{}}
\label{proof:thm:cross-univ-infinite}

\begin{theorem}
    Let $U_K \subset \R^{d\times n}$ and $U_Q \subset \R^{d\times n}$ be two compact domains,
    and let $f: U_K \times U_Q \to \R^{d\times n}$ be any continuous function that takes input from both domains. We use $Z_K,Z_Q\in \R^{d\times n}$ to denote the two inputs of $f$ from $U_K$ and $U_Q$ respectively. Without loss of generality, suppose both input domains to be $[-D,D]^{d\times n}$, where $D\in \R_+$. Then for any $\epsilon >0 $, there exists a single-head cross-attention $\atn$ and two layers of sum of linear transformations, $\li_K$ and $\li_Q$ such that:
    \begin{align*}
        \|\atn\left(\li_K(Z_K), \li_Q(Z_Q)\right) - f(Z_K,Z_Q)\|_\infty \leq \epsilon, 
    \end{align*}
    for any $Z_K,Z_Q\in [-D,D]^{d\times n}$.
\end{theorem}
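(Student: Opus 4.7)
The plan is to mirror the self-attention construction in \cref{proof:thm:self-univ-infinite} but with the max-affine grid built on the product space $U_K \times U_Q$, using the bilinear structure of $K^\top Q$ to reconstruct the sum $\MA_K + \MA_Q$ from factored inputs. Concretely, I would flatten $Z_K, Z_Q \in \R^{d\times n}$ to $\tilde Z_K, \tilde Z_Q \in \R^{dn}$, invoke uniform continuity of $f$ on the compact product $U_K \times U_Q$ to pick $\delta > 0$, and subdivide $[-D,D]^{2dn}$ into $G = P^{2dn}$ cubes of side $2D/P \leq \delta$. Each cube center is an ordered pair $(u_s, v_s) \in [-D,D]^{dn} \times [-D,D]^{dn}$ for $s \in \{0, \ldots, G-1\}$, carrying the joint affine function $u_s^\top \tilde Z_K + v_s^\top \tilde Z_Q - \tfrac{1}{2}\|u_s\|_2^2 - \tfrac{1}{2}\|v_s\|_2^2$; by completion of squares as in the self-attention proof, maximizing this over $s$ is equivalent to picking the nearest joint grid cell to $(\tilde Z_K, \tilde Z_Q)$.

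Next I would construct $\li_K$, $\li_Q$, and the cross-attention weights following the template of \cref{proof:thm:self-univ-infinite}. The layer $\li_K$ produces the scalars $\{u_s^\top \tilde Z_K\}_{s=0}^{G-1}$ together with identity blocks allowing $W_K$ to append the constants $-\tfrac{1}{2}(\|u_s\|_2^2 + \|v_s\|_2^2)$ and the anchors $\log T(\tilde u_s, \tilde v_s), \log E(\tilde u_s, \tilde v_s)$, where $T, E$ are the product-domain analogues of the decomposition from the self-attention proof satisfying $T + E \equiv 2\cdot 1_{d\times n}$ and $T - E = 2f/B_0$. The layer $\li_Q$ symmetrically produces $\{v_s^\top \tilde Z_Q\}_{s=0}^{G-1}$ and supplies broadcasting blocks of $1$'s. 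The weights $W_K, W_Q$ are then chosen so that column $s$ of $K$ stores $u_s^\top \tilde Z_K$ in one slot, the vector $v_s$ in another, and the constants plus $\log T, \log E$ anchors elsewhere, while every column of $Q$ stores $1$, $\tilde Z_Q$, and $1$ in the matching slots. The dot product then yields
\begin{align*}
(K^\top Q)_{s,i} = R \left( u_s^\top \tilde Z_K + v_s^\top \tilde Z_Q - \tfrac{1}{2}\|u_s\|_2^2 - \tfrac{1}{2}\|v_s\|_2^2 \right) + \log T(\tilde u_s, \tilde v_s) \text{ or } \log E(\tilde u_s, \tilde v_s),
\end{align*}
independent of the output position $i$ across the first $n$ columns, exactly as in the self-attention proof.

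From here the remainder of the argument carries over verbatim. For large $R$, column $i$ of $\Softmax(K^\top Q)$ concentrates on the index $s^*$ minimizing $\|\tilde Z_K - u_{s^*}\|_2^2 + \|\tilde Z_Q - v_{s^*}\|_2^2$; the $T + E \equiv 2\cdot 1$ identity collapses the column normalizers to a common $\alpha(Z_K, Z_Q)$; and a $W_V$ of the form $[\,0_d \mid X_1 \mid -X_1\,]$ together with an output projection $W_O$ extract the difference $T - E = 2f/B_0$ at the selected cell, producing $f(\tilde u_{s^*}, \tilde v_{s^*})$. Uniform continuity then gives the standard three-way split of the error into (i) the closest-cell contribution, (ii) the cluster of near-optimal cells within $\delta$, and (iii) the far cells that get exponentially suppressed for $R$ beyond a threshold of order $\delta^{-2}\log(B_0 G / \epsilon)$, each bounded by $\epsilon/3$.

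The main obstacle, and essentially the only place where the cross-attention argument diverges from the self-attention one, is the bilinear encoding in the step above: since $u_s, v_s$ are fixed constants while $\tilde Z_K, \tilde Z_Q$ are the variables, one must factor the sum $u_s^\top \tilde Z_K + v_s^\top \tilde Z_Q$ so that each term fits the rank structure of $K^\top Q$ with the variable part routed through exactly one factor. Placing the $\tilde Z_K$-dependence in $K$ (paired with a constant $1$ in $Q$) and the $\tilde Z_Q$-dependence in $Q$ (paired with the $s$-indexed constants $v_s$ in $K$) accomplishes this within the rank budget. Once this decomposition is in place, the cardinality change from $G = P^{dn}$ (self-attention) to $G = P^{2dn}$ (cross-attention) is absorbed purely into the dimension counts of the linear layers and the attention weight matrices, and no further structural modifications to the error analysis are needed.
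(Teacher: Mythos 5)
Your proposal is correct and follows essentially the same route as the paper's proof: a product grid of anchor pairs, the $T/E$ decomposition with $T+E\equiv 2\cdot 1_{d\times n}$ to normalize the softmax denominator, the bilinear factoring of $u_s^\top\tilde Z_K+v_s^\top\tilde Z_Q$ across $K$ and $Q$, and the identical three-way error split. The only cosmetic difference is that the paper precomputes the scalars $v_j^\top\tilde Z_Q$ inside $\li_Q$ and routes them to the right row blocks with one-hot selectors in $W_K$, whereas you form the inner product directly inside $K^\top Q$; both realizations are equivalent.
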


\begin{proof}

Without loss of generality, assume $U_K = U_Q = [-D,D]^{d\times n}$ for a $D\in R_+$.

\paragraph{Construction of Grid Centers in $U_K,U_Q$.}
Same as in \cref{proof:thm:self-univ-infinite}, we define $\tilde{Z} := [z_1^\top,z_2^\top,\cdots,z_n^\top]^\top$. $P\in N_+$ is a parameter that controls the size of the attention block and the error of our approximation. Define $v_{k_1,k_2,\cdots,k_{dn}}\in \R^{dn}$ to be
    \begin{align*}
        v_{k_1,k_2,\cdots,k_{dn}} 
        := & ~
        \left[\frac{2Dk_1-DP}{P},\frac{2Dk_2-DP}{P},\cdots,\frac{2Dk_{dn}-DP}{P}\right]^\top
        ,~ k_i \in \{0,1,2,\cdots,P-1\},~ i \in [dn]. 
    \end{align*}
    Let $V := \{v_{k_1,k_2,\cdots,k_{dn}}|k_i \in \{0,1,2,\cdots,P-1\},~ i \in [dn]\}$ be the set of all $v_{k_1,k_2,\cdots,k_{dn}}$. We also define another way to refer to a vector in $V$, denoted as
    \begin{align*}
        v_{\sum_{i=1}^{dn} k_iP^{(i-1)}} := v_{k_1,k_2,\cdots,k_{dn}}.
    \end{align*}
    Please see \cref{remark:grid_center_representation} for the reason for the feasibility of such expression.

    Following the notation in \cref{proof:thm:self-univ-infinite}, for every $v\in V$, we define 
    \begin{align*}
    \tilde{v}:= 
    \underbrace{[v_{1:d},v_{d+1:2d},\cdots,v_{(n-1)d+1,nd}]}_{d\times n}    
    \end{align*}
    as a $d\times n$ matrix-form representation of $v$.

\paragraph{Construction of $f$ Related Function $E$ and $T$.}
The continuity of $f$ within a closed region guarantees it to be bounded in $\infty$-norm. Let $B_0$ denote this bound. For any $a_K, a_Q \in \R^{d\times n}$, we define 
\begin{align*}
& ~ E(a_K,a_Q) := {1}_{d\times n}-\frac{f(a_K,a_Q)}{B_0} \\
& ~ T(a_K,a_Q) := {1}_{d\times n}+\frac{f(a_K,a_Q)}{B_0}.
\end{align*} 
We define $(E+T)(a_K,a_Q)= E(a_K,a_Q)+T(a_K,a_Q)$. By the definition of $E$ and $T$, $(E+T)(a_K,a_Q)\equiv {2}_{d\times n}$ for any $a_K,a_Q\in \R^{d\times n}$.

\begin{claim}
\begin{remark}[Intuition behind $E$ and $T$]
    $E$ and $T$ are constructed to satisfy $3$ conditions:
    \begin{itemize}
        \item $T(Z_k,Z_Q)+E(Z_K,Z_Q) \equiv 2$.
        \item 
        $T(Z_k,Z_Q)-E(Z_K,Z_Q) = 2f(Z_K,Z_Q)/B_0$.
        \item 
        $T,E > 0$ for any input.
    \end{itemize}
    The first condition is used to configure the denominator in the $\Softmax$ expression of attention to a constant value.
    The second condition is used to form the value of $f$ in the 
\end{remark}
\end{claim}

    For simplicity, same as in \cref{proof:thm:self-univ-infinite}, define 
    \begin{align*}
    G:= P^{dn}.    
    \end{align*}

    We now construct the $\li_K$ and $\li_Q$ layers to be
    \begin{align*}
    \li_K(Z_K)
        &:=
        \sum_{j=0}^{G-1}
        \left(
        \sum_{k=0}^{(n-1)}(Z_K\onehot{n}{k+1})^\top{(v_j)}_{kd+1:kd+d}
        \right)
        \onehot{2dG^2+G}{2dG^2+j+1}
        \sum_{s=0}^{dG-1}\left(\onehot{2dG^2}{j+s+1}+\onehot{2dG^2}{j+s+dG^2+1}\right)^\top\\
        &~~~~~+ 
        \begin{bmatrix}
        I_{2dG^2}\\
        {0}_{G \times 2dG^2}\\
        \end{bmatrix},\\
        \li_Q(Z_Q)
        &:=
        \sum_{j=0}^{G-1}
        \left(
        \sum_{k=0}^{(n-1)}(Z_Q\onehot{n}{k+1})^\top{(v_j)}_{kd+1:kd+d}
        \right)
        \onehot{n+G}{j+1}
        \begin{bmatrix}
            {1}_{1\times n} & {0}_{1\times (2dG^2-n)}
        \end{bmatrix}\\
        &~~~~~+ 
        \begin{bmatrix}
        {0}_{G \times n} & {0}_{G\times (2dG^2-n)}\\
        I_{n} & {0}_{n\times (2dG^2-n)}\\
        \end{bmatrix}.\\
    \end{align*}

    Same as that in \cref{thm:self-univ}, we have
    \begin{align}
    \label{eq:linear_K_K}
        &\sum_{k=0}^{(n-1)}(Z_K\onehot{n}{k+1})^\top{(v_j)}_{kd+1:kd+d} = v_j^\top \tilde{Z}_K,\\
        \label{eq:linear_K_Q}
        &\sum_{k=0}^{(n-1)}(Z_Q\onehot{n}{k+1})^\top{(v_j)}_{kd+1:kd+d} = v_j^\top \tilde{Z}_Q,
    \end{align}
    for $j\in \{0,1,2,\cdots,G-1\}$.

    We now calculate the output of $\li_K$ and $\li_Q$.

    For $\li_K$, we have
    \begin{align}
        \li_K(Z_K) 
        = & ~ 
        \sum_{j=0}^{G-1}
        v_j^\top \tilde{Z}_K
        \onehot{2dG^2+G}{2dG^2+j+1}
        \sum_{s=0}^{dG-1}\left(\onehot{2dG^2}{j+s+1}+\onehot{2dG^2}{j+s+dG^2+1}\right)^\top
        + 
        \begin{bmatrix}
        I_{2dG^2}\\
        {0}_{G \times 2dG^2}\\
        \end{bmatrix} \nonumber\\
        = & ~
        \begin{bmatrix}
            I_{2dG^2} \\
            \sum_{j=0}^{G-1}
            v_j^\top \tilde{Z}_K
            \sum_{s=0}^{dG-1}\left(\onehot{2dG^2}{j+s+1}+\onehot{2dG^2}{j+s+dG^2+1}\right)^\top
        \end{bmatrix}\annot{by \eqref{eq:linear_K_K}}\nonumber\\
        = & ~
        \begin{bmatrix}
            I_{dG^2} & {0}_{dG^2\times dG^2} \\
            {0}_{dG^2\times dG^2} & I_{dG^2} \\
            \sum_{j=0}^{G-1}
            v_j^\top \tilde{Z}_K
            \sum_{s=0}^{dG-1}\left(\onehot{2dG^2}{j+s+1}\right)^\top &\sum_{j=0}^{1-1}
            v_j^\top \tilde{Z}_K
            \sum_{s=0}^{dG-1}\left(\onehot{2dG^2}{j+s+1}\right)^\top
        \end{bmatrix}
        \nonumber\\
        = & ~
        \label{eq:linearK_c}
        \begin{bmatrix}
            I_{dG^2} & {0}_{dG^2\times dG^2} \\
            {0}_{dG^2\times dG^2} & I_{dG^2} \\
            X_K & X_K
        \end{bmatrix}, \\
        \li_Q(Z_Q) 
        = & ~ 
        \sum_{j=0}^{G-1}
        v_j^\top \tilde{Z}_Q
        \onehot{n+G}{j+1}
        \begin{bmatrix}
            {1}_{1\times n} & {0}_{1\times (2dG^2-n)}
        \end{bmatrix}+
        \begin{bmatrix}
        {0}_{G \times n} & {0}_{G\times (2dG^2-n)}
        \\
        I_{n} & {0}_{n\times (2dG^2-n)}
        \end{bmatrix} \nonumber\\
        = & ~
        \begin{bmatrix}
            \sum_{j=0}^{G-1}v_j^\top \tilde{Z}_Q
            \onehot{2dG^2+G}{j+1}
            {1}_{1\times n} 
            & {0}_{1\times (2dG^2-n)}
            \\
            I_{n} & {0}_{n\times (2dG^2-n)}
        \end{bmatrix}\annot{by \eqref{eq:linear_K_Q}}\nonumber\\
        = & ~
        \label{eq:linearQ_c}
        \begin{bmatrix}
            X_Q & {0}_{G\times (2dG^2-n)} \\
            I_{n} & {0}_{n\times (2dG^2-n)}
        \end{bmatrix},
    \end{align}
    in which $X_K$ and $X_Q$ are defined as
    \begin{align*}
        X_K &:=
        \underbrace{
        \begin{bmatrix}
            v_0^\top\tilde{Z}_K{1}_{1\times dG}&v_1^\top\tilde{Z}_K{1}_{1\times dG}&v_2^\top\tilde{Z}_K{1}_{1\times dG}&
            \cdots
            &v_{G-1}^\top\tilde{Z}_K{1}_{1\times dG}
        \end{bmatrix}
        }_{1\times dG^2},\\
        X_Q &:=
        \underbrace{
        \begin{bmatrix}
            v_0^\top\tilde{Z}_Q{1}_{1\times n}\\
            v_1^\top\tilde{Z}_Q{1}_{1\times n}\\
            v_2^\top\tilde{Z}_Q{1}_{1\times n} \\
            \cdots\\
            v_{G-1}^\top\tilde{Z}_Q{1}_{1\times n}
        \end{bmatrix}
        }_{G \times n}.
    \end{align*}
    
    We now construct the $W_k$ and $W_Q$ matrices in the self-attention block and calculate the output of $\softmax{K^\top Q}$.

    In the following, we define $W_K$ in parts.
    First, we present it as a block matrix
    \begin{align}
    \label{eq:cros_prf_w_k}
        W_K := 
        \begin{bmatrix}
            {0}_{1\times dG^2} & {0}_{1\times dG^2} & 1 \\
            W_0 & W_0 & 0 \\
            W_1 & W_1 & 0 \\
            W_T & W_E & 0
        \end{bmatrix}.
    \end{align}
    We then define the submatrices in \eqref{eq:cros_prf_w_k} as follows
    \begin{align*}
        W_0 := & ~
        \begin{bmatrix}
            -\frac{\|v_0\|_2^2}{2}{1}_{1\times dG} + \Bar{W}_0 & -\frac{\|v_1\|_2^2}{2}{1}_{1\times dG} + \Bar{W}_0 & -\frac{\|v_2\|_2^2}{2}{1}_{1\times dG} + \Bar{W}_0 & \cdots & -\frac{\|v_{G-1}\|_2^2} {2}{1}_{1\times dG}+ \Bar{W}_0
        \end{bmatrix},\\
        W_T := & ~ 
        \begin{bmatrix}
            W_T^{(0)} & W_T^{(1)} & \cdots & W_T^{(G-1)}
        \end{bmatrix}, \\
        W_E := & ~ 
        \begin{bmatrix}
            W_E^{(0)} & W_E^{(1)} & \cdots & W_E^{(G-1)}
        \end{bmatrix},\\
        W_1 := & ~ 
        \begin{bmatrix}
            \Bar{W}_1 & \Bar{W}_1 & \Bar{W}_1 & \cdots & \Bar{W}_1
        \end{bmatrix}_{G\times dG^2},
    \end{align*}
    in which
    \begin{align*}
        &\Bar{W}_0 := 
        \begin{bmatrix}
            -\frac{\|v_0\|_2^2}{2}{1}_{1\times d}& -\frac{\|v_1\|_2^2}{2}{1}_{1\times d}& -\frac{\|v_2\|_2^2}{2}{1}_{1\times d}& \cdots & -\frac{\|v_{G-1}\|_2^2} {2}{1}_{1\times d}
        \end{bmatrix}\\
        &W_T^{(j)} := 
        \underbrace{
        \begin{bmatrix}
            \ln(T(\tilde{v}_j,\tilde{v}_0))^\top & \ln(T(\tilde{v}_j,\tilde{v}_1))^\top &\cdots &\ln(T(\tilde{v}_j,\tilde{v}_{G-1}))^\top
        \end{bmatrix}
        }_{d\times Gn}, \quad j \in \{0,1,2,\cdots, G-1\}, \\
        &W_E^{(j)} := 
        \underbrace{
        \begin{bmatrix}
        \ln(E(\tilde{v}_j,\tilde{v}_0))^\top & \ln(E(\tilde{v}_j,\tilde{v}_1))^\top &\cdots &\ln(E(\tilde{v}_j,\tilde{v}_{G-1}))^\top
        \end{bmatrix}
        }_{d\times Gn}, \quad j \in \{0,1,2,\cdots, G-1\},\\
        &\Bar{W}_1 :=
        \underbrace{
        \begin{bmatrix}
            R\onehot{G}{1}{1}_{1\times d} & R\onehot{G}{2}{1}_{1\times d} & \cdots & R\onehot{G}{G}{1}_{1\times d}
        \end{bmatrix}
        }_{G\times d}.
    \end{align*}

    The definition of $W_K$ yields that
    \begin{align*}
        K &:= W_K\li_K(Z_K) \\
        = & ~
        \begin{bmatrix}
            {0}_{1\times dG^2} & {0}_{1\times dG^2} & 1 \\
            W_0 & W_0 & 0 \\
            W_1 & W_1 & 0 \\
            W_T & W_E & 0
        \end{bmatrix}
        \cdot
        \begin{bmatrix}
            I_{dG^2} & {0}_{dG^2\times dG^2} \\
            {0}_{dG^2\times dG^2} & I_{dG^2} \\
            X_K & X_K
        \end{bmatrix}\annot{By \eqref{eq:linearK_c}}\\
        = & ~
        \begin{bmatrix}
            X_K & X_K \\
            W_0 & W_0\\
            W_1 & W_1\\
            W_T & W_E
        \end{bmatrix}.
    \end{align*}

    Next, we construct the $W_Q$ matrix as
    \begin{align*}
        W_Q:=
        \begin{bmatrix}
        {0}_{1\times G} & R{1}_{1\times n} \\
        {0}_{1\times G} & R{1}_{1\times n} \\
        I_G & {0}_{1\times n}\\
        {0}_{1\times G} & I_n
        \end{bmatrix}.
    \end{align*}

    In this definition, the $Q$ matrix in attention can be calculated as follows
    \begin{align*}
        Q 
        := & ~  W_Q\li_Q(Z_Q)\\
        = & ~ 
        \begin{bmatrix}
        {0}_{1\times G} & R{1}_{1\times n} \\
        {0}_{1\times G} & R{1}_{1\times n} \\
        I_G & {0}_{1\times n}\\
        {0}_{1\times G} & I_n
        \end{bmatrix}
        \cdot
        \begin{bmatrix}
            X_Q & {0}_{G\times (2dG^2-n)} \\
            I_{n} & {0}_{n\times (2dG^2-n)}
        \end{bmatrix}\annot{By \eqref{eq:linearQ_c}}\\
        = & ~
        \begin{bmatrix}
        R{1}_{1\times n}&{0}_{1 \times (2dG^2-n)} \\
        R{1}_{1\times n}&{0}_{1 \times (2dG^2-n)} \\
        X_Q &{0}_{G \times (2dG^2-n)}\\
        I_n & {0}_{n \times (2dG^2-n)}
        \end{bmatrix}.
    \end{align*}

    Now we calculate the attention matrix $\softmax{K^\top Q}$.

    $K^\top Q$ can be calculated as follows
    \begin{align*}
        K^\top Q
        = & ~
        \begin{bmatrix}
            X_K & X_K \\
            W_0 & W_0\\
            W_1 & W_1\\
            W_T & W_E
        \end{bmatrix}^\top
        \begin{bmatrix}
        R{1}_{1\times n}&{0}_{1 \times (2dG^2-n)} \\
        R{1}_{1\times n}&{0}_{1 \times (2dG^2-n)} \\
        X_Q &{0}_{G \times (2dG^2-n)}\\
        I_n & {0}_{n \times (2dG^2-n)}
        \end{bmatrix}\\
        = & ~
        \begin{bmatrix}
        (RX_K^\top+RW_0^\top){1}_{1\times n}+W_1^\top X_Q+W_T^\top & {0}_{dG^2\times (2dG^2-n)}\\
        (RX_K^\top+RW_0^\top){1}_{1\times n}+W_1^\top X_Q+W_E^\top & {0}_{dG^2\times (2dG^2-n)}\\
        \end{bmatrix}.
    \end{align*}

    The $W_1^\top X_Q$ in the expression of $K^\top Q$ matrix is further calculated as
    \begin{align*}
        W_1^\top X_Q = & ~
        \begin{bmatrix}
            \Bar{W}_1 & \Bar{W}_1 & \Bar{W}_1 & \cdots & \Bar{W}_1
        \end{bmatrix}_{G\times dG^2}^\top
        X_Q\\
        = & ~
        \begin{bmatrix}
            \Bar{W}_1^\top X_Q\\
            \Bar{W}_1^\top X_Q\\
            \vdots\\
            \Bar{W}_1^\top X_Q\\
        \end{bmatrix}_{dG^2\times G}.
    \end{align*}
    
    We define $Q_1:=\Bar{W}_1^\top X_Q$, then $W_1^\top X_Q$ can be denoted as stacking this block vertically for $G$ times.

    In this definition, $Q_1$ matrix can be expressed as
    \begin{align}
    \label{eq:Q1}
        Q_1 
        := & ~  
        \Bar{W}_1^\top X_Q \nonumber\\
        = & ~
        \begin{bmatrix}
            R\onehot{G}{1}{1}_{1\times d} & R\onehot{G}{2}{1}_{1\times d} & \cdots & R\onehot{G}{G}{1}_{1\times d}
        \end{bmatrix}^\top
        \begin{bmatrix}
            v_0^\top\tilde{Z}_Q{1}_{1\times n}\\
            v_1^\top\tilde{Z}_Q{1}_{1\times n}\\
            v_2^\top\tilde{Z}_Q{1}_{1\times n} \\
            \vdots\\
            v_{G-1}^\top\tilde{Z}_Q{1}_{1\times n}
        \end{bmatrix}\nonumber\\
        = & ~
        \begin{bmatrix}
            R\onehot{G}{1}{1}_{d} 
            \\ R\onehot{G}{2}{1}_{d} 
            \\ \vdots
            \\ R\onehot{G}{G}{1}_{d}
        \end{bmatrix}
        \cdot
        \begin{bmatrix}
            v_0^\top\tilde{Z}_Q{1}_{1\times n}\\
            v_1^\top\tilde{Z}_Q{1}_{1\times n}\\
            v_2^\top\tilde{Z}_Q{1}_{1\times n} \\
            \vdots\\
            v_{G-1}^\top\tilde{Z}_Q{1}_{1\times n}
        \end{bmatrix}\nonumber\\
        = & ~
        \begin{bmatrix}
            Rv_0^\top\tilde{Z}_Q{1}_{d\times n}\\
            Rv_1^\top\tilde{Z}_Q{1}_{d\times n}\\
            Rv_2^\top\tilde{Z}_Q{1}_{d\times n} \\
            \vdots\\
            Rv_{G-1}^\top\tilde{Z}_Q{1}_{d\times n}
        \end{bmatrix}.
    \end{align}

    The calculation of $\softmax{K^\top Q}$ can be disassembled into two parts, the numerator $\exp(\softmax{K^\top Q})$ in the expression of $\Softmax$ and the denominator of every column of $\softmax{K^\top Q}$, as in the expression of $\Softmax$, explicitly written out as $\sum_{j=1}^{2dG}\exp(K^\top Q)_{ij}$ for each $i\in[2dG]$.

    We calculate $\exp(K^\top Q)$ as follows
    \begin{align}
    \label{eq:numerator_c}
        \exp(K^\top Q)
        = & ~
        \begin{bmatrix}
        \exp((RX_K^\top+RW_0^\top){1}_{1\times n} +RW_1^\top X_Q)\odot \exp(W_T^\top) & 1_{dG^2\times (2dG^2-n)}\\
        \exp((RX_K^\top+RW_0^\top) 1_{1\times n} +RW_1^\top X_Q)\odot \exp(W_E^\top) & 1_{dG^2\times (2dG^2-n)}\\
        \end{bmatrix}.
    \end{align}

    For the denominator, we calculate it in columns. 
    Let $i$ denote the column which we calculate the denominator in $\Softmax$. 
    When $i \in \{n+1,n+2,\cdots,2dG^2\}$, the $i$-th column has $1$ in every entry.
    Thus the sum of all entries in this column equals to $1 \cdot 2dG = 2dG$.

    And when $i\in [n]$, we have
    \begin{align}
    \label{eq:denominator_c}
        & ~ \sum_{j=1}^{2dG^2}\exp(K^\top Q)_{i,j} \nonumber\\
        = & ~ %
        \sum_{j_1=1}^{G}
        \sum_{j_2=1}^{G}
        \Bigg[
        (1_{1\times d} T(\tilde{v}_{j_1-1},\tilde{v}_{j_2-1})_{:,i}
        +
        1_{1\times d} E(\tilde{v}_{j_1-1},\tilde{v}_{j_2-1})_{:,i}) \nonumber\\
        & ~ \hspace{4em} \cdot
        \exp(R\left(
        v_{j_1-1}^\top\tilde{Z}_K-\frac{\|v_{j_1-1}\|_2^2}{2}
        +
        v_{j_2-1}^\top\tilde{Z}_Q-\frac{\|v_{j_2-1}\|_2^2}{2}
        \right))
        \Bigg] 
        \nonumber\\
        = & ~ %
        \sum_{j_1=1}^{G}
        \sum_{j_2=1}^{G}
        \left[
        1_{1\times d} (E+T)(v_{j_1-1},v_{j_2-1})_{:,i}
        \cdot
        \exp(R\left(
        v_{j_1-1}^\top\tilde{Z}_K-\frac{\|v_{j_1-1}\|_2^2}{2}
        +
        v_{j_2-1}^\top\tilde{Z}_Q-\frac{\|v_{j_2-1}\|_2^2}{2}
        \right))
        \right] 
        \nonumber\\
        = & ~ 
        \sum_{j_1=1}^{G}
        \sum_{j_2=1}^{G}
        \left[
        ({1}_{1\times d} ({2}_{d\times n})_{:,i}) \cdot
        \exp(R\left(
        v_{j_1-1}^\top\tilde{Z}_K-\frac{\|v_{j_1-1}\|_2^2}{2}
        +
        v_{j_2-1}^\top\tilde{Z}_Q-\frac{\|v_{j_2-1}\|_2^2}{2}
        \right))
        \right]
        \nonumber\\
        = & ~
        \sum_{j_1=1}^{G}
        \sum_{j_2=1}^{G}
        2d 
        \cdot
        \exp(R\left(
        v_{j_1-1}^\top\tilde{Z}_K-\frac{\|v_{j_1-1}\|_2^2}{2}
        +
        v_{j_2-1}^\top\tilde{Z}_Q-\frac{\|v_{j_2-1}\|_2^2}{2}
        \right)), \quad i\in [n].
    \end{align}
We observe from \eqref{eq:denominator_c}, that $\sum_{j=1}^{2dG^2}\exp(K^\top Q)_{ij}$ is \textbf{invariant} of  $i$ for $i \in  [n]$. In this case, we define
\begin{align*}
    \alpha(Z_K,Z_Q)
    := & ~
    \frac{1}{2d}
    \sum_{j=1}^{2dG^2}\exp(K^\top Q)_{i,j}\\
    = & ~
        \sum_{j_1=1}^{G}
        \sum_{j_2=1}^{G} 
        \exp(R\left(
        v_{j_1-1}^\top\tilde{Z}_K-\frac{\|v_{j_1-1}\|_2^2}{2}
        +
        v_{j_2-1}^\top\tilde{Z}_Q-\frac{\|v_{j_2-1}\|_2^2}{2}
        \right))
\end{align*} to denote the $1/2d$ of this value invariant of $i$ for simplicity.

Because
\begin{align*}
\alpha(Z_K,Z_Q)
=
\frac{1}{2d}\sum_{j=1}^{2dG^2}\exp(K^\top Q)_{i,j},
\end{align*}
from \eqref{eq:numerator_c} and \eqref{eq:denominator_c} we have
\begin{align*}
        & ~ \softmax{K^\top Q} \\
        = & ~
        \underbrace{
        \exp(K^\top Q)}_{\text{nominator of }\Softmax} \odot 
        \underbrace{
        \begin{bmatrix}
            \frac{1}{\sum_{j=1}^{2dG^2}\exp(K^\top Q)_{1j}}{1}_{2dG\times n} 
            & \frac{1}{2dG^2} {1}_{2dG\times(2dG-n)}   
        \end{bmatrix}}_{\text{denominator of }\Softmax} \annot{By $\frac{1}{\sum_{j=1}^{2dG}\exp(K^\top Q)_{ij}}$ is invariant of $i$ for $i\in [n]$}\\
        = & ~
        \exp(K^\top Q) \odot 
        \begin{bmatrix}
            \frac{1}{2d\alpha(Z_K,Z_Q)}{1}_{2dG\times n} 
            & \frac{1}{2dG^2} {1}_{2dG\times(2dG-n)}
        \end{bmatrix}\\
        = & ~
        \begin{bmatrix}
        \exp((RX_K^\top+RW_0^\top){1}_{1\times n} +RW_1^\top X_Q)\odot \exp(W_T^\top) & 1_{dG^2\times (2dG^2-n)}\\
        \exp((RX_K^\top+RW_0^\top) 1_{1\times n} +RW_1^\top X_Q)\odot \exp(W_E^\top) & 1_{dG^2\times (2dG^2-n)}\\
        \end{bmatrix} \\
        & ~ \odot 
        \underbrace{
        \begin{bmatrix}
            \frac{1}{2d\alpha(Z_K,Z_Q)}{1}_{2dG\times n} 
            & \frac{1}{2dG^2} {1}_{2dG\times(2dG-n)}
        \end{bmatrix}
        }_{\text{denominator in} \Softmax}
        \annot{By \eqref{eq:numerator_c}}\\
        = & ~
        \begin{bmatrix}
        \frac{1}{2d\alpha(Z_K,Z_Q)}\exp((RX_K^\top+RW_0^\top){1}_{1\times n} +RW_1^\top X_Q)\odot \exp(W_T^\top) & \frac{1}{2dG^2}1_{dG^2\times (2dG^2-n)}
        \\
        \frac{1}{2d\alpha(Z_K,Z_Q)}\exp((RX_K^\top+RW_0^\top) 1_{1\times n} +RW_1^\top X_Q)\odot \exp(W_E^\top) & \frac{1}{2dG^2}1_{dG^2\times (2dG^2-n)}
        \end{bmatrix}.        
\end{align*}

Now we've defined and calculated the attention score matrix $\Softmax(K^\top Q)$, we go on to construct the $W_V$ matrix and calculate the result of multiplying $V=W_V\li(Z_K)$ to the attention score matrix.

We define $W_V$ as
\begin{align*}
    W_V 
    := & ~
    \underbrace{
    \begin{bmatrix}
        X_2 & -X_2 & 0_d
    \end{bmatrix}
    }_{d\times (2dG^2+1)},
\end{align*}
where
\begin{align*}
    X_2 =
    \underbrace{
    \begin{bmatrix}
        I_d & I_d & \cdots & I_d
    \end{bmatrix}
    }_{d \times dG^2}.
\end{align*}

This yields the $V$ matrix to be
\begin{align*}
    V
    = & ~
    W_V \li(Z_K)\\
    = & ~
    \underbrace{
    \begin{bmatrix}
        X_2 & -X_2 & 0_d
    \end{bmatrix}
    }_{d\times (2dG^2+1)}
    \cdot
    \underbrace{
    \begin{bmatrix}
            I_{dG^2} & {0}_{dG^2\times dG^2} \\
            {0}_{dG^2\times dG^2} & I_{dG^2} \\
            X_K & X_K
    \end{bmatrix}
    }_{(2dG^2+1) \times 2dG^2} \\
    = & ~
    \underbrace{
    \begin{bmatrix}
        X_2 & -X_2
    \end{bmatrix}
    }_{d\times 2dG^2}
    \cdot
    \underbrace{
    \begin{bmatrix}
            I_{dG^2} & {0}_{dG^2\times dG^2} \\
            {0}_{dG^2\times dG^2} & I_{dG^2}
    \end{bmatrix}
    }_{2dG^2 \times 2dG^2}\annot{since $X_K$ is multiplied by $0$}\\
    = & ~
    \begin{bmatrix}
        X_2 & -X_2
    \end{bmatrix} .
\end{align*}

With $V$, we compute the output of $V\Softmax(K^\top Q)$ as follows
\begin{align}
\label{eq:VsoftmaxKTQ}
    & ~ V\Softmax(K^\top Q)\nonumber\\
    = & ~
    \begin{bmatrix}
        X_2 & -X_2
    \end{bmatrix}
    \cdot
    \begin{bmatrix}
        \frac{1}{2d\alpha(Z_K,Z_Q)}\exp((RX_K^\top+RW_0^\top){1}_{1\times n} +RW_1^\top X_Q)\odot \exp(W_T^\top) & \frac{1}{2dG^2}1_{dG^2\times (2dG^2-n)}
        \\
        \frac{1}{2d\alpha(Z_K,Z_Q)}\exp(RX_K^\top+RW_0^\top) 1_{1\times n} \odot \exp(RW_1^\top X_Q)\odot \exp(W_E^\top) & \frac{1}{2dG^2}1_{dG^2\times (2dG^2-n)}
    \end{bmatrix} \nonumber\\
    = & ~    
    X_2
    \begin{bmatrix}
        \frac{1}{2d\alpha(Z_K,Z_Q)}\exp((RX_K^\top+RW_0^\top){1}_{1\times n} +RW_1^\top X_Q)\odot \exp(W_T^\top) & \frac{1}{2dG^2}1_{dG^2\times (2dG^2-n)}
    \end{bmatrix} \nonumber\\
    & ~  \underbrace{-
    X_2}_{-X_2 \text{~in~} [X_2 ~ -X_2]}
    \begin{bmatrix}
        \frac{1}{2d\alpha(Z_K,Z_Q)}\exp(RX_K^\top+RW_0^\top) 1_{1\times n} \odot \exp(RW_1^\top X_Q)\odot \exp(W_E^\top) & \frac{1}{2dG^2}1_{dG^2\times (2dG^2-n)}
    \end{bmatrix}\nonumber\\
    = & ~ 
    \frac{1}{2d\alpha(Z_K,Z_Q)}X_2
    \begin{bmatrix}
        \exp((RX_K^\top+RW_0^\top){1}_{1\times n} +RW_1^\top X_Q)\odot [\exp(W_T^\top)- 
        \exp(W_E^\top)]& 0_{dG^2\times (2dG^2-n)}
    \end{bmatrix}. 
\end{align}

To further calculate $V\Softmax(K^\top Q)$, we now calculate the result of its non-trivial part (the part beside $0_{dG^2\times (2dG^2-n)}$)
\begin{align}
\label{eq:non_trivial_c}
    X_2
    \begin{bmatrix}
        \exp((RX_K^\top+RW_0^\top){1}_{1\times n} +RW_1^\top X_Q)\odot [\exp(W_T^\top)- \exp(W_E^\top)]
    \end{bmatrix}.
\end{align}

We now calculate each part in \eqref{eq:non_trivial_c}
\begin{align}\label{eq:W_T-W_E}
    & ~ \exp(W_T^\top)- \exp(W_E^\top) \nonumber\\
    = & ~
    (\exp(
    \begin{bmatrix}
        W_T^{(0)} &
        W_T^{(1)} &
        \cdots &
        W_T^{(G-1)}
    \end{bmatrix}
    )
    - 
    \exp(
    \begin{bmatrix}
        W_E^{(0)} &
        W_E^{(1)} &
        \cdots &
        W_E^{(G-1)}
    \end{bmatrix}
    ))^\top \nonumber\\
    = & ~
    \begin{bmatrix}
        \exp\left(W_T^{(0)}\right)^\top - \exp(W_E^{(0)})^\top \\
        \exp\left(W_T^{(1)}\right)^\top - \exp(W_E^{(1)})^\top \\
        \vdots \\
        \exp\left(W_T^{(G-1)}\right)^\top - \exp(W_E^{(G-1)})^\top
    \end{bmatrix}.
\end{align}

In \eqref{eq:W_T-W_E}, we have\
\begin{align*}
    \exp\left(W_T^{(i)}\right)^\top - \exp(W_E^{(i)})^\top
    = & ~
    \begin{bmatrix}
            \exp(\ln(T(\tilde{v}_i,v_0)))-\exp(\ln(E(\tilde{v}_i,v_0))) \\ 
            \exp(\ln(T(\tilde{v}_i,v_1)))-\exp(\ln(E(\tilde{v}_i,v_0))) \\
            \vdots \\
            \exp(\ln(T(\tilde{v}_i,v_{G-1})))-\exp(\ln(E(\tilde{v}_i,v_0)))
    \end{bmatrix}\\
    = & ~
    \begin{bmatrix}
            T(\tilde{v}_i,v_0)-E(\tilde{v}_i,v_0) \\ 
            T(\tilde{v}_i,v_1)-E(\tilde{v}_i,v_0) \\
            \vdots \\
            T(\tilde{v}_i,v_{G-1})-E(\tilde{v}_i,v_0)
    \end{bmatrix}\\
    = & ~
    \begin{bmatrix}
        \frac{2f(\tilde{v}_i,v_0)}{B_0}\\
        \frac{2f(\tilde{v}_i,v_1)}{B_0}\\
        \vdots\\
        \frac{2f(\tilde{v}_i,v_{G-1})}{B_0}\\
    \end{bmatrix}.
\end{align*}

Thus \eqref{eq:W_T-W_E} is equal to
\begin{align}
\label{eq:aggre_part_1}
    \left(
    \exp\left(W_T^{(i)}\right)^\top - \exp(W_E^{(i)})^\top
    \right)_{(i-1)G+1:iG,:}
    = & ~
    \begin{bmatrix}
        \frac{2f(\tilde{v}_{i-1},v_0)}{B_0}\\
        \frac{2f(\tilde{v}_{i-1},v_1)}{B_0}\\
        \vdots\\
        \frac{2f(\tilde{v}_{i-1},v_{G-1})}{B_0}\\
    \end{bmatrix},\quad i\in [G].
\end{align}

We also calculate the other part $\exp((RX_K^\top+RW_0^\top){1}_{1\times n} +RW_1^\top X_Q)$ in separate parts
\begin{align*}
    \exp((RX_K^\top+RW_0^\top){1}_{1\times n})_{idG+jd+1 : idG+(j+1)d,:}
    = & ~
    \begin{bmatrix}
        \exp(v_0^\top \tilde{Z}_K-\frac{\|v_0\|_2^2}{2})1_{dG\times n}\\
        \exp(v_1^\top \tilde{Z}_K-\frac{\|v_1\|_2^2}{2})1_{dG \times n}\\
        \cdots
        \\
        \exp(v_{G-1}^\top \tilde{Z}_K-\frac{\|v_{G-1}\|_2^2}{2})1_{dG \times n}\\
    \end{bmatrix}_{idG+jd+1 : idG+(j+1)d,:}\\
    = & ~ 
    \exp(v_i^\top \tilde{Z}_K-\frac{\|v_i\|_2^2}{2})1_{d\times n},
\end{align*}
and
\begin{align*}
    \exp((RW_1^\top X_Q){1})_{idG+jd+1 : idG+(j+1)d,:}
    = & ~
    \begin{bmatrix}
        Q_1 \\
        Q_1 \\
        \vdots \\
        Q_1
    \end{bmatrix}_{idG+jd+1 : idG+(j+1)d,:} \annot{This is a stack of $G$ $Q_1$ in \eqref{eq:Q1}}\\
    = & ~
    (Q_1)_{jd+1 : (j+1)d,:} \\
    = & ~ 
    v_j^\top \tilde{Z}_Q 1_{d\times n}.
\end{align*}

Thus
\begin{align}
\label{eq:aggre_part_2}
    & ~ \exp((RX_K^\top+RW_0^\top){1}_{1\times n} +RW_1^\top X_Q)_{idG+jd+1 : idG+(j+1)d,:} \nonumber\\
    = & ~
    \exp(R(v_{i}^\top \tilde{Z}_K - \frac{\|v_{i}\|^2_2}{2}
    - \frac{\|v_{j}\|^2_2}{2} + v_{j}^\top \tilde{Z}_Q)) 
    1_{d \times n}
    ,\quad i,j\in \{0,1,\cdots,G-1\}.
\end{align}

Combing \eqref{eq:aggre_part_1} and \eqref{eq:aggre_part_2}, we have
\begin{align*}
    & ~ \begin{bmatrix}
        \exp((RX_K^\top+RW_0^\top){1}_{1\times n} +RW_1^\top X_Q)\odot [\exp(W_T^\top)- \exp(W_E^\top)]
    \end{bmatrix}_{idG+(j-1)d+1 : idG+jd,:} \\
    = & ~
    \exp(R(v_{i}^\top \tilde{Z}_K - \frac{\|v_{i}\|^2_2}{2}
    - \frac{\|v_{j}\|^2_2}{2} + v_{j}^\top \tilde{Z}_Q)) 
    1_{d \times n}
    \odot
    \frac{2f(\tilde{v}_{i-1},v_{j-1})}{B_0},\quad i,j\in \{0,1,\cdots,G-1\}.
\end{align*}

Thus we compute \eqref{eq:non_trivial_c} as 
\begin{align*}
    & ~ (X_2
    \begin{bmatrix}
        \exp((RX_K^\top+RW_0^\top){1}_{1\times n} +RW_1^\top X_Q)\odot [\exp(W_T^\top)- \exp(W_E^\top)]
    \end{bmatrix})\\
    = & ~
    \sum_{i=0}^{G-1}\sum_{j=0}^{G-1} (X_2)_{:,idG+jd+1 : idG+(j+1)d} \\
    & ~ \quad\quad\quad \cdot R
    \begin{bmatrix}
        \exp((RX_K^\top+RW_0^\top){1}_{1\times n} +RW_1^\top X_Q)\odot [\exp(W_T^\top)- \exp(W_E^\top)]
    \end{bmatrix}_{idG+(j-1)d+1 : idG+jd,:}\\
    = & ~
    \sum_{i=0}^{G-1}\sum_{j=0}^{G-1} I_d \cdot
    \exp(R(v_{i}^\top \tilde{Z}_K - \frac{\|v_{i}\|^2_2}{2}
    - \frac{\|v_{j}\|^2_2}{2} + v_{j}^\top \tilde{Z}_Q)) 
    1_{d \times n}
    \odot
    \frac{2f(\tilde{v}_{i-1},v_{j-1})}{B_0} 
    \annot{Because $X_2$ is a horizontal stack of $I_d$}\\
    = & ~
    \sum_{i=0}^{G-1}\sum_{j=0}^{G-1}
    \exp(R(v_{i}^\top \tilde{Z}_K - \frac{\|v_{i}\|^2_2}{2}
    - \frac{\|v_{j}\|^2_2}{2} + v_{j}^\top \tilde{Z}_Q)) 
    1_{d \times n}
    \odot
    \frac{2f(\tilde{v}_{i-1},v_{j-1})}{B_0}.
\end{align*}

We now put back the $1/2d\alpha(Z_K,Z_Q)$ in \eqref{eq:VsoftmaxKTQ} and calculate the final output as
\begin{align*}
    & ~ V\Softmax{K^\top Q} \nonumber\\
    = & ~
    \frac{1}{2d\alpha(Z_K,Z_Q)}X_2
    \begin{bmatrix}
        \exp((RX_K^\top+RW_0^\top){1}_{1\times n} +RW_1^\top X_Q)\odot [\exp(W_T^\top)- \exp(W_E^\top)]
        &
        0_{dG^2\times (2dG^2-n)}
    \end{bmatrix} \nonumber\\
    = & ~
    \frac{1}{2d\alpha(Z_K,Z_Q)}
    \sum_{i=0}^{G-1}\sum_{j=0}^{G-1}
    \begin{bmatrix}
    \exp(R(v_{i}^\top \tilde{Z}_K - \frac{\|v_{i}\|^2_2}{2}
    - \frac{\|v_{j}\|^2_2}{2} + v_{j}^\top \tilde{Z}_Q)) 
    1_{d \times n}
    \odot
    \underbrace{
    \frac{2f(\tilde{v}_{i-1},v_{j-1})}{B_0}}_{\exp(W_T^\top)- \exp(W_E^\top)}
    &
    0_{d\times (2dG^2-n)}
    \end{bmatrix}
     \nonumber\\
    = & ~
    \begin{bmatrix}
    \frac{1}{dB_0}
    \sum_{i=0}^{G-1}\sum_{j=0}^{G-1}
    \frac{\exp(R(v_{i}^\top \tilde{Z}_K - \frac{\|v_{i}\|^2_2}{2}
    - \frac{\|v_{j}\|^2_2}{2} + v_{j}^\top \tilde{Z}_Q)) 
    1_{d \times n}
    \odot
    \frac{f(\tilde{v}_{i-1},v_{j-1})}{B_0}}{\alpha(Z_K,Z_Q)}
    &
    0_{d\times (2dG^2-n)}
    \end{bmatrix}.
\end{align*}

Next, we construct $W_O$ to be
\begin{align*}
    W_O :=
    \begin{bmatrix}
        dB_0I_n \\
        0_{(2dG^2-n)\times n}
    \end{bmatrix}.
\end{align*}

This yields the final output of $\atn\circ \li$ to be
\begin{align}
    & ~ \atn \circ \li(Z) \nonumber\\
    = & ~ V\Softmax{K^\top Q}W_O\nonumber\\
    = & ~
    \begin{bmatrix}
    \frac{1}{dB_0}
    \sum_{i=0}^{G-1}\sum_{j=0}^{G-1}
    \frac{\exp(R(v_{i}^\top \tilde{Z}_K - \frac{\|v_{i}\|^2_2}{2}
    - \frac{\|v_{j}\|^2_2}{2} + v_{j}^\top \tilde{Z}_Q)) 
    1_{d \times n}
    \odot
    \frac{f(\tilde{v}_{i-1},v_{j-1})}{B_0}}{\alpha(Z_K,Z_Q)}
    &
    0_{d\times (2dG^2-n)}
    \end{bmatrix}
    \cdot
    \underbrace{
    \begin{bmatrix}
        dB_0I_n \\
        0_{(2dG^2-n)\times n}
    \end{bmatrix}}_{W_O}\nonumber\\
    = & ~
    \label{eq:attn_output_c}
    \sum_{i=0}^{G-1}\sum_{j=0}^{G-1}
    \frac{\exp(R(v_{i}^\top \tilde{Z}_K - \frac{\|v_{i}\|^2_2}{2}
    - \frac{\|v_{j}\|^2_2}{2} + v_{j}^\top \tilde{Z}_Q)) 
    1_{d \times n}
    \odot
    \frac{f(\tilde{v}_{i-1},v_{j-1})}{B_0}}{\alpha(Z_K,Z_Q)}.
\end{align}

\paragraph{Estimation of Error between \texorpdfstring{$\atn\circ\li$}{} and \texorpdfstring{$f$}{}}

We now calculate the loss between the result in \eqref{eq:attn_output_c} and the target function $f$.
For simplicity, we first define $\tilde{Z} := [[\tilde{Z}_K^\top,\tilde{Z}_Q^\top]^\top]$ to accommodate to the expression of affine functions.

\begin{definition}[Max-Affine Function on \texorpdfstring{$\tilde{Z}$}{}.]
    Let ${\rm Aff}_{i,j} \in \R^{dn} \to \R$, $j\in \{0.1.2.\cdots, G-1\}$ denote a group of affine functions defined as
    \begin{align*}
        {\rm Aff}_{i,j}(\tilde{Z}) = v_{i}^\top\tilde{Z}_K+v_{j}^\top \tilde{Z}_Q- \half \|v_{i}\|_2^2 - \half \|v_{j}\|_2^2 , \quad i,j\in \{0,1,2,\cdots,G-1\}. 
    \end{align*}
    Then let ${\rm MaxAff} \in \R^{dn} \to \R$ denote a max affine function whose affine components are $\{{\rm Aff}_{i,j} | i,j\in \{0,1,2,\cdots,G-1\}\}$. Explicitly defined as:
    \begin{align*}
        \ma{\tilde{Z}} = \max_{i,j\in \{0,1,2,\cdots,G-1\} } \left\{{\rm Aff}_{i,j}(\tilde{Z})\right\}.
    \end{align*}

    In the following discussion, we use $\eta \in \{0,1,\cdots,G-1\}^2$ to refer to a pair of coefficients $(i,j)$, and denote $A_{i,j}$ as $A_\eta$ for the corresponding $\eta$. 
    Furthermore, we denote the two labels encapsulated in $\eta$ as $i_\eta$ and $j_\eta$
\end{definition}

Because the target function $f$ is a continuous function on a closed domain, the function $f$ is uniformly continuous. 
Thus for $\epsilon$, there exists a $\delta >0$ such that for any $Z^{(1)}=[Z^{(1)}_K,Z^{(1)}_Q],~Z^{(2)}=[Z^{(2)}_K,Z^{(2)}_Q]$, as long as $\|Z^{(1)}-Z^{(2)}\|_\infty \leq \delta$, we have $\|f(Z^{(1)})-f(Z^{(1)})\|_\infty \leq \epsilon/3$.

According to this $\delta$, we divide the affine components of ${\rm MaxAff}$ into three parts, the maximal component (and also with the smallest label on both entry), whose label is denoted as $\eta_m$, the group of affine components equal to the maximal component or smaller than it by no more than $\delta$, and finally, the other ${\rm Aff}_\eta$. 
We write out the labels of these groups of components as follows
\begin{align*}
    &\eta_m := \min_{\eta \in \{0,1,2,\cdots,G-1\}^2}
    \{
    \af{\eta}(\tilde{Z})= \ma{\tilde{Z}}
    \}, \\
    &E_0 :=
    \{
    \eta \mid  \ma{\tilde{Z}}-\af{\eta}(\tilde{Z})\leq\delta
    \},\\
    &E_1 :=
    \{
    \eta \mid  \ma{\tilde{Z}}-\af{\eta}(\tilde{Z})>\delta
    \}.
\end{align*}

For any pair of $\eta_1,\eta_2 \in\{0,1,\cdots,G-1\}^2$, we denote that
    \begin{align}
    \label{eq:A-A=distdif}
        & ~ \af{\eta_1}(\tilde{Z})-
        \af{\eta_2}(\tilde{Z}) \nonumber\\
        = & ~
        v_{i_{\eta_1}}^\top\tilde{Z}_K-\frac{\|v_{i_{\eta_1}}\|_2^2}{2}
        +
        v_{j_{\eta_1}}^\top\tilde{Z}_Q-\frac{\|v_{j_{\eta_1}}\|_2^2}{2}
        -
        \left(
        v_{i_{\eta_2}}^\top\tilde{Z}_K-\frac{\|v_{i_{\eta_2}}\|_2^2}{2}
        +
        v_{j_{\eta_2}}^\top\tilde{Z}_Q-\frac{\|v_{j_{\eta_2}}\|_2^2}{2}
        \right) \nonumber\\
        = & ~
        -\frac{\|\tilde{Z}_K\|_2^2}{2}+v_{i_{\eta_1}}^\top\tilde{Z}_K-\frac{\|v_{i_{\eta_1}}\|_2^2}{2}
        -
        \frac{\|\tilde{Z}_Q\|_2^2}{2}
        +
        v_{j_{\eta_1}}^\top\tilde{Z}_Q-\frac{\|v_{j_{\eta_1}}\|_2^2}{2} \\
        & ~ -
        \left(
        -\frac{\|\tilde{Z}_K\|_2^2}{2}
        +
        v_{i_{\eta_2}}^\top\tilde{Z}_K-\frac{\|v_{i_{\eta_2}}\|_2^2}{2}
        -
        \frac{\|\tilde{Z}_Q\|_2^2}{2}
        +
        v_{j_{\eta_2}}^\top\tilde{Z}_Q-\frac{\|v_{j_{\eta_2}}\|_2^2}{2}
        \right) \nonumber\\
        = & ~
        -\frac{1}{2}
        \|
        \tilde{Z}_K - v_{i_{\eta_1}}
        \|_2^2
        -\frac{1}{2}
        \|
        \tilde{Z}_Q - v_{j_{\eta_1}}
        \|_2^2
        +
        \frac{1}{2}\|
        \tilde{Z}_K - v_{i_{\eta_2}}
        \|_2^2
        +
        \frac{1}{2}\|
        \tilde{Z}_Q - v_{j_{\eta_2}}
        \|_2^2 \nonumber\\
        = & ~
        \frac{1}{2}
        \|
        \tilde{Z} - 
        \begin{bmatrix}
            v_{i_{\eta_2}} \\
            v_{j_{\eta_2}}
        \end{bmatrix}
        \|_2^2
        -
        \frac{1}{2}
        \|
        \tilde{Z} - 
        \begin{bmatrix}
            v_{i_{\eta_1}} \\
            v_{j_{\eta_1}}
        \end{bmatrix}
        \|_2^2.
    \end{align}

 Let $v_\eta:= [v_{i_\eta}^\top,v_{j_\eta}^\top]^\top$, denote a flatten stack of $v_{i_\eta}$ and $v_{j_\eta}$.
 Same as $v_i$, define $\tilde{v}_\eta := [\tilde{v}_{i_\eta},\tilde{v}_{j_\eta}]$.
 Then the above expression denotes $\eta_m$ is also the label of the $v_\eta$ closest to $\tilde{Z}$ among all $v_\eta$, $\eta\in \{0,1,\cdots,G-1\}^2$. Thus we have
    \begin{align}
    \label{eq:sa_minimal_dist_c}
        \|v_{\eta_m}-\tilde{Z}\|_2 = \min_{
        \eta\in \{0,1,\cdots,G-1\}^2 
        }
        \{\|v_{\eta}-\tilde{Z}\|_2\}.
    \end{align}

    This means that $v_{\eta_m}$ is the grid center closest to $\tilde{Z}$ in $2$-norm.
    
    We now prove this closest grid center has a distance to $\tilde{Z}$ smaller than half of the grid width ($D/g$) in infinite norm.

    Let $\mathcal{D}:= 2D/g\times \{-1,0,1\}^{dn}$ denote a set of differences to $v_{\eta_m}$ of all the $v_i$ $(i\in\{0,1,\cdots,G-1\})$ neighboring $v_{\eta_m}$. 
    For any $\Delta$ in $\cal{D}$, from \eqref{eq:sa_minimal_dist_c} we have
    \begin{align*}
        \|v_{\eta_m}-\tilde{Z}\|_2^2
        \leq
        \|v_{\eta_m}+\Delta-\tilde{Z}\|_2^2.
    \end{align*}

    This yields
    \begin{align*}
        2\Delta^\top(\tilde{Z}-v_{j_m})
        \leq
        \|\Delta\|_2^2,
    \end{align*}
    which means for any $k\in[dn]$, by selecting $\Delta$ to be $\pm \frac{2D}{g}\cdot \e{dn}{k}$, we have
    \begin{align*}
        \pm2\times\frac{2D}{g}(\tilde{Z}-v_{\eta_m})_{k}
        =
        2\Delta^\top(\tilde{Z}-v_{\eta_m})
        \leq
        \|\Delta\|_2^2
        =
        \frac{4D^2}{g^2}.
    \end{align*}

    Thus we have
    \begin{align*}
        (|
        \tilde{Z}-v_{\eta_m}
        |)_{k}
        \leq
        \frac{D}{g}, ~ k\in [dn].
    \end{align*}

    This is equivalent to
    \begin{align*}
        \|
        \tilde{Z}-v_{\eta_m}
        \|_{\infty}
        \leq
        \frac{D}{g}, ~ k\in [dn].
    \end{align*}

    Set $g$ to be larger than $2D/\delta$, we have
    \begin{align*}
        \|
        \tilde{Z}-v_{\eta_m}
        \|_\infty
        \leq
        \frac{\delta}{2},
    \end{align*}
    thus
    \begin{align}
    \label{eq:dist_closest_v_c}
        \|
        f(Z)-f(\tilde{v}_{\eta_m})
        \|_\infty
        \leq
        \frac{\epsilon}{3}
        \annot{because $\delta/2<\delta$}.
    \end{align}

\paragraph{Calculation of $\|\atn\circ \li-f\|_{L_\infty}$.}

    We now calculate the difference between the output in  \eqref{eq:attn_output_c} and target function $f$
    \begin{align}
        \|\atn\circ\li(Z)-f(Z)\|_\infty
        = & ~
        \|\sum_{\eta=0}^{G-1}\frac{\exp(R(v_{\eta}^\top\tilde{Z}-\frac{\|v_{\eta}\|_2^2}{2}))}{\alpha(Z)}f(\tilde{v}_{\eta})
        -f(Z)\|_\infty\nonumber\\
        = & ~
        \|\sum_{\eta=0}^{G-1}\frac{\exp(R(v_{\eta}^\top\tilde{Z}-\frac{\|v_{\eta}\|_2^2}{2}))}{\alpha(Z)}(f(\tilde{v}_{\eta})
        -f(Z))\|
        \annot{By $\sum_{\eta=0}^{G-1}\frac{\exp(R(v_{\eta}^\top\tilde{Z}-\frac{\|v_{\eta}\|_2^2}{2}))}{\alpha(Z)} = 1$}\nonumber\\
        \leq & ~
        \sum_{\eta=0}^{G-1}\frac{\exp(R(v_{\eta}^\top\tilde{Z}-\frac{\|v_{\eta}\|_2^2}{2}))}{\alpha(Z)}\|f(\tilde{v}_{\eta})
        -f(Z)\|_\infty
        \annot{By property of infinite norm}\nonumber\\
        = & ~
        \frac{\exp(R(v_{\eta_m}^\top\tilde{Z}-\frac{\|v_{\eta_m}\|_2^2}{2}))}{\alpha(Z)}\|f(\tilde{v}_{\eta_m})
        -f(Z)\|_\infty
        \nonumber\\
        & ~ + \sum_{\eta\in \eta_0}\frac{\exp(R(v_{\eta}^\top\tilde{Z}-\frac{\|v_{\eta}\|_2^2}{2}))}{\alpha(Z)}\|f(\tilde{v}_{\eta})
        -f(Z)\|_\infty
        \nonumber\\
        \label{eq:threefold_c}
        & ~ + \sum_{\eta\in \eta_1}\frac{\exp(R(v_{\eta}^\top\tilde{Z}-\frac{\|v_{\eta}\|_2^2}{2}))}{\alpha(Z)}\|f(\tilde{v}_{\eta})
        -f(Z)\|_\infty.
    \end{align}
    The last row is simply a separation of the summation in the row above.

    We now calculate each part in \eqref{eq:threefold_c}.

    As previously stated, for any $Z_1,Z_2$, as long as $\|\tilde{Z}_1-\tilde{Z}_2\|_\infty \leq \delta$, we have $\|f(Z_1)-f(Z_2)\|_\infty \leq \epsilon/3$. Thus when we designate $Z_1 = v_\eta$ for any $\eta\in \eta_0$ and $Z_2 = v_{\eta_m}$, along with \eqref{eq:A-A=distdif} we have
    \begin{align}
    & ~ \sum_{\eta\in \eta_0}\frac{\exp(R(v_{\eta}^\top\tilde{Z}-\frac{\|v_{\eta}\|_2^2}{2}))}{\alpha(Z)}\|f(\tilde{v}_{\eta})-f(Z)\|_\infty \notag \\
    \leq & ~
    \sum_{\eta\in \eta_0}\frac{\exp(R(v_{\eta}^\top\tilde{Z}-\frac{\|v_{\eta}\|_2^2}{2}))}{\alpha(Z)}(\|f(\tilde{v}_{\eta})-f(\tilde{v}_{\eta_m})\|_\infty+\|f(\tilde{v}_{\eta_m})-f(Z)\|_\infty)\nonumber\\
    \leq & ~
    \sum_{\eta\in \eta_0}\frac{\exp(R(v_{\eta}^\top\tilde{Z}-\frac{\|v_{\eta}\|_2^2}{2}))}{\alpha(Z)}\cdot (\frac{\epsilon}{3}+\frac{\epsilon}{3})\nonumber\\
    = & ~
    \label{eq:error_eta0}
    \sum_{\eta\in \eta_0}\frac{\exp(R(v_{\eta}^\top\tilde{Z}-\frac{\|\tilde{v}_{\eta}\|_2^2}{2}))}{\alpha(Z)}\cdot \frac{2\epsilon}{3}.
    \end{align}

    For any $\eta_m$, we have
    \begin{align}
        \frac{\exp(R(v_{\eta_m}^\top\tilde{Z}-\frac{\|v_{\eta_m}\|_2^2}{2}))}{\alpha(Z)}\|f(\tilde{v}_{\eta_m})
        -f(Z)\|_\infty
        \leq & ~
        \label{eq:error_etam}\frac{\exp(R(v_{\eta_m}^\top\tilde{Z}-\frac{\|v_{\eta_m}\|_2^2}{2}))}{\alpha(Z)}
        \cdot \frac{\epsilon}{3}.
    \end{align}

    When $R$ is larger than $8\ln(3/2\cdot B_0 G\epsilon)/(3\delta^2)$, we have
    \begin{align}
        \sum_{\eta\in \eta_1}\frac{\exp(R(v_{\eta}^\top\tilde{Z}-\frac{\|v_{\eta}\|_2^2}{2}))}{\alpha(Z)}\|f(\tilde{v}_{\eta})-f(Z)\|_\infty
        \leq & ~
        \sum_{\eta\in \eta_1}\frac{\exp(R(v_{\eta}^\top\tilde{Z}-\frac{\|v_{\eta}\|_2^2}{2}))}{\alpha(Z)}\cdot 2B_0 \annot{By that $f$ is bounded}\nonumber\\
        \leq & ~
        2B_0 \frac{\sum_{\eta\in \eta_1}\exp(R(v_{\eta}^\top\tilde{Z}-\frac{\|v_{\eta}\|_2^2}{2}))}{\alpha(Z)}\nonumber\\
        < & ~
        2B_0 \frac{\sum_{\eta\in \eta_1}\exp(R(v_{\eta}^\top\tilde{Z}-\frac{\|v_{\eta}\|_2^2}{2}))}{\exp(R(v_{\eta_m}^\top\tilde{Z}-\frac{\|v_{\eta_m}\|_2^2}{2}))} 
        \annot{$\alpha(Z)$ is the sum of all $\exp(R(v_{\eta}^\top\tilde{Z}-\frac{\|v_{\eta}\|_2^2}{2}))$, larger than any element in the sum}\nonumber\\
        = & ~
        2B_0\sum_{\eta\in \eta_1}\exp(\frac{R}{2}(\|v_{\eta_m}-Z\|_2^2-\|v_\eta-Z\|_2^2)) \nonumber\\
        \leq & ~
        2B_0\|\eta_1\|\exp(\frac{R}{2}\left[(\frac{\delta}{2})^2-\delta^2\right])\nonumber\\
        < & ~
        2B_0 G \exp(\frac{-3R\delta^2}{8})\nonumber\\
        = & ~
        2B_0 G\exp(\frac{-3\delta^2 \cdot \frac{8\ln(\frac{2}{3}B_0 G\epsilon)}{3\delta^2}}{8}) \nonumber\\
        = & ~
        \label{eq:error_eta1}
        \frac{\epsilon}{3}.
    \end{align}

    Combing \eqref{eq:error_eta0} and \eqref{eq:error_etam} yields
    \begin{align}
        & ~ \sum_{\eta\in \eta_0 \cup \{\eta_m\}}
        \frac{\exp(R(v_{\eta}^\top\tilde{Z}-\frac{\|v_{\eta}\|_2^2}{2}))}{\alpha(Z)}\|f(\tilde{v}_{\eta})-f(Z)\|_\infty \nonumber\\
        \leq & ~
        \sum_{\eta\in \eta_0}\frac{\exp(R(v_{\eta}^\top\tilde{Z}-\frac{\|v_{\eta}\|_2^2}{2}))}{\alpha(Z)}\cdot \frac{2\epsilon}{3}
        +\frac{\exp(R(v_{\eta_m}^\top\tilde{Z}-\frac{\|v_{\eta_m}\|_2^2}{2}))}{\alpha(Z)}
        \cdot \frac{\epsilon}{3} \nonumber \annot{By \eqref{eq:error_eta0} and \eqref{eq:error_etam}}
        \nonumber\\
        \leq & ~
        \sum_{\eta\in \eta_0 \cup \{\eta_m\}}\frac{\exp(R(v_{\eta}^\top\tilde{Z}-\frac{\|v_{\eta}\|_2^2}{2}))}{\alpha(Z)}\cdot \frac{2\epsilon}{3}
        \nonumber\\
        \leq & ~ \label{eq:error_beside_eta1}
        \frac{2\epsilon}{3},
    \end{align}
    where the last line is by $\sum_{\eta\in E_0 \cup \{\eta_m\}}\frac{\exp(R(v_{\eta}^\top\tilde{Z}-\frac{\|v_{\eta}\|_2^2}{2}))}{\alpha(Z)} \leq 1$.

    By \eqref{eq:error_beside_eta1} and \eqref{eq:error_eta1}, we have
    \begin{align*}
        \|\atn\circ\li(Z)-f(Z)\|_\infty
        \leq & ~
        \frac{\exp(R(v_{\eta_m}^\top\tilde{Z}-\frac{\|v_{\eta_m}\|_2^2}{2}))}{\alpha(Z)}\|f(\tilde{v}_{\eta_m})
        -f(Z)\|_\infty
        \nonumber\\
        & ~ +\sum_{\eta\in E_0}\frac{\exp(R(v_{\eta}^\top\tilde{Z}-\frac{\|v_{\eta}\|_2^2}{2}))}{\alpha(Z)}\|f(\tilde{v}_{\eta})
        -f(Z)\|_\infty
        \nonumber\\
        & ~ +\sum_{\eta\in E_1}\frac{\exp(R(v_{\eta}^\top\tilde{Z}-\frac{\|v_{\eta}\|_2^2}{2}))}{\alpha(Z)}\|f(\tilde{v}_{\eta})
        -f(Z)\|_\infty\\
        \leq & ~
        \frac{2\epsilon}{3} + \frac{\epsilon}{3}\\
        = & ~
        \epsilon.
    \end{align*}
This completes the proof.
\end{proof}

\subsection{Proof of \texorpdfstring{\cref{thm:cross-univ-p}}{}}
\label{proof:thm:cross-univ-p}

\begin{theorem}[\cref{thm:cross-univ-p} Restated: $L_p$-Norm Universal Approximation of Cross-Attention]
   Let $f: U_K\times U_Q \to \R^{d\times n}$ denote any Lebesgue integrable function on a compact domain $U_K \times U_Q$ and let $\epsilon$ be any positive real number. Here $U_K,U_Q\in \R^{d\times n}$ stands for the compact domain of the two input sequences of cross-attention. 
   Then, there exists a cross-attention $\atn$ prepended with a $\li$ layer such that
    \begin{align*}
        \|f-\atn\circ \li\|_{L_p}
        \leq 
        \epsilon.
    \end{align*}
\end{theorem}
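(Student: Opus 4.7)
The plan is to reduce this to the $L_\infty$ cross-attention result \cref{thm:cross-univ-infinite} via Lusin's theorem, mirroring the self-attention argument in \cref{proof:thm:self-univ-p}. First I would observe that since $f$ is Lebesgue integrable on the compact set $U_K\times U_Q$, it is essentially bounded, so I fix a constant $B_p$ dominating $\|f\|_p$ almost everywhere. Then for a parameter $\Delta>0$ to be chosen, Lusin's theorem produces a continuous function $g:U_K\times U_Q \to \R^{d\times n}$ that agrees with $f$ except on a measurable ``bad'' set $D_\Delta$ with $\mu(D_\Delta)\le \Delta$. Without loss of generality I would truncate $g$ so that its $L_\infty$ norm is controlled by the same essential bound as $f$; this keeps both the construction of \cref{thm:cross-univ-infinite} and the error contribution on $D_\Delta$ under control.

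Next, since $g$ is continuous on a compact domain, \cref{thm:cross-univ-infinite} supplies a cross-attention $\atn$ with a prepended $\li$ layer satisfying $\|g-\atn\circ\li\|_{L_\infty}\le \epsilon_0$ for any prescribed $\epsilon_0>0$. Converting the entrywise $\infty$-norm bound into a $p$-norm on the $d\times n$ output gives the pointwise estimate $\|g(Z_K,Z_Q)-\atn\circ\li(Z_K,Z_Q)\|_p \le \epsilon_0(dn)^{1/p}$ throughout $U_K\times U_Q$.

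Then I would split the target integral over the good region $(U_K\times U_Q)\setminus D_\Delta$, where $f=g$ and the integrand is at most $\epsilon_0^{\,p}\cdot dn$, and over the bad region $D_\Delta$, where the integrand is bounded by a constant of order $B_p^{\,p}$. Integrating yields
\begin{align*}
    \|f-\atn\circ\li\|_{L_p} \;\le\; \epsilon_0\bigl(dn\,\mu(U_K\times U_Q)\bigr)^{1/p} + \Delta^{1/p}\, B_p.
\end{align*}
Choosing $\epsilon_0 \le \epsilon/\bigl(2(dn\,\mu(U_K\times U_Q))^{1/p}\bigr)$ and $\Delta \le \bigl(\epsilon/(2B_p)\bigr)^{p}$ forces each summand to be at most $\epsilon/2$, delivering the claimed bound.

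The main obstacle---and the one point that needs slightly more care than the self-attention case---is the bookkeeping on $D_\Delta$: the construction in \cref{proof:thm:cross-univ-infinite} uses $\ln T(\tilde v_i,\tilde v_j)$ and $\ln E(\tilde v_i,\tilde v_j)$, which are only well defined when the surrogate function is uniformly bounded away from the degenerate values of $E,T$. Performing the Lusin approximation on a truncated version of $f$ (so that the continuous surrogate $g$ inherits the essential bound of $f$) ensures the relevant logarithms remain finite and that the constructed $\atn\circ\li$ remains bounded even on $D_\Delta$; after this step, the rest of the argument runs verbatim as in \cref{proof:thm:self-univ-p}.
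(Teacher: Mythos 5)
Your proposal is correct and follows essentially the same route as the paper's proof: Lusin's theorem to obtain a continuous surrogate $g$, the $L_\infty$ cross-attention theorem applied to $g$, and a split of the $L_p$ integral over the good and bad sets with $\epsilon_0$ and $\Delta$ chosen to make each piece at most $\epsilon/2$. Your extra remark about truncating $g$ to keep the logarithms of $E,T$ finite, and your choice $\Delta \le (\epsilon/(2B_p))^p$ (which makes the bad-set term exactly $\epsilon/2$, whereas the paper's stated $\Delta \le \epsilon^p/(B_p\cdot 2^p)$ leaves a stray power of $B_p$), are minor refinements of the same argument rather than a different approach.
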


\begin{proof}
    Without loss of generality, assume $U_K = U_Q = [-D,D]^{d\times n}$ for a $D\in R_+$.

    Since $f$ is Lebesgue integrable on a compact set, $f$ is bounded almost every where.
    Let $B_p$ denote the bound of $\|f\|_p$.

    By Lusin's theorem, for $f$ on a compact domain $U$, there exists a continuous function $g$ which is equal to $f$ in $U$ except for a region $D_\delta$ such that $\mu(D_\delta)\leq \Delta$. This can be written as
    \begin{align}
    \label{eq:lusin_c}
        & ~ D_\delta = \{Z|f(Z)\neq g(Z)\}, \\
        \label{eq:lusin_c_2}
        & ~ \mu(D_\delta) \leq \Delta,
    \end{align}
    where $\mu$ stands for the Lebesgue measure of a set.

    By \cref{thm:cross-univ-infinite}, there exists a network $\atn\circ\li$, consists of a cross-attention $\atn$ and a layer of sum of linear transformation $\li$ such that
    \begin{align*}
        \|\atn\circ\li-g\|_{L_\infty} \leq \epsilon_0,
    \end{align*}
    for any $\epsilon_0 >0$.

    This denote that for any $Z\in U\times U$
    \begin{align*}
        \|\atn\circ\li(Z)-g(Z) \|_p
        \leq
        (dn\cdot \epsilon^p )^\frac{1}{p}
        =\epsilon_0 (dn)^\frac{1}{p}.
    \end{align*}

    Combing this with \eqref{eq:lusin_c} and \eqref{eq:lusin_c_2}, we get
    \begin{align}
        \mu(\{Z|\|\atn\circ\li(Z)-g(Z)\|_\infty> \epsilon_0\}) 
        \leq 
        \mu(\{f(Z)\neq g(Z)\}) 
        \leq
        \Delta,
    \end{align}
    since if $f(Z)=g(Z)$, $ \|\atn\circ\li(Z)-g(Z)\| = \|\atn\circ\li(Z)-f(Z)\|\leq\epsilon_0$
    
    This yields
    \begin{align*}
        \|f-\atn\circ \li\|_{L_p}
        = & ~
        (\int_{Z\in U\times U} \|f-\atn\circ \li\|_p^p ~ {\rm d} x )^\frac{1}{p}\\
        \leq & ~
        (\int_{Z\in U\times U\backslash D_\delta } \|f-\atn\circ \li\|_p^p ~ {\rm d} x
        +
        \int_{Z\in D_\delta } \|f-\atn\circ \li\|_p^p ~ {\rm d} x)^\frac{1}{p} \\
        = & ~
        (\int_{Z\in U\times U\backslash D_\delta } \|g-\atn\circ \li\|_p^p ~ {\rm d} x
        +
        \int_{Z\in D_\delta } \|f-\atn\circ \li\|_p^p ~ {\rm d} x)^\frac{1}{p} \\
        \leq & ~
        (\mu(U\times U\backslash D_\delta)(\epsilon_0 (dn)^\frac{1}{p})^p + \Delta \cdot B_p^p)^\frac{1}{p}\\
        \leq & ~
        \epsilon_0 (dn\mu(U\times U))^\frac{1}{p}
        +
        \Delta^\frac{1}{p}B_p.
    \end{align*}

    Set
    \begin{align*}
        & ~ \epsilon_0 
        \leq
        \frac{\epsilon}{2(dn\mu(U\times U))^\frac{1}{p}}\\
        & ~ \Delta 
        \leq
        \frac{\epsilon^p}{B_p \cdot2^p}.
    \end{align*}
    We have
    \begin{align*}
        \|f-\atn\circ \li\|_{L_p}
        \leq & ~
        \epsilon_0 (dn\mu(U\times U))^\frac{1}{p}
        +
        \Delta^\frac{1}{p}B_p \\
        \leq & ~
        (dn\mu(U\times U))^\frac{1}{p}
        \cdot
        \frac{\epsilon}{2(dn\mu(U\times U))^\frac{1}{p}}
        +
        (\frac{\epsilon^p}{B_p \cdot2^p})^\frac{1}{p}B_p\\
        = & ~
        \frac{\epsilon}{2}+ \frac{\epsilon}{2} \\
        = & ~
        \epsilon.
    \end{align*}
This completes the proof.
\end{proof}

\clearpage
\section{Proof of Results in \texorpdfstring{\cref{sec:application}}{}}

\subsection{Proof of \texorpdfstring{\cref{thm:small_region}}{}}
\label{proof:thm:small_region}

\begin{theorem}[\texorpdfstring{\cref{thm:small_region} Restated}{}]
    Let $f: \R^{d\times n} \to \R^{d\times n}$ denote an $L$-Lipschitz function (in terms of $2$-norm) whose input domain is $\cal{X}$.
    For any $\epsilon > 0$, assume $\cal{X}$ is contained in $N_x$ sphere by the radius of $\epsilon/(3L)$ in $2$-norm.
    Then, there exists a $\li$ layer and a $\atn$ layer such that:
    \begin{align*}
        \| \atn \circ \li - f \|_\infty 
        \leq 
        \epsilon.
    \end{align*}
    Furthermore, $\atn$ and $\li$ have a total number of $\mathcal{O}(dn N_x)$ trainable parameters.
\end{theorem}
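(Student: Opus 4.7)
The plan is to repeat the construction of \cref{thm:self-univ-infinite} almost verbatim, but to replace the uniform grid $\{v_j\}_{j=0}^{G-1}$ covering the hypercube $[-D,D]^{dn}$ with a data-adapted set of anchors. Specifically, let $\{c_1,\dots,c_{N_x}\}$ be the centers of the $N_x$ balls of radius $\rho := \epsilon/(3L)$ whose union contains $\mathcal{X}$, and set $v_j := \tilde{c_j}$ (the flattened center) for $j\in[N_x]$. We then build $\li$, $W_K$, $W_Q$, $W_V$, $W_O$ exactly as in \cref{proof:thm:self-univ-infinite}, but with $G$ replaced by $N_x$ and with the value slots populated by $f(c_j)$ in place of $f(\tilde v_{j})$. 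This gives a single-head self-attention module whose output is
\begin{align*}
\atn\circ\li(Z)
\;=\;
\sum_{j=1}^{N_x}\frac{\exp\!\big(R(v_j^\top\tilde Z-\tfrac12\|v_j\|_2^2)\big)}{\sum_{k=1}^{N_x}\exp\!\big(R(v_k^\top\tilde Z-\tfrac12\|v_k\|_2^2)\big)}\,f(c_j),
\end{align*}
i.e., a softmax-weighted mixture of the values $f(c_j)$ with weights favoring the nearest anchor to $\tilde Z$, exactly as in the universal-approximation construction.

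\textbf{Error estimate.} The approximation bound follows from the covering property plus Lipschitzness, rather than from uniform continuity. Given $Z\in\mathcal{X}$, let $j^\star$ denote the index of the closest anchor to $\tilde Z$; by the covering assumption $\|\tilde Z-v_{j^\star}\|_2\le \rho = \epsilon/(3L)$, so
\begin{align*}
\|f(Z)-f(c_{j^\star})\|_\infty\;\le\;\|f(Z)-f(c_{j^\star})\|_2\;\le\;L\,\|\tilde Z-v_{j^\star}\|_2\;\le\;\epsilon/3.
\end{align*}
Mirroring the three-part split used in \cref{proof:thm:self-univ-infinite}, I partition the indices into $\{j^\star\}$, a near-maximal set $J_0$ (anchors whose affine score lies within some $\delta>0$ of the maximum), and the rest $J_1$. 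On $\{j^\star\}\cup J_0$ the Lipschitz bound plus the triangle inequality gives an error of at most $2\epsilon/3$; on $J_1$ I use the same large-$R$ argument as before to force the softmax mass to be exponentially small, contributing at most $\epsilon/3$. Choosing $R$ of order $\delta^{-2}\log(N_x\|f\|_\infty/\epsilon)$ closes the bound at $\epsilon$.

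\textbf{Parameter count.} The parameter budget is where the improvement over \cref{thm:self-univ-infinite} materializes. In the self-attention proof the sum-of-linear map $\li$ and the weight matrices $W_K,W_Q,W_V,W_O$ each have size polynomial in $G=P^{dn}$, which is exponential in $dn$. Here, replacing $G$ by $N_x$ makes every block of these matrices scale linearly in $N_x$; each of the $N_x$ anchors contributes one $dn$-vector $v_j$ (entering $W_K$), one scalar $\|v_j\|_2^2/2$, and one $dn$-dimensional value $f(c_j)$ (entering $W_V$). Summing these contributions and absorbing the fixed-size blocks (which are $O(dn)$) yields the claimed $\mathcal{O}(dn N_x)$ trainable parameters.

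\textbf{Main obstacle.} The conceptual part is clean; the technical obstacle is handling the boundary-overlap issue outside the convex hull of the anchors. In \cref{proof:thm:self-univ-infinite} the uniform grid guarantees that $j^\star$ actually sits within the half-cell distance $\delta/2$, and a tie-breaking argument based on neighboring $v_j$'s yields the $\ell_\infty$ bound. With arbitrary sphere centers this neighbor structure is lost, so I would instead exploit only the $\ell_2$ bound $\|\tilde Z-v_{j^\star}\|_2\le \rho$ combined with the quadratic identity $\tfrac12\|v-v_{j^\star}\|_2^2-\tfrac12\|v-\tilde Z\|_2^2 = v_{j^\star}^\top \tilde Z -\tfrac12\|v_{j^\star}\|_2^2 - (v^\top \tilde Z-\tfrac12\|v\|_2^2)$ to convert the geometric gap into a softmax-score gap directly, bypassing the grid-based tie-breaking altogether. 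This is the only place where the proof genuinely departs from \cref{proof:thm:self-univ-infinite}, and it is the step I would write out in full detail; the remaining bookkeeping is a direct transcription.
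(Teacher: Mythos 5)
Your proposal is correct and follows essentially the same route as the paper: the paper's proof is the self-attention construction of \cref{thm:self-univ-infinite} verbatim with the grid centers replaced by the $N_x$ sphere centers, the nearest-anchor error controlled by the covering radius $\epsilon/(3L)$ together with $L$-Lipschitzness (via the same quadratic identity showing the softmax favors the closest center in $2$-norm), the same $\{j_m\}\cup J_0\cup J_1$ split with a large-$R$ tail bound, and the same per-block parameter tally giving $\mathcal{O}(dnN_x)$. Your observation that the grid-based half-cell/tie-breaking argument must be replaced by the direct $\ell_2$ covering bound is exactly what the paper does.
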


\begin{proof}[Proof sketch]
    This proof is identical with \cref{thm:self-univ-infinite}, except for an alteration on the set of $v_i$.
\end{proof}

\begin{proof} 
We follow the proof of \cref{thm:self-univ-infinite}.

\paragraph{Notation of Sphere Centers.}

    Let $Z = [z_1,z_2,\cdots,z_n]\in \R^{d\times n}$ denote the input to $\li$.
    Define $\tilde{Z} := [z_1^\top,z_2^\top,\cdots,z_n^\top]^\top$.
    $P\in N_+$ is a parameter that controls the size of the attention block and the error of our approximation. 

    Let $v_i$, $i\in [N_x]$ denote the centers of the $N_x$ spheres that covers $\mathcal{X}$.
    Let $V:=\{v_i|i\in[N_x]\}$ denote the set of all $v_i$.

    For every $v\in V$, we define $\tilde{v}:= [v_{1:d}^\top,v_{d+1:2d}^\top,\cdots,v_{(n-1)d+1:nd}^\top]^\top$.

    \paragraph{Construction of $f$ Related Functions.}
    Because $f$ is continuous within a closed region, its output value is bounded in $\infty$-norm. 
    Let $B_0$ denote this bound, we now construct two functions that. 
    For any $a \in \R^{d\times n}$, we define $E(a) := {1}_{d\times n}-f(a)/B_0$ and $T(a)= {1}_{d\times n}+f(a)/B_0$. We define $(E+T)(a)= E(a)+T(a)$. By the definition of $E$ and $T$, $(E+T)(a)\equiv {2}_{d\times n}$ for any $a\in \R^{d\times n}$.

    \paragraph{Construction of the Layer of Sum of Linear Transformations.}
    We now construct the $\li$ layer to be
    \begin{align*}
        \li(Z) := 
        & ~ \sum_{j=0}^{N_x-1}
        \left(
        \sum_{k=0}^{(n-1)}(Z\onehot{n}{k+1})^\top{(v_j)}_{kd+1:kd+d}
        \right)
        \onehot{2dN_x+1}{1}\sum_{s=0}^{d-1}\left(\onehot{2dN_x}{j+s+1}+\onehot{2dN_x}{j+s+dN_x+1}\right)^\top
        + 
        \begin{bmatrix}
        {0}_{1 \times 2dN_x}\\
        I_{2dN_x}\\
        \end{bmatrix},
    \end{align*}
    where $N_x = P^{dn}$.

    We now express the output of $\li$ in a simpler form in the following discussion. First, we show that
    \begin{align*}
        \sum_{k=0}^{(n-1)}(Z\onehot{n}{k+1})^\top{(v_j)}_{kd+1:kd+d}
        = & ~ 
        \sum_{k=0}^{(n-1)}z_{k+1}^\top{(v_j)}_{kd+1:kd+d}\\
        = & ~ 
        [z_1^\top,z_2^\top,\cdots,z_n^\top]
        v_j\\
        = & ~ 
        v_j^\top\tilde{Z} \in \R, ~j \in \{0,1,2,\cdots,N_x-1\}.
    \end{align*}

    This yields
    \begin{align*}
        \li(Z) 
        = & ~ 
        \sum_{j=0}^{N_x-1}
         v_j^\top\tilde{Z}
        \sum_{s=0}^{d-1}\left(\onehot{2dN_x}{j+s+1}+\onehot{2dN_x}{j+s+dN_x+1}\right)^\top\onehot{2dN_x+1}{1}
        + 
        \begin{bmatrix}
        {0}_{1 \times 2dN_x}\\
        I_{2dN_x}
        \end{bmatrix}\\
        = & ~
        \begin{bmatrix}
            X_0 & X_0\\
            I_{dN_x} & {0}_{dN_x\times dN_x} \\
            {0}_{dN_x\times dN_x} & I_{dN_x} 
        \end{bmatrix},
    \end{align*}
    in which $X_0$ is defined as follows
    \begin{align*}
        X_0 :=
        \begin{bmatrix}
            v_0^\top\tilde{Z}{1}_{1\times d}&v_1^\top\tilde{Z}{1}_{1\times d}&v_2^\top\tilde{Z}{1}_{1\times d}&
            \cdots
            &v_{N_x-1}^\top\tilde{Z}{1}_{1\times d}
        \end{bmatrix}.
    \end{align*}

    \paragraph{Construction of $K$ and $Q$ Matrices.}
    We now construct the $W_k$ and $W_Q$ matrices in the self-attention block and calculate the output of $\softmax{K^\top Q}$.

    We define $W_K$ as follows:
    \begin{align*}
        W_K := 
        \begin{bmatrix}
            1 & {0}_{1\times d} & \cdots & {0}_{1\times d} & {0}_{1\times d} &  \cdots & {0}_{1\times d} 
            \\
            0 & -\frac{\|v_0\|^2_2}{2}{1}_{1\times d} 
            & \cdots 
            & -\frac{\|v_{N_x-1}\|^2_2}{2}{1}_{1\times d} & -\frac{\|v_0\|^2_2}{2}{1}_{1\times d} 
            & \cdots
            & -\frac{\|v_{N_x-1}\|^2_2}{2}{1}_{1\times d}
            \\
            0 & \ln(T(\tilde{v}_0))^\top & \cdots & \ln(T(\tilde{v}_{N_x-1}))^\top & \ln(E(\tilde{v}_0))^\top  & \cdots & \ln(E(\tilde{v}_{N_x-1}))^\top  
        \end{bmatrix}.
    \end{align*}

    The definition of $W_K$ yields 
    \begin{align*}
        K 
        := & ~  W_K\li(Z) \\
        =  & ~ 
        \begin{bmatrix}
            1 & {0}_{1\times d}  & \cdots & {0}_{1\times d} & {0}_{1\times d} & \cdots & {0}_{1\times d} 
            \\
            0 & -\frac{\|v_0\|^2_2}{2}{1}_{1\times d} 
            & \cdots 
            & -\frac{\|v_{N_x-1}\|^2_2}{2}{1}_{1\times d} & -\frac{\|v_0\|^2_2}{2}{1}_{1\times d} 
            & \cdots
            & -\frac{\|v_{N_x-1}\|^2_2}{2}{1}_{1\times d}
            \\
            0 & \ln(T(\tilde{v}_0))^\top & \cdots & \ln(T(\tilde{v}_{N_x-1}))^\top & \ln(E(\tilde{v}_0))^\top & \cdots & \ln(E(\tilde{v}_{N_x-1}))^\top  
        \end{bmatrix}
        \cdot
        \begin{bmatrix}
            X_0 & X_0\\
            I_{dN_x} & {0}_{dN_x\times dN_x} \\
            {0}_{dN_x\times dN_x} & I_{dN_x} 
        \end{bmatrix}\\
        =  & ~ 
        \begin{bmatrix}
            v_0^\top\tilde{Z}{1}_{1\times d}&
            \cdots
            &v_{N_x-1}^\top\tilde{Z}{1}_{1\times d}
            &
            v_0^\top\tilde{Z}{1}_{1\times d}&
            \cdots
            &v_{N_x-1}^\top\tilde{Z}{1}_{1\times d}
            \\
            -\frac{\|v_0\|^2_2}{2}{1}_{1\times d} 
            & \cdots 
            & -\frac{\|v_{N_x-1}\|^2_2}{2}{1}_{1\times d} & -\frac{\|v_0\|^2_2}{2}{1}_{1\times d}
            & \cdots
            & -\frac{\|v_{N_x-1}\|^2_2}{2}{1}_{1\times d}\\
            \ln(T(\tilde{v}_0))^\top & \cdots & \ln(T(\tilde{v}_{N_x-1}))^\top & \ln(E(\tilde{v}_0))^\top &  \cdots & \ln(E(\tilde{v}_{N_x-1}))^\top  
        \end{bmatrix}.
    \end{align*}

    Next, we construct $W_Q$ to be
    \begin{align*}
        W_Q := 
        \begin{bmatrix}
            0 & R{1}_{1\times n} & {0}_{1\times (2dN_x-n)}\\
            0 & R{1}_{1\times n} & {0}_{1\times (2dN_x-n)}\\
            {0}_n & I_n & {0}_{n\times (2dN_x-n)}
        \end{bmatrix}.
    \end{align*}

    This yields that
    \begin{align*}
        Q 
        = & ~ W_Q \li(Z) \\
        = & ~
        \begin{bmatrix}
            0 & R{1}_{1\times n} & {0}_{1\times (2dN_x-n)}\\
            0 & R{1}_{1\times n} & {0}_{1\times (2dN_x-n)}\\
            {0}_n & I_n & {0}_{n\times (2dN_x-n)}
        \end{bmatrix}
        \cdot
        \begin{bmatrix}
            X_0 & X_0\\
            I_{dN_x} & {0}_{dN_x\times dN_x} \\
            {0}_{dN_x\times dN_x} & I_{dN_x} 
        \end{bmatrix} \\
        = & ~ 
        \begin{bmatrix}
            R{1}_{1\times n} & {0}_{1\times (2dN_x-n)}\\
            R{1}_{1\times n} & {0}_{1\times (2dN_x-n)}\\
            I_n & {0}_{n\times (2dN_x-n)}
        \end{bmatrix}.
    \end{align*}

    We now calculate the attention matrix $\softmax{K^\top Q}$.

    \paragraph{Calculation of \texorpdfstring{$\Softmax(K^\top Q)$}{}.}

    First, $K^\top Q$ can be expressed as follows
    \begin{align*}
        K^\top Q
        = & ~
        \begin{bmatrix}
            v_0^\top\tilde{Z}{1}_{d} & \frac{\|v_0\|_2^2}{2}{1}_{d} & \ln(T(\tilde{v}_0)) \\
            v_1^\top\tilde{Z}{1}_{d} & \frac{\|v_1\|_2^2}{2}{1}_{d} & \ln(T(\tilde{v}_1)) \\
            & \vdots & \\
            v_{N_x-1}^\top\tilde{Z}{1}_{d} & \frac{\|v_1\|_2^2}{2}{1}_{d} & \ln(T(\tilde{v}_{N_x-1})) \\
            v_0^\top\tilde{Z}{1}_{d} & \frac{\|v_0\|_2^2}{2}{1}_{d} & \ln(E(\tilde{v}_0)) \\
            v_1^\top\tilde{Z}{1}_{d} & \frac{\|v_1\|_2^2}{2}{1}_{d} & \ln(E(\tilde{v}_1)) \\
            & \vdots &  \\
            v_{N_x-1}^\top\tilde{Z}{1}_{d} & \frac{\|v_1\|_2^2}{2}{1}_{d} & \ln(E(\tilde{v}_{N_x-1})) \\
        \end{bmatrix}
        \cdot
        \begin{bmatrix}
            R{1}_{1\times n} & {0}_{1\times (2dN_x-n)}\\
            R{1}_{1\times n} & {0}_{1\times (2dN_x-n)}\\
            I_n & {0}_{n\times (2dN_x-n)}
        \end{bmatrix}\\
        = & ~
        \begin{bmatrix}
            R(v_0^\top\tilde{Z}-\frac{\|v_0\|_2^2}{2}){1}_{d\times n}+\ln(T(\tilde{v}_0)) & {0}_{d\times (2dN_x - n)}\\
            R(v_1^\top\tilde{Z}-\frac{\|v_1\|_2^2}{2}){1}_{d\times n}+\ln(T(\tilde{v}_1))& {0}_{d\times (2dN_x - n)}\\
            \vdots & \vdots \\
            R(v_{N_x-1}^\top\tilde{Z}-\frac{\|v_{N_x-1}\|_2^2}{2}){1}_{d\times n}+\ln(T(\tilde{v}_{N_x-1}))& {0}_{d\times (2dN_x - n)}\\
            R(v_0^\top\tilde{Z}-\frac{\|v_0\|_2^2}{2}){1}_{d\times n}+\ln(E(\tilde{v}_0))& {0}_{d\times (2dN_x - n)}\\
            R(v_1^\top\tilde{Z}-\frac{\|v_1\|_2^2}{2}){1}_{d\times n}+\ln(E(\tilde{v}_1))& {0}_{d\times (2dN_x - n)}\\
            \vdots & \vdots  \\
            R(v_{N_x-1}^\top\tilde{Z}-\frac{\|v_{N_x-1}\|_2^2}{2}){1}_{d\times n}+\ln(E(\tilde{v}_{N_x-1}))& {0}_{d\times (2dN_x - n)}\\
        \end{bmatrix}.
    \end{align*}

    Now, we divide the calculation of $\softmax{K^\top Q}$ into two counterparts, the calculation of $\exp(K^\top Q)$ and the calculation of the denominator of every column of $\softmax{K^\top Q}$, as in the expression of $\Softmax$, explicitly written out as $\sum_{j=1}^{2dN_x}\exp(K^\top Q)_{ij}$ for each $i\in[2dN_x]$.

    For $\exp(K^\top Q)$, we have
    \begin{align}
    \label{eq:numerator_s}
        \exp(K^\top Q)
        = & ~
        \begin{bmatrix}
            \exp(R(v_0^\top\tilde{Z}-\frac{\|v_0\|_2^2}{2})) T(\tilde{v}_0) & {1}_{d\times (2dN_x - n)}\\
            \exp(R(v_1^\top\tilde{Z}-\frac{\|v_1\|_2^2}{2})) T(\tilde{v}_1)& {1}_{d\times (2dN_x - n)}\\
            \vdots \\
            \exp(R(v_{N_x-1}^\top\tilde{Z}-\frac{\|v_{N_x-1}\|_2^2}{2})) T(\tilde{v}_{N_x-1})& {1}_{d\times (2dN_x - n)}\\
            \exp(R(v_0^\top\tilde{Z}-\frac{\|v_0\|_2^2}{2})) E(\tilde{v}_0) & {1}_{d\times (2dN_x - n)}\\
            \exp(R(v_1^\top\tilde{Z}-\frac{\|v_1\|_2^2}{2})) E(\tilde{v}_1)& {1}_{d\times (2dN_x - n)}\\
            \vdots \\
            \exp(R(v_{N_x-1}^\top\tilde{Z}-\frac{\|v_{N_x-1}\|_2^2}{2})) E(\tilde{v}_{N_x-1})& {1}_{d\times (2dN_x - n)}\\
        \end{bmatrix}\\
        = & ~
        \begin{bmatrix}
            \exp(R(v_0^\top\tilde{Z}-\frac{\|v_0\|_2^2}{2}))T(\tilde{v}_0) & {1}_{d\times (2dN_x - n)}\\
            \exp(R(v_1^\top\tilde{Z}-\frac{\|v_1\|_2^2}{2}))T(\tilde{v}_1)& {1}_{d\times (2dN_x - n)}\\
            \vdots \\
            \exp(R(v_{N_x-1}^\top\tilde{Z}-\frac{\|v_{N_x-1}\|_2^2}{2}))T(\tilde{v}_{N_x-1})& {1}_{d\times (2dN_x - n)}\\
            \exp(R(v_0^\top\tilde{Z}-\frac{\|v_0\|_2^2}{2}))E(\tilde{v}_0) & {1}_{d\times (2dN_x - n)}\\
            \exp(R(v_1^\top\tilde{Z}-\frac{\|v_1\|_2^2}{2}))E(\tilde{v}_1)& {1}_{d\times (2dN_x - n)}\\
            \vdots \\
            \exp(R(v_{N_x-1}^\top\tilde{Z}-\frac{\|v_{N_x-1}\|_2^2}{2}))E(\tilde{v}_{N_x-1})& {1}_{d\times (2dN_x - n)}
        \end{bmatrix}.
    \end{align}

    For the denominator, we calculate it in columns. 
    Let $i$ denote the column which we calculate the denominator in $\Softmax$. 
    When $i \in \{n+1,n+2,\cdots,2dN_x\}$, it obviously equals to $1 \cdot 2dN_x = 2dN_x$. 
    And when $i\in [n]$, we denote that
    \begin{align}
    \label{eq:denominator_s}
        \sum_{j=1}^{2dN_x}\exp(K^\top Q)_{ij}
        = & ~
        \sum_{j=1}^{N_x}
        \left[
        ({1}_{1\times d} T(\tilde{v}_{j-1})_{:,i}
        +
        {1}_{1\times d} E(\tilde{v}_{j-1})_{:,i})\cdot\exp(R\left(v_{j-1}^\top\tilde{Z}-\frac{\|v_{j-1}\|_2^2}{2}\right))
        \right] 
        \nonumber\\
        = & ~
        \sum_{j=1}^{N_x}
        \left[
        ({1}_{1\times d} (E+T)(v_{j-1})_{:,i}) \cdot\exp(R\left(v_{j-1}^\top\tilde{Z}-\frac{\|v_{j-1}\|_2^2}{2}\right))
        \right] 
        \nonumber\\
        = & ~
        \sum_{j=1}^{N_x}
        \left[
        ({1}_{1\times d} ({2}_{d\times n})_{:,i}) \cdot\exp(R\left(v_{j-1}^\top\tilde{Z}-\frac{\|v_{j-1}\|_2^2}{2}\right))
        \right]
        \nonumber\\
        = & ~
        \sum_{j=1}^{N_x}
        2d \cdot \exp(R\left(v_{j-1}^\top\tilde{Z}-\frac{\|v_{j-1}\|_2^2}{2}\right)), \quad i\in [n].
    \end{align}

    We observe from $\eqref{eq:denominator_s}$, that $\sum_{j=1}^{2dN_x}\exp(K^\top Q)_{ij}$ is invariant of  $i$ for $i \in  [n]$. In this case, we define
    \begin{align*}
        \alpha(Z):=
        \frac{1}{2d}
        \sum_{j=1}^{2dN_x}\exp(K^\top Q)_{ij}
        =
        \sum_{j=1}^{N_x}
        \exp(R\left(v_{j-1}^\top\tilde{Z}-\frac{\|v_{j-1}\|_2^2}{2}\right))\in \R, \quad i\in [n].
    \end{align*}

    From \eqref{eq:numerator_s} and \eqref{eq:denominator_s} we have
    \begin{align*}
        & ~ \softmax{K^\top Q} \\
        = & ~
        \exp(K^\top Q) \odot 
        \begin{bmatrix}
            \frac{1}{\sum_{j=1}^{2dN_x}\exp(K^\top Q)_{1j}}{1}_{2dN_x\times n} & \frac{1}{2dN_x} {1}_{2dN_x\times(2dN_x-n)}
        \end{bmatrix}
        \annot{
        By $1/\sum_{j=1}^{2dN_x}\exp(K^\top Q)_{i,j}$ is invariant of $i$ for $i\in [n]$
        }
        \\
        = & ~
        \begin{bmatrix}
            \exp(R(v_0^\top\tilde{Z}-\frac{\|v_0\|_2^2}{2}))T(\tilde{v}_0) & {1}_{d\times (2dN_x - n)}\\
            \exp(R(v_1^\top\tilde{Z}-\frac{\|v_1\|_2^2}{2}))T(\tilde{v}_1)& {1}_{d\times (2dN_x - n)}\\
            \cdots \\
            \exp(R(v_{N_x-1}^\top\tilde{Z}-\frac{\|v_{N_x-1}\|_2^2}{2}))T(\tilde{v}_{N_x-1})& {1}_{d\times (2dN_x - n)}\\
            \exp(R(v_0^\top\tilde{Z}-\frac{\|v_0\|_2^2}{2}))E(\tilde{v}_0) & {1}_{d\times (2dN_x - n)}\\
            \exp(R(v_1^\top\tilde{Z}-\frac{\|v_1\|_2^2}{2}))E(\tilde{v}_1)& {1}_{d\times (2dN_x - n)}\\
            \cdots \\
            \exp(R(v_{N_x-1}^\top\tilde{Z}-\frac{\|v_{N_x-1}\|_2^2}{2}))E(\tilde{v}_{N_x-1})& {1}_{d\times (2dN_x - n)}\\
        \end{bmatrix}
        \odot
        \begin{bmatrix}
            \frac{1}{2d\alpha(Z)}{1}_{2dN_x\times n} & \frac{1}{2dN_x} {1}_{2dN_x\times(2dN_x-n)} 
        \end{bmatrix}\\
        = & ~
        \frac{1}{2d}
        \begin{bmatrix}
        \frac{\exp(R(v_0^\top\tilde{Z}-\frac{\|v_0\|_2^2}{2}))}{\alpha(Z)}T(\tilde{v}_0) & \frac{1}{N_x}{1}_{d\times (2dN_x - n)}
        \\
        \frac{\exp(R(v_1^\top\tilde{Z}-\frac{\|v_1\|_2^2}{2}))}{\alpha(Z)}T(\tilde{v}_1) & \frac{1}{N_x}{1}_{d\times (2dN_x - n)}
        \\
        \cdots
        \\
        \frac{\exp(R(v_{N_x-1}^\top\tilde{Z}-\frac{\|v_{N_x-1}\|_2^2}{2}))}{\alpha(Z)}T(\tilde{v}_{N_x-1}) & \frac{1}{N_x}{1}_{d\times (2dN_x - n)}
        \\ %
        \frac{\exp(R(v_0^\top\tilde{Z}-\frac{\|v_0\|_2^2}{2}))}{\alpha(Z)}E(\tilde{v}_0) & \frac{1}{N_x}{1}_{d\times (2dN_x - n)}
        \\
        \frac{\exp(R(v_1^\top\tilde{Z}-\frac{\|v_1\|_2^2}{2}))}{\alpha(Z)} E(\tilde{v}_1) & \frac{1}{N_x}{1}_{d\times (2dN_x - n)}
        \\
        \cdots
        \\
        \frac{\exp(R(v_{N_x-1}^\top\tilde{Z}-\frac{\|v_{N_x-1}\|_2^2}{2}))}{\alpha(Z)}E(\tilde{v}_{N_x-1}) 
        & \frac{1}{N_x}{1}_{d\times (2dN_x - n)}
        \end{bmatrix}.
    \end{align*}

    \paragraph{Construction of \texorpdfstring{$W_V$}{} and \texorpdfstring{$W_O$}{}.}

    We now construct the $W_V$ matrix and calculate the $V$ matrix of the self-attention.

    We define $W_V$ as
    \begin{align*}
        W_V:=
        \begin{bmatrix}
            {0}_d & X_1 & -X_1
        \end{bmatrix},
    \end{align*}
    where
    \begin{align*}
        X_1 :=
        \begin{bmatrix}
            I_d & I_d & \cdots & I_d
        \end{bmatrix}_{d\times dN_x},
    \end{align*}
    is a matrix formed by stacking $N_x$ $I_d$ matrices horizontally.

    In this definition, $V$ matrix can be calculated as follows:
    \begin{align*}
        V:=  & ~  W_V\li(Z) \\
        =  & ~ 
        \begin{bmatrix}
            {0}_d & X_1 & -X_1
        \end{bmatrix}
        \begin{bmatrix}
            X_0 & X_0\\
            I_{dN_x} & {0}_{dN_x\times dN_x} \\
            {0}_{dN_x\times dN_x} & I_{dN_x} 
        \end{bmatrix} \\
        = & ~ 
        \begin{bmatrix}
            X_1 & -X_1
        \end{bmatrix}.
    \end{align*}

    After the construction and calculation of $V$, we go on to construct $W_O$ as
    \begin{align*}
        W_O =
        \begin{bmatrix}
            dB_0I_n\\
            {0}_{(2dN_x-n)\times n}
        \end{bmatrix}.
    \end{align*}

    The sole purpose of $W_O$ is to extract the non-zero entries of the final output.

    \paragraph{Calculation of the Output of \texorpdfstring{$\atn\circ\li$}{}.}

    We now calculate the final output of the self-attention block.
    
    \begin{align*}
        \atn \circ \li(Z)
        = & ~
        \frac{1}{2d}
        \begin{bmatrix}
            X_1 & -X_1
        \end{bmatrix}
        \begin{bmatrix}
        \frac{\exp(R(v_0^\top\tilde{Z}-\frac{\|v_0\|_2^2}{2}))}{\alpha(Z)}T(\tilde{v}_0) & \frac{1}{N_x}{1}_{d\times (2dN_x - n)}
        \\
        \frac{\exp(R(v_1^\top\tilde{Z}-\frac{\|v_1\|_2^2}{2}))}{\alpha(Z)}T(\tilde{v}_1) & \frac{1}{N_x}{1}_{d\times (2dN_x - n)}
        \\
        \cdots
        \\
        \frac{\exp(R(v_{N_x-1}^\top\tilde{Z}-\frac{\|v_{N_x-1}\|_2^2}{2}))}{\alpha(Z)}T(\tilde{v}_{N_x-1}) & \frac{1}{N_x}{1}_{d\times (2dN_x - n)}
        \\ %
        \frac{\exp(R(v_0^\top\tilde{Z}-\frac{\|v_0\|_2^2}{2}))}{\alpha(Z)}E(\tilde{v}_0) & \frac{1}{N_x}{1}_{d\times (2dN_x - n)}
        \\
        \frac{\exp(R(v_1^\top\tilde{Z}-\frac{\|v_1\|_2^2}{2}))}{\alpha(Z)} E(\tilde{v}_1) & \frac{1}{N_x}{1}_{d\times (2dN_x - n)}
        \\
        \cdots
        \\
        \frac{\exp(R(v_{N_x-1}^\top\tilde{Z}-\frac{\|v_{N_x-1}\|_2^2}{2}))}{\alpha(Z)}E(\tilde{v}_{N_x-1}) 
        & \frac{1}{N_x}{1}_{d\times (2dN_x - n)}
        \end{bmatrix}W_O\\
        = & ~
        \frac{1}{2d}
        X_1
        \begin{bmatrix}
            \frac{\exp(R(v_0^\top\tilde{Z}-\frac{\|v_0\|_2^2}{2}))}{\alpha(Z)}(T(\tilde{v}_0)-E(\tilde{v}_0)) & {0}_{d\times (2dN_x - n)}
            \\
            \frac{\exp(R(v_1^\top\tilde{Z}-\frac{\|v_1\|_2^2}{2}))}{\alpha(Z)}(T(\tilde{v}_1)-E(\tilde{v}_1)) & {0}_{d\times (2dN_x - n)}
            \\
            \cdots
            \\
            \frac{\exp(R(v_{N_x-1}^\top\tilde{Z}-\frac{\|v_{N_x-1}\|_2^2}{2}))}{\alpha(Z)}(T(\tilde{v}_{N_x-1})-E(\tilde{v}_{N_x-1})) & {0}_{d\times (2dN_x - n)}
        \end{bmatrix}W_O\\
        = & ~
        \frac{1}{2d}
        X_1
        \begin{bmatrix}
            \frac{\exp(R(v_0^\top\tilde{Z}-\frac{\|v_0\|_2^2}{2}))}{\alpha(Z)}\frac{2f(\tilde{v}_0)}{B_0} & {0}_{d\times (2dN_x - n)}
            \\
            \frac{\exp(R(v_1^\top\tilde{Z}-\frac{\|v_1\|_2^2}{2}))}{\alpha(Z)}\frac{2f(\tilde{v}_1)}{B_0} & {0}_{d\times (2dN_x - n)}
            \\
            \cdots
            \\
            \frac{\exp(R(v_{N_x-1}^\top\tilde{Z}-\frac{\|v_{N_x-1}\|_2^2}{2}))}{\alpha(Z)}\frac{2f(\tilde{v}_{N_x-1})}{B_0} & {0}_{d\times (2dN_x - n)}
        \end{bmatrix}W_O.
    \end{align*}
    We have
    \begin{align*}
        X_1
        \begin{bmatrix}
            \frac{\exp(R(v_0^\top\tilde{Z}-\frac{\|v_0\|_2^2}{2}))}{\alpha(Z)}\frac{2f(\tilde{v}_0)}{B_0}
            \\
            \frac{\exp(R(v_1^\top\tilde{Z}-\frac{\|v_1\|_2^2}{2}))}{\alpha(Z)}\frac{2f(\tilde{v}_1)}{B_0}
            \\
            \cdots
            \\
            \frac{\exp(R(v_{N_x-1}^\top\tilde{Z}-\frac{\|v_{N_x-1}\|_2^2}{2}))}{\alpha(Z)}\frac{2f(\tilde{v}_{N_x-1})}{B_0}
        \end{bmatrix}
        = & ~
        \begin{bmatrix}
            I_d & I_d & \cdots & I_d
        \end{bmatrix}_{d\times dN_x}
        \cdot
        \begin{bmatrix}
            \frac{\exp(R(v_0^\top\tilde{Z}-\frac{\|v_0\|_2^2}{2}))}{\alpha(Z)}\frac{2f(\tilde{v}_0)}{B_0}
            \\
            \frac{\exp(R(v_1^\top\tilde{Z}-\frac{\|v_1\|_2^2}{2}))}{\alpha(Z)}\frac{2f(\tilde{v}_1)}{B_0}
            \\
            \cdots
            \\
            \frac{\exp(R(v_{N_x-1}^\top\tilde{Z}-\frac{\|v_{N_x-1}\|_2^2}{2}))}{\alpha(Z)}\frac{2f(\tilde{v}_{N_x-1})}{B_0}
        \end{bmatrix}\\
        = & ~
        \sum_{j=0}^{N_x-1}I_d \cdot \frac{\exp(R(v_{j}^\top\tilde{Z}-\frac{\|v_{j-1}\|_2^2}{2}))}{\alpha(Z)}\frac{2f(\tilde{v}_{j})}{B_0}\\
        = & ~
        \sum_{j=0}^{N_x-1}\frac{\exp(R(v_{j}^\top\tilde{Z}-\frac{\|v_{j}\|_2^2}{2}))}{\alpha(Z)}\frac{2f(\tilde{v}_{j})}{B_0}.
    \end{align*}
    This yields
    \begin{align}
    \label{eq:attn_output_s}
        \atn \circ \li(Z)
        = & ~
        \begin{bmatrix}
            \sum_{j=0}^{N_x-1}\frac{\exp(R(v_{j}^\top\tilde{Z}-\frac{\|v_{j}\|_2^2}{2}))}{\alpha(Z)}\frac{2f(\tilde{v}_{j})}{B_0}
            &
            {0}_{d\times (2dN_x-n)}
        \end{bmatrix}W_O\nonumber\\
        = & ~
        \begin{bmatrix}
            \sum_{j=0}^{N_x-1}\frac{\exp(R(v_{j}^\top\tilde{Z}-\frac{\|v_{j}\|_2^2}{2}))}{\alpha(Z)}\frac{2f(\tilde{v}_{j})}{B_0}
            &
            {0}_{d\times (2dN_x-n)}
        \end{bmatrix}
        \begin{bmatrix}
            dB_0I_n\\
            0_{(2dN_x-n)\times n}
        \end{bmatrix}\nonumber\\
        = & ~
        \sum_{j=0}^{N_x-1}\frac{\exp(R(v_{j}^\top\tilde{Z}-\frac{\|v_{j}\|_2^2}{2}))}{\alpha(Z)}f(\tilde{v}_{j}).
    \end{align}

\paragraph{Estimation of the Error between \texorpdfstring{$\atn\circ\li(Z)$}{} and \texorpdfstring{$f(Z)$}{}.}

    After the above calculations of the output of the network, we can now demonstrate how this output approximates our target function.

\begin{definition}[Max-Affine Function on \texorpdfstring{$\tilde{Z}$}{}]
    Let ${\rm Aff}_{j} \in \R^{dn} \to \R$, $j\in \{0,1,2,\cdots, N_x-1\}$ denote a group of affine functions defined as
    \begin{align*}
        {\rm Aff}_{j}(\tilde{Z}) = v_{j}^\top\tilde{Z}-\frac{\|v_{j}\|_2^2}{2}, ~ j\in \{0,1,2,\cdots,N_x-1\}. 
    \end{align*}
    Then let ${\rm MaxAff} \in \R^{dn} \to \R$ denote a max affine function whose affine components are $\{{\rm Aff}_{j} \mid j\in \{0,1,2,\cdots,N_x-1\}\}$. 
    Explicitly defined as
    \begin{align*}
        \ma{\tilde{Z}} = \max_{j\in \{0,1,2,\cdots,N_x-1\}} \{{\rm Aff}_{j}(\tilde{Z})\}.
    \end{align*}
\end{definition}

    Because the target function $f$ is a continuous function on a closed domain, the function $f$ is uniformly continuous. Thus for $\epsilon$, there exists a $\delta >0$ such that for any $Z_1,Z_2$, as long as $\|\tilde{Z}_1-\tilde{Z}_2\|_\infty \leq \delta$, we have $\|f(Z_1)-f(Z_2)\|_\infty \leq \epsilon/3$.

    According to this $\delta$, we divide the affine components of ${\rm MaxAff}$ into three parts, the maximal component(and also with the smallest label), whose label is denoted as $j_m$, the group of affine components equal to the maximal component or smaller than it by no more than $\delta$, and finally, the other ${\rm Aff}_{j}, ~ j\in\{0,1,2,\cdots,N_x-1\}$. 
    We write out the labels of these groups of components as follows
    \begin{align*}
        &j_m := \min_{j\in\{0,1,2,\cdots,N_x-1\}}
        \{
        \af{j}(\tilde{Z})= \ma{\tilde{Z}}
        \}, \\
        &J_0 :=
        \{
        j \mid   \ma{\tilde{Z}}-\af{j}(\tilde{Z})\leq\delta
        \},\\
        &J_1 :=
        \{
        j \mid  \ma{\tilde{Z}}-\af{j}(\tilde{Z})>\delta
        \}.
    \end{align*}

    For any pair of $i,j\in\{0,1,\cdots,N_x-1\}$, we have
    \begin{align*}
        \af{i}(\tilde{Z})-
        \af{j}(\tilde{Z}) 
        = & ~
        v_{i}^\top\tilde{Z}-\frac{\|v_{i}\|_2^2}{2}
        -
        \left(v_{j}^\top\tilde{Z}-\frac{\|v_{j}\|_2^2}{2}\right)\\
        = & ~
        -\frac{\|\tilde{Z}\|_2^2}{2}+v_{i}^\top\tilde{Z}-\frac{\|v_{i}\|_2^2}{2}
        -
        \left(
        -\frac{\|\tilde{Z}\|_2^2}{2}+v_{j}^\top\tilde{Z}-\frac{\|v_{j}\|_2^2}{2}
        \right)\\
        = & ~-\frac{1}{2}
        \|
        \tilde{Z} - v_i
        \|_2^2
        +\frac{1}{2}\|
        \tilde{Z} - v_j
        \|_2^2.
    \end{align*}

    This denotes $j_m$ is also the label of the closest $v_i$ to $\tilde{Z}$ among all $v_i$, $i\in \{0,1,\cdots,N_x-1\}$. Thus we have
    \begin{align}
    \label{eq:sa_minimal_dist_s}
        \|v_{j_m}-\tilde{Z}\|_2 = \min_{i\in \{0,1,\cdots,N_x-1\}}\{\|v_{i}-\tilde{Z}\|_2\}.
    \end{align}

    Thus, when considering the $Z$ in the input domain of $f$, which by definition is contained in $N_x$ spheres, the closest center to $Z$ is the sphere containing $Z$.
    This gives
    \begin{align}
        \|
        Z-\tilde{v}_{j_m}
        \|_2
        \leq
        \frac{\epsilon}{3L}.
    \end{align}

    Then, with the $L$ Lipschitzness of $L$ we have
    \begin{align}
    \label{eq:dist_closest_v_s}
        \|
        f(Z)-f(\tilde{v}_{j_m})
        \|_\infty
        \leq
        \frac{\epsilon}{3L} \cdot L
        =
        \frac{\epsilon}{3}.
    \end{align}

\paragraph{Difference between $\atn\circ \li$ and $f$.}

    We now calculate the difference between the output in  \eqref{eq:attn_output_s} and target function $f$
    \begin{align}
        & ~ \|\atn\circ\li(Z)-f(Z)\|_\infty \notag\\
        = & ~
        \|\sum_{j=0}^{N_x-1}\frac{\exp(R(v_{j}^\top\tilde{Z}-\frac{\|v_{j}\|_2^2}{2}))}{\alpha(Z)}f(\tilde{v}_{j})
        -f(Z)\|_\infty\nonumber\\
        = & ~
        \|\sum_{j=0}^{N_x-1}\frac{\exp(R(v_{j}^\top\tilde{Z}-\frac{\|v_{j}\|_2^2}{2}))}{\alpha(Z)}(f(\tilde{v}_{j})
        -f(Z))\|
        \annot{$\sum_{j=0}^{N_x-1}\frac{\exp(R(v_{j}^\top\tilde{Z}-\frac{\|v_{j}\|_2^2}{2}))}{\alpha(Z)} = 1$}\nonumber\\
        \leq & ~
        \sum_{j=0}^{N_x-1}\frac{\exp(R(v_{j}^\top\tilde{Z}-\frac{\|v_{j}\|_2^2}{2}))}{\alpha(Z)}\|f(\tilde{v}_{j})
        -f(Z)\|_\infty
        \annot{property of infinite norm}\nonumber\\
        = & ~
        \frac{\exp(R(v_{j_m}^\top\tilde{Z}-\frac{\|v_{j_m}\|_2^2}{2}))}{\alpha(Z)}\|f(\tilde{v}_{j_m})
        -f(Z)\|_\infty
        \nonumber\\
        & ~ + \sum_{j\in J_0}\frac{\exp(R(v_{j}^\top\tilde{Z}-\frac{\|v_{j}\|_2^2}{2}))}{\alpha(Z)}\|f(\tilde{v}_{j})
        -f(Z)\|_\infty
        \nonumber\\
        \label{eq:threefold_s}
        & ~ + \sum_{j\in J_1}\frac{\exp(R(v_{j}^\top\tilde{Z}-\frac{\|v_{j}\|_2^2}{2}))}{\alpha(Z)}\|f(\tilde{v}_{j})
        -f(Z)\|_\infty.
    \end{align}

    We now calculate each part in \eqref{eq:threefold_s}.

    For the $L$-Lipschitzness of $f$, for any $Z_1,Z_2$, as long as $\|\tilde{Z}_1-\tilde{Z}_2\|_\infty \leq \frac{\epsilon}{3L}$, we have $\|f(Z_1)-f(Z_2)\|_\infty \leq \epsilon/3$. Thus when we designate $Z_1 = v_j$ for any $j\in J_0$ and $Z_2 = v_{j_m}$, along with \eqref{eq:dist_closest_v_s} we have:
    \begin{align}
    & ~ \sum_{j\in J_0}\frac{\exp(R(v_{j}^\top\tilde{Z}-\frac{\|v_{j}\|_2^2}{2}))}{\alpha(Z)}\|f(\tilde{v}_{j})-f(Z)\|_\infty \\
    \leq & ~
    \sum_{j\in J_0}\frac{\exp(R(v_{j}^\top\tilde{Z}-\frac{\|v_{j}\|_2^2}{2}))}{\alpha(Z)}(\|f(\tilde{v}_{j})-f(\tilde{v}_{j_m})\|_\infty+\|f(\tilde{v}_{j_m})-f(Z)\|_\infty)\nonumber\\
    \leq & ~
    \sum_{j\in J_0}\frac{\exp(R(v_{j}^\top\tilde{Z}-\frac{\|v_{j}\|_2^2}{2}))}{\alpha(Z)}\cdot(\frac{\epsilon}{3}+\frac{\epsilon}{3})\nonumber\\
    = & ~
    \label{eq:error_J0_s}
    \sum_{j\in J_0}\frac{\exp(R(v_{j}^\top\tilde{Z}-\frac{\|v_{j}\|_2^2}{2}))}{\alpha(Z)}\cdot\frac{2\epsilon}{3}.
    \end{align}

    For $j_m$, we have
    \begin{align}
        \frac{\exp(R(v_{j_m}^\top\tilde{Z}-\frac{\|v_{j_m}\|_2^2}{2}))}{\alpha(Z)}\|f(\tilde{v}_{j_m})
        -f(Z)\|_\infty
        \leq & ~
        \label{eq:error_jm_s}
        \frac{\exp(R(v_{j_m}^\top\tilde{Z}-\frac{\|v_{j_m}\|_2^2}{2}))}{\alpha(Z)}
        \cdot \frac{\epsilon}{3}.
    \end{align}

    When $R$ is larger than $\frac{8}{3\delta^2}\ln(\frac{3}{2}\cdot B_0 N_x\epsilon)$, we have:
    \begin{align}
        & ~ \sum_{j\in J_1}\frac{\exp(R(v_{j}^\top\tilde{Z}-\frac{\|v_{j}\|_2^2}{2}))}{\alpha(Z)}\|f(\tilde{v}_{j})-f(Z)\|_\infty \notag \\
        \leq & ~
        \sum_{j\in J_1}\frac{\exp(R(v_{j}^\top\tilde{Z}-\frac{\|v_{j}\|_2^2}{2}))}{\alpha(Z)}\cdot 2B_0 \annot{by the bounded nature of $f$}\nonumber\\
        \leq & ~
        2B_0 \frac{\sum_{j\in J_1}\exp(R(v_{j}^\top\tilde{Z}-\frac{\|v_{j}\|_2^2}{2}))}{\alpha(Z)}\nonumber\\
        < & ~
        2B_0 \frac{\sum_{j\in J_1}\exp(R(v_{j}^\top\tilde{Z}-\frac{\|v_{j}\|_2^2}{2}))}{\exp(R(v_{j_m}^\top\tilde{Z}-\frac{\|v_{j_m}\|_2^2}{2}))} 
        \annot{$\alpha(Z)$ is the sum of all $\exp(R(v_{j}^\top\tilde{Z}-\frac{\|v_{j}\|_2^2}{2}))$, thus larger than any element within the summation}\nonumber\\
        = & ~
        2B_0\sum_{j\in J_1}\exp(\frac{R}{2}(\|v_{j_m}-Z\|_2^2-\|v_j-Z\|_2^2)) \nonumber\\
        \leq & ~
        2B_0\|J_1\|\exp(\frac{R}{2}\left[(\frac{\delta}{2})^2-\delta^2\right])\nonumber\\
        < & ~
        2B_0 N_x \exp(\frac{-3R\delta^2}{8})\nonumber\\
        = & ~
        2B_0 N_x\exp(\frac{-3\delta^2 \cdot \frac{8\ln(\frac{2}{3}B_0 N_x\epsilon)}{3\delta^2}}{8}) \nonumber\\
        = & ~
        \label{eq:error_J1_s}
        \frac{\epsilon}{3}.
    \end{align}

    Combing \eqref{eq:error_J0_s} and \eqref{eq:error_jm_s} yields
    \begin{align}
        & ~ \sum_{j\in J_0 \cup \{j_m\}}
        \frac{\exp(R(v_{j}^\top\tilde{Z}-\frac{\|v_{j}\|_2^2}{2}))}{\alpha(Z)}\|f(\tilde{v}_{j})-f(Z)\|_\infty \nonumber\\
        \leq & ~
        \sum_{j\in J_0}\frac{\exp(R(v_{j}^\top\tilde{Z}-\frac{\|v_{j}\|_2^2}{2}))}{\alpha(Z)} \cdot \frac{2\epsilon}{3}
        +
        \frac{\exp(R(v_{j_m}^\top\tilde{Z}-\frac{\|v_{j_m}\|_2^2}{2}))}{\alpha(Z)}
         \cdot \frac{\epsilon}{3} \nonumber \annot{By \eqref{eq:error_J0_s} and \eqref{eq:error_jm_s}}
        \nonumber\\
        \leq & ~
        \sum_{j\in J_0 \cup \{j_m\}}\frac{\exp(R(v_{j}^\top\tilde{Z}-\frac{\|v_{j}\|_2^2}{2}))}{\alpha(Z)} \cdot \frac{2\epsilon}{3}
        \nonumber\\
        \leq & ~ \label{eq:error_besideJ1_s}
        \frac{2\epsilon}{3},
    \end{align}
    where the last line is by $\sum_{j\in J_0 \cup \{j_m\}}\frac{\exp(R(v_{j}^\top\tilde{Z}-\frac{\|v_{j}\|_2^2}{2}))}{\alpha(Z)} \leq 1$.

    We plug \eqref{eq:error_J1_s} and \eqref{eq:error_besideJ1_s} to \eqref{eq:threefold_s} and get
    \begin{align*}
        \|\atn\circ\li(Z)-f(Z)\|_\infty
        \leq & ~
        \frac{\exp(R(v_{j_m}^\top\tilde{Z}-\frac{\|v_{j_m}\|_2^2}{2}))}{\alpha(Z)}\|f(\tilde{v}_{j_m})
        -f(Z)\|_\infty
        \nonumber\\
        & ~ + \sum_{j\in J_0}\frac{\exp(R(v_{j}^\top\tilde{Z}-\frac{\|v_{j}\|_2^2}{2}))}{\alpha(Z)}\|f(\tilde{v}_{j})
        -f(Z)\|_\infty
        \nonumber\\
        & ~ + \sum_{j\in J_1}\frac{\exp(R(v_{j}^\top\tilde{Z}-\frac{\|v_{j}\|_2^2}{2}))}{\alpha(Z)}\|f(\tilde{v}_{j})
        -f(Z)\|_\infty\\
        \leq & ~
        \frac{2\epsilon}{3} + \frac{\epsilon}{3}\\
        = & ~
        \epsilon.
    \end{align*}
This concludes our result on the approximation error.

\paragraph{Estimation of the Number of Trainable Parameters.}

We now estimate the number of trainable parameter in the network we constructed to verify our claim on number of trainable parameters in the main text of this theorem.

\begin{claim}
\begin{remark}[Meaning of Trainable Parameters]
    By trainable parameters we denote the parameters that differs according to $f$. 
    This includes the parameters related to the input domain of $\calX$, and excludes the constants (i.e., $0$ and $1$) in the network.
\end{remark}
\end{claim}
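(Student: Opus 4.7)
The final highlighted statement is a \emph{Remark} clarifying terminology, not a theorem, so there is no proof obligation in the strict sense. The plan is therefore to explain briefly why this convention is well-posed and, more importantly, to spell out how it should be \emph{applied} to close the residual counting step of \cref{thm:small_region} in which it is invoked. The convention says: an entry of one of the weight matrices $W_K,W_Q,W_V,W_O$ or of $\li$ counts as a trainable parameter iff its numerical value depends on the target $f$ or on the data-dependent covering $\{v_i\}_{i\in[N_x]}$; purely architectural entries (the $0$'s and $1$'s that arise from identity blocks, zero paddings, broadcasting, and the hard-coded selector rows) are excluded. I would justify this by pointing out the standard interpretation that structural bookkeeping entries are fixed by the layer template and do not need to be learned, whereas entries like $v_j^\top$, $-\tfrac12\|v_j\|_2^2$, $\ln T(\tilde v_j)$, $\ln E(\tilde v_j)$, and the attention temperature $R$ genuinely encode information about $f$ and $\calX$.

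With the convention fixed, the plan is then to walk, matrix by matrix, through the construction produced in the preceding proof and tally only the $f$- or $\calX$-dependent slots. Concretely, I would:
\begin{enumerate}
\item In $\li$, identify that the nontrivial content consists of the $N_x$ vectors $v_j\in\R^{dn}$ used to form $X_0$; every other block is $I$ or $0$. This gives $dn\cdot N_x$ trainable entries.
\item In $W_K$, identify three kinds of non-structural content: the coefficients $a_j$ and offsets $-\tfrac{\|v_j\|_2^2}{2}$ (both functions of the $v_j$'s already counted, contributing $\mathcal{O}(N_x)$ scalar offsets), and the $\ln T(\tilde v_j)$, $\ln E(\tilde v_j)$ columns, which each contribute $\mathcal{O}(dN_x)$ $f$-dependent scalars.
\item In $W_Q$, identify the single scalar $R$ (the softmax temperature) as the only nontrivial slot; all other entries are $I$ or $0$.
\item In $W_V$, identify that the stacked $\pm I_d$ blocks are structural, and the $f$-dependence is already absorbed through $W_K$; this matrix contributes no extra trainable parameters.
\item In $W_O$, identify the scalar $dB_0$ (depending on $\|f\|_{L_\infty}$) as the sole nontrivial content, with everything else being identity/zero.
\end{enumerate}

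Summing yields $\mathcal{O}(dn N_x)$ from item (1), $\mathcal{O}(dN_x)$ from item (2), and $\mathcal{O}(1)$ from the remaining items, which collapses to the advertised $\mathcal{O}(dn N_x)$ bound. The only step that requires any care is item (2), where the log-indicator columns at first glance look like they might produce a $dN_x$ overhead per column block and thus threaten a $d^2 N_x$ scaling; the resolution is that $T(\tilde v_j),E(\tilde v_j)\in\R^{d\times n}$ contribute $dn$ entries each but are indexed over only $N_x$ grid centers, so the total is $\mathcal{O}(dn N_x)$, matching the $\li$ contribution and not dominating it. Thus the main (mild) obstacle is bookkeeping consistency: making sure the same $v_j$-dependent quantity is counted exactly once even though it appears in multiple matrix blocks, which is handled by listing each distinct data-dependent scalar a single time and observing that reuse in $W_K,W_V,W_O$ does not create new trainable degrees of freedom under the convention fixed by the Remark.
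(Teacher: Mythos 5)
Your proposal is correct and matches the paper's own treatment: the Remark is a definitional convention, and the paper likewise "proves" it only by applying it — walking through $\li$, $W_K$, $W_Q$, $W_V$, $W_O$ and tallying the $f$- and $\calX$-dependent entries ($dnN_x$ from $\li$, $2dN_x + 2dnN_x$ from $W_K,W_Q$, $n$ from $W_O$) to reach $\mathcal{O}(dnN_x)$. The only difference is bookkeeping style — you deduplicate repeated data-dependent scalars (e.g., the $d$-fold broadcast of $-\tfrac12\|v_j\|_2^2$ and the $n$ copies of $dB_0$) while the paper counts every varying entry — but both conventions yield the same $\mathcal{O}(dnN_x)$ bound.
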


We estimate the number of trainable parameters by each layer in the network.

First, we do the estimation for the $\li$ layer.
It consists of a sum over $N_x$ $v_j^\top \tilde{Z}, ~j\in [N_x]$, and thus contain $dn\cdot N_x$ trainable parameters.

Then we do the estimation for $W_K$ and $W_Q$.
We restate the construction of $W_K$ and $W_Q$:
\begin{align*}
        W_K := & ~ 
        \begin{bmatrix}
        R & {0}_{1\times d} & \cdots & {0}_{1\times d} & {0}_{1\times d} & \cdots & {0}_{1\times d} \\
        0 & -R\frac{\|v_0\|^2_2}{2}{1}_{1\times d} & \cdots & -R\frac{\|v_{N_x-1}\|^2_2}{2}{1}_{1\times d} & -R\frac{\|v_0\|^2_2}{2}{1}_{1\times d} & \cdots & -R\frac{\|v_{N_x-1}\|^2_2}{2}{1}_{1\times d} \\
        0 & \ln(T(\tilde{v}_0))^\top & \cdots & \ln(T(\tilde{v}_{N_x-1}))^\top & \ln(E(\tilde{v}_0))^\top & \cdots & \ln(E(\tilde{v}_{N_x-1}))^\top
        \end{bmatrix},\\
        W_Q := & ~
        \begin{bmatrix}
            0 & R{1}_{1\times n} & {0}_{1\times (2dN_x-n)}\\
            0 & R{1}_{1\times n} & {0}_{1\times (2dN_x-n)}\\
            {0}_n & I_n & {0}_{n\times (2dN_x-n)}
        \end{bmatrix}.
\end{align*}

From this, we observe they combined together have $2d\cdot N_x + 2dn\cdot N_x$ trainable parameters.

Finally, For $W_V$ and $W_O$, we restate their definition:
\begin{align*}
        W_V:= & ~ 
        \begin{bmatrix}
            {0}_d & X_1 & -X_1
        \end{bmatrix},\\
        W_O := & ~ 
        \begin{bmatrix}
            dB_0I_n\\
            {0}_{(2dG-n)\times n}
        \end{bmatrix},
\end{align*}
where
\begin{align*}
        X_1 :=
        \begin{bmatrix}
            I_d & I_d & \cdots & I_d
        \end{bmatrix}_{d\times dG}.
\end{align*}
$W_O$ contains $n$ trainable parameters ($dB_0$).

In conclusion, the whole network contains a total of 
\begin{align*}
    dnN_x + 2dN_x + 2dnN_x+ n = 4dnN_x + 2dN_x +n,
\end{align*}
trainable parameters, which is of $\mathcal{O}(dnN_x)$ level.

This completes the proof.
\end{proof}

\clearpage
\section{Additional Experimental Results}

In this section, we present additional experimental results to support our theoretical results.

\subsection{Numerical Justifications for Theoretical Results in \texorpdfstring{\cref{sec:method}}{}}

To validate our results in \cref{prop:indicator}, we conducted the following experiment to examine whether the max-affine function generated within the attention of the form in \cref{prop:indicator} can learn to separate the input domain according to the values of the target function.

Specifically, we use attention to approximate a step function and observe the max-affine function generated by the weights in $K$ and $Q$ matrices in the attention score matrix. The result of this experiment is shown in \cref{fig:b_two_graphs}.

\begin{figure*}[!ht]
    \centering
    \begin{minipage}{0.45\textwidth}
        \centering
        \includegraphics[width=\linewidth]{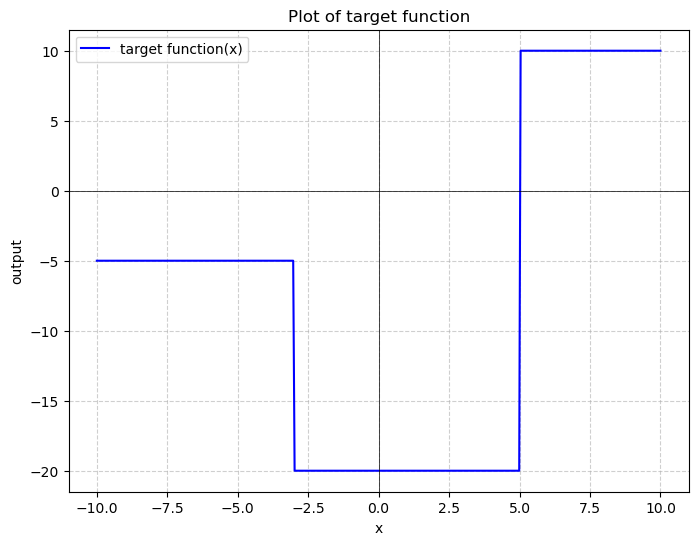}
        \label{fig:graphb_1}
    \end{minipage}%
    \begin{minipage}{0.45\textwidth}
        \centering
        \includegraphics[width=\linewidth]{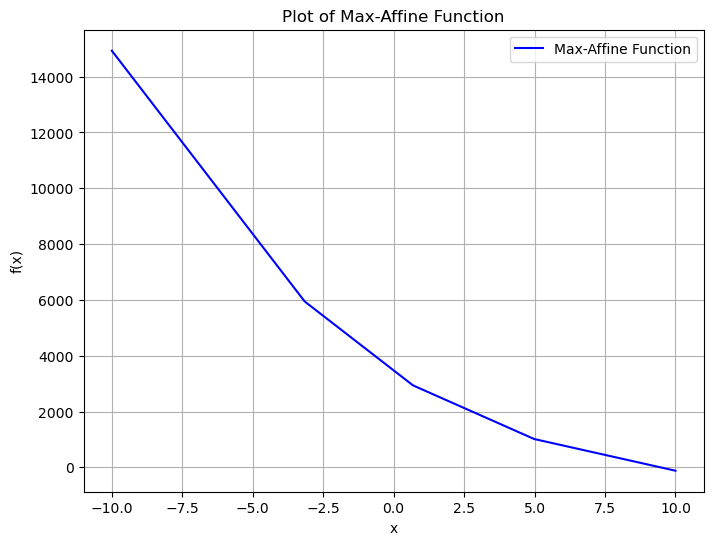}
        \label{fig:graphb_2}
    \end{minipage}
    \vspace{-2em}
    \caption{Result of using a single-head attention to approximate a step function. 
    The max-affine function generated in the attention score matrix turns at points close to the switching points in the step function. }
    \label{fig:b_two_graphs}
\end{figure*}

The max-affine function generated in the attention score matrix turns at points close to the switching points in the step function. This generates a partition in the input domain that resembles the distribution of the flat parts in the step function. This result aligns with our theory.

\clearpage
\def\arxivfont{\rm}
\bibliographystyle{plainnat}

\bibliography{references}

\end{document}